\newcommand{\cmark}{\ding{51}}
\newcommand{\xmark}{\ding{55}}
\newtheorem{lemma}{Lemma}
\newtheorem{theorem}{Theorem}
\newtheorem{definition}{Definition}
\definecolor{mycolor1}{rgb}{0.82,0.70,0.54}
\definecolor{mycolor2}{rgb}{0.0,0.51,0.22}
\definecolor{mycolor3}{rgb}{0.80, 0.48, 0.37}
\definecolor{mycolor4}{rgb}{0.02, 0.33, 0.68}
\definecolor{mycolor5}{rgb}{0.86, 0.11, 0.11}
\title{The Best of Both Worlds: On the Dilemma of Out-of-distribution Detection}
\author{%
  Qingyang~Zhang\thanks{Work done during an internship at Tencent AI Lab.}\\
  Tianjin University\\
  \And
  Qiuxuan~Feng \\
  Tianjin University\\
  \And
  Joey Tianyi Zhou \\
  A*STAR\\
  \And
  Yatao Bian\\Tencent AI Lab\\
  \And
  Qinghua Hu\\Tianjin University\\
  \And
  Changqing Zhang\thanks{Correspondence to Changqing Zhang <zhangchangqing@tju.edu.cn>} \\Tianjin University\\
}
\begin{document}

\maketitle

\begin{abstract}
Out-of-distribution (OOD) detection is essential for model trustworthiness which aims to sensitively identify semantic OOD samples and robustly generalize for covariate-shifted OOD samples. However, we discover that the superior OOD detection performance of state-of-the-art methods is achieved by secretly sacrificing the OOD generalization ability. Specifically, the classification accuracy of these models could deteriorate dramatically when they encounter even minor noise. This phenomenon contradicts the goal of model trustworthiness and severely restricts their applicability in real-world scenarios. What is the hidden reason behind such a limitation? In this work, we theoretically demystify the ``\textit{sensitive-robust}'' dilemma that lies in many existing OOD detection methods. Consequently, a theory-inspired algorithm is induced to overcome such a dilemma. By decoupling the uncertainty learning objective from a Bayesian perspective, the conflict between OOD detection and OOD generalization is naturally harmonized and a dual-optimal performance could be expected. Empirical studies show that our method achieves superior performance on standard benchmarks. To our best knowledge, this work is the first principled OOD detection method that achieves state-of-the-art OOD detection performance without compromising OOD generalization ability. 
Our code is available at \href{https://github.com/QingyangZhang/DUL}{https://github.com/QingyangZhang/DUL}.
\end{abstract}

\section{Introduction}
Endowing machine learning models with out-of-distribution (OOD) detection and OOD generalization ability are both essential for their deployment in the open world~\cite{park2021reliable,amodei2016concrete,liu2021towards}. We borrow an example of autonomous driving from~\cite{bai2023feed} to demonstrate the motivation of these two tasks. Given a machine learning model trained on in-distribution (ID) data (top image in Fig.~\ref{fig:coverfigure} (a)), OOD detection aims to sensitively perceive uncertainty arising upon outliers that do not belong to any known classes of training data~\cite{zhang2023openood} (bottom right image in Fig.~\ref{fig:coverfigure} (a)). While OOD generalization expects machine learning models to be robust in the presence of unexpected noise or corruption, e.g., rainy or snowy weather (bottom left image in Fig.~\ref{fig:coverfigure} (a)). In this paper, we reveal that many previous methods pursue OOD detection performance at a secret cost of sacrificing OOD generalization ability. 
To make things worse, we observe that some SOTA OOD detection methods may result in a catastrophic collapse in classification performance ($\sim$15\% accuracy degradation) when encountering even slight noise. One pioneering work~\cite{bai2023feed} makes a trade-off between OOD detection and OOD generalization, but the relationship between these two tasks is still largely unexplored.
\begin{figure}[htbp]
    \centering
    \includegraphics[width=0.99\textwidth]{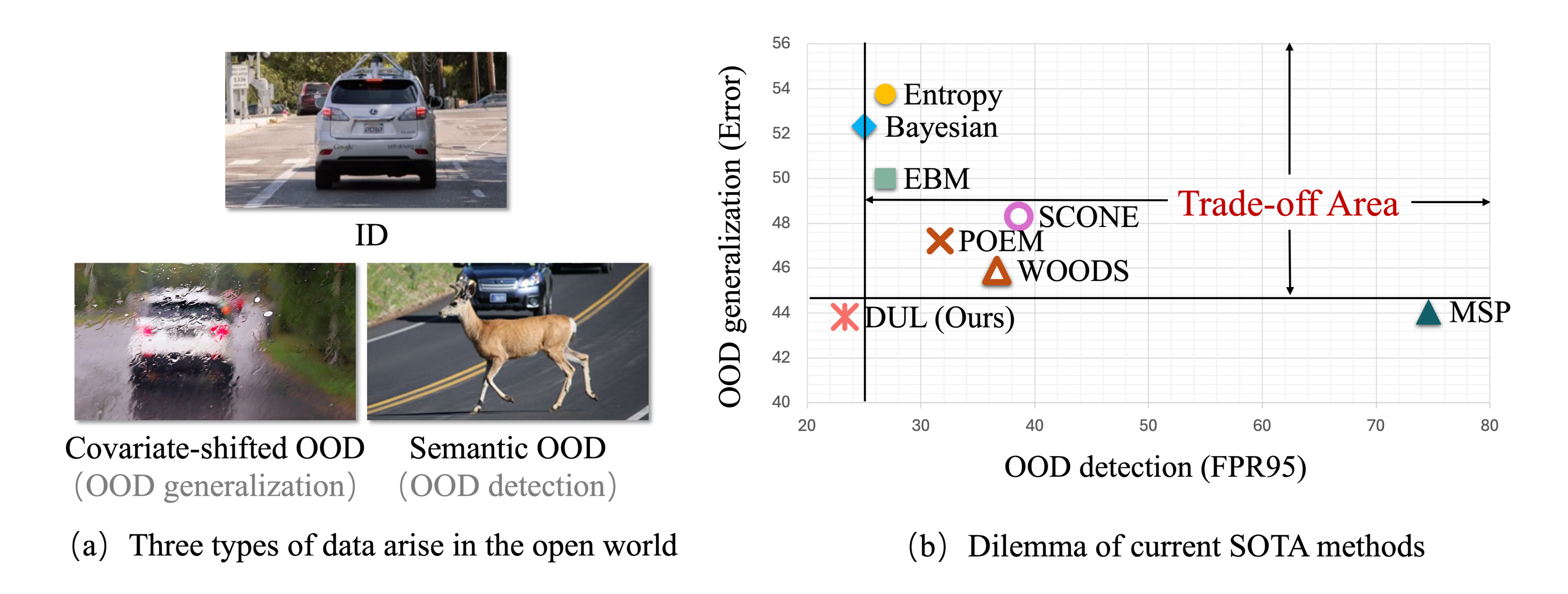}
    \vspace{-6pt}
    \caption{\textbf{(a)}: Models trained on in-distribution (ID) data inevitably encounter distributional shifts during their deployment. OOD generalization expects the model to correctly classify covariate-shifted data that undergoes noise or corruption due to environmental issues. OOD detection aims to identify samples that do not belong to any known classes for trustworthiness consideration. \textbf{(b)}: Limitations of current advanced OOD detection methods. We consider 8 representative OOD detection methods including the baseline method MSP~\cite{hendrycks2016baseline} (without any OOD detection regularization), Entropy~\cite{hendrycks2018deep}, EBM~\cite{liu2020energy}, Bayesian~\cite{malinin2018predictive}, SOTA OOD detection methods WOODS~\cite{katz2022training}, POEM~\cite{ming2022poem}, recent advanced SCONE~\cite{bai2023feed} which aims to seek for a good trade-off and the proposed DUL. All these methods exhibit a degraded generalization ability compared to baseline method MSP and lie in a trade-off area except our DUL. The goal of this paper is to understand and mitigate this phenomenon.}
    \label{fig:coverfigure}
    \vspace{-0.60cm}
\end{figure}
The learning objectives of these two tasks are seemingly conflicting at first glance. OOD detection encourages sensitive uncertainty awareness (highly uncertain prediction) on unseen data, while generalization expects the prediction to be confident and robust under unforeseeable distributional shifts. Previous work in OOD detection research area~\cite{bai2023feed} characterizes the relationship between OOD detection and OOD generalization as a trade-off and thus striking for a balanced performance. However, this trade-off significantly limits the employment of current state-of-the-art OOD detection methods. Naturally, one might require the model to be aware of the OOD input for ensuring safety, but certainly does not expect to sacrifice the generalization ability, not to mention that the catastrophically collapsed classification performance under noise or corruption.

In this work, we first uncover the potential reason behind this limitation by characterizing the generalization error lower bound of previous OOD detection methods, which is referred to \textit{sensitive-robust dilemma}. To overcome the dilemma, we devise a novel Decoupled Uncertainty Learning (DUL) framework for dual-optimal performance. The decoupled uncertainties are separately responsible for characterizing semantic OOD (detection) and covariate-shifted OOD (generalization). Thanks to the decoupled uncertainty learning objective, dual-optimal OOD detection and OOD generalization performance could be expected. Our emphasis lies on a particular category of OOD detection methods in the classification task, including max softmax probability (MSP) based model~\cite{hendrycks2016baseline}, energy-based model (EBM)~\cite{liu2020energy} and Bayesian methods~\cite{malinin2018predictive}. This selection offers two-fold advantages. First, MSP, EBM and Bayesian detectors encompass major OOD detection advances in classification task~\cite{zhang2023openood}. Second, numerous OOD detection works in diverse learning tasks (classification, object detection~\cite{du2022vos}, time-series prediction and image segmentation) are all roughly related to classification~\cite{yang2021generalized}. The contributions of this paper are summarized as follows:
\vspace{-0.1cm}
\begin{itemize}
\item This paper reveals that existing SOTA OOD detection methods may suffer from catastrophic degradation in terms of OOD generalization. That is, their superior OOD detection ability is achieved by (secretly) sacrificing OOD generalization ability. We theoretically demystify the sensitive-robust dilemma in learning objectives as the main reason behind such a limitation.
\item In contrast to previous works that characterize OOD detection and generalization as conflictive learning tasks and thus implying an inevitable trade-off, we propose a novel learning framework termed Decoupled Uncertainty Learning (DUL) to successfully break through the limitation beyond a simple trade-off. Our DUL substantially harmonizes the conflict between OOD detection and OOD generalization, which achieves the best OOD detection performance without sacrificing the OOD generalization ability.
\item We conduct extensive experiments on standard benchmarks to validate our findings. Our DUL achieves dual-optimal OOD detection and OOD generalization performance. To our best knowledge, DUL is the first method that gains state-of-the-art OOD detection performance without sacrificing OOD generalization ability.
\end{itemize}

\section{Related works}

\textbf{OOD detection} aims to indicate whether the input arises from unknown classes that are not present in training data, which is essential for model trustworthiness. In the classification task, the majority of advanced OOD detection methods include MSP detectors which characterize samples with lower max softmax probability as OOD~\cite{hendrycks2016baseline, liang2017enhancing, lee2017training, hendrycks2018deep, hendrycks2019using}. EBM detectors identify high energy samples as OOD and frequently establish better performance than MSP detectors~\cite{liu2020energy,ming2022poem,chen2021atom,bai2023feed}, and various other types OOD detection methods such as distance-based detectors~\cite{lee2018simple}, non-parametric KNN-based detectors~\cite{sun2022out} which also show promises. According to the training paradigm, OOD detection methods can be split into auxiliary OOD-free and auxiliary OOD-required methods. Auxiliary OOD-free methods directly use the model pre-trained on ID data only for OOD detection. Another line of methods assumes that some OOD data is accessible during training and incorporates auxiliary outlier datasets (collected from websites or other datasets) for further enhancing OOD detection performance. By exposing the model to some semantic OOD during training, auxiliary OOD-required methods frequently outperform auxiliary OOD-free methods on commonly-used benchmarks~\cite{yang2022openood,zhang2023openood}.

\textbf{OOD generalization} expects the model to be robust under unforeseeable noise or corruption~\cite{arjovsky2019invariant,hu2013kullback,michel2021modeling,sinha2017certifying,han2022umix}. Basically, OOD generalization expects invariant and confident prediction on OOD data. Examples include classic domain adaption (DA) methods which encourage the model's behavior to be invariant across different distributions~\cite{arjovsky2019invariant,zhao2019learning,zhang2019bridging}. Besides, test-time adaption (TTA) directly encourages confident predictions on OOD data by minimizing predictive entropy~\cite{wang2020tent,zhang2022memo,niu2023towards}. However, as we will show later, confident prediction and invariance are seemingly conflictive to OOD detection purpose and further imply an unavoidable trade-off. The most related work to our paper is SCONE~\cite{bai2023feed}, which strikes to keep a balance between OOD detection and generalization performance. We argue that such a trade-off is not necessary and the conflict can be elegantly eliminated.

\textbf{Uncertainty estimation in Bayesian framework.} In the Bayesian framework, predictive uncertainty can be regarded as an indicator of whether the input sample is prone to be OOD. Since OOD samples are unseen during training and thus should be of higher uncertainty than ID. The overall predictive uncertainty of a classification model can be decomposed into three factors according to their source, including data (aleatoric) uncertainty (AU), distributional uncertainty (DU), and model (epistemic) uncertainty (EU)~\cite{kendall2017uncertainties,malinin2018predictive}. AU measures the natural complexity of the data (e.g., class overlap, label noise) and EU results from the difficulty of estimating the model parameters with finite training data. DU arises due to the mismatch between the distributions of test and training data. A line of classic measurement can be used to capture various types of uncertainty including entropy, mutual information, and differential entropy~\cite{malinin2018predictive}.

\section{Preliminaries \label{sec:preliminaries}}
We consider $K$-class classification task with classifier $f:\mathcal{X}\rightarrow\mathbb{R}^K$ parameterized by $\theta$, where $\mathcal{X}$ is the input space and $\mathcal{Y}=\{1,2,...,K\}$ denotes the target space. The model output $f_{\theta}(x)$ is considered as logits. The $k$-th element in logits is denoted as $f_{k}(x)$ indicates the confidence of predicting $x$ to class $k$. The predicted distribution $F(x)$ is obtained by normalizing $f(x)$ with the softmax function.
We first formalize all possible distributions that the model might encounter.

\begin{itemize}
\item In-distribution $P^{\rm ID}_{\mathcal{XY}}$ which denotes the distribution of labeled training data.

\item Covariate-shifted OOD $P^{\rm COV}_{\mathcal{XY}}$ which is relevant to OOD generalization. $P^{\rm COV}_{\mathcal{XY}}$ is of the same label space with ID. However, its marginal distribution $P^{\rm COV}_{\mathcal{X}}$ encounters shifts due to unexpected noise or corruption.

\item Semantic OOD $P^{\rm SEM}_{\mathcal{XY'}}$ is the distribution of data that do not belong to any known class. Its label space has no overlap with the known ID label space, i.e., $\mathcal{Y'}\cap\mathcal{Y}=\emptyset$. 
\end{itemize}

In the following paper, we omit the subscript for simplicity. The goal of OOD detection is to build a detector $G:\mathcal{X}\rightarrow [\rm IN, OUT]$ to decide whether an input $x$ is semantic OOD data or not through a thresholding function $G$ deduced from classifier $f$
\begin{equation}
    G_{\gamma}(x)=\left\{
\begin{aligned}
&{\rm IN}\ &g_{f}(x)\leq \gamma\\
&{\rm OUT}\ &g_{f}(x)>\gamma
\end{aligned}
\right.\ ,
\end{equation}
where $\gamma$ is the threshold. $g_f$ is an OOD scoring function deduced from $f$, which is expected to assign a higher value to OOD than ID. For example, in MSP detectors, $g_f(x)=-{\rm max}_k\ F(x)$ where $F(x)$ is the predicted softmax probability (negative max softmax probability). In EBM detectors, $g_f$ is realized by the energy function $E(x;f):=-\log\sum_{i=1}^K e^{f_k(x)}$ and the semantic input of OOD should be of high energy~\cite{liu2020energy}. Since it is difficult to foresee $P^{\rm SEM}$ one will encounter, a board line of OOD detection works~\cite{hendrycks2018deep,liu2020energy,malinin2018predictive,ming2022poem,chen2021atom,du2022vos,wang2024learning,choi2023balanced} regularize the model on some auxiliary OOD data $P^{\rm SEM}_{\rm train}$ during training (e.g., data from the web or other datasets), and expect the model can learn useful heuristic to handle unknown test-time OOD $P^{\rm SEM}_{\rm test}$. The learning objective is shown as follows
 \begin{equation}
 \label{eq:ood-target-1}
     \underset{\theta}{\rm min}\ \mathbb{
     E}_{(x,y)\sim P^{\rm ID}}[\mathcal{L}_{\rm CE}(f(x),y)]+\lambda \mathbb{E}_{\tilde{x}\sim P^{\rm SEM}_{\rm train}}[\mathcal{L}_{\rm reg}f(\tilde{x})],
 \end{equation}
where $\mathcal{L}_{\rm CE}$ is the standard cross entropy loss for the original classification task. $\mathcal{L}_{\rm reg}$ is the OOD detection regularization term depending on the detector used, which generally encourages a high uncertainty on $P^{\rm SEM}_{\rm train}$. For example, $\mathcal{L}_{\rm reg}$ is set to cross entropy between $F(x)$ and the uniform distribution for MSP detector~\cite{hendrycks2018deep}. In EBM detectors~\cite{liu2020energy}, $\mathcal{L}_{\rm reg}$ is realized as a margin ranking loss to explicitly encourage a large energy gap between ID and semantic OOD. In this paper, we are interested in this setting for the following reasons: 
\begin{itemize}
\item[1)]  In contrast to labeled data in supervised learning literature, auxiliary OOD data can be unlabeled and easy-to-collected in practice~\cite{ming2022poem}.

\item[2)]  Most SOTA methods involve auxiliary outliers~\cite{zhang2023openood,yang2022openood} for superior performance. 

\item[3)]  Even under some strict assumptions that $P^{\rm SEM}_{\rm train}$ is unavailable, recent works utilize GAN~\cite{lee2017training}, diffusion model~\cite{du2024dream} or sampling strategy~\cite{du2022vos} to generate ``virtual" outliers for training. 
\end{itemize}
Thus we believe this setting is promising and the cost of auxiliary outliers is minor given the importance of ensuring model trustworthiness. At test-time, the model is evaluated in terms of 
\begin{itemize}
\item ID accuracy (ID-Acc $\uparrow$) which measures the model's performance on $P^{\rm ID}$,
\item OOD accuracy (OOD-Acc $\uparrow$) measures the OOD generalization ability on $P^{\rm COV}$, 
\item False positive rate at 95\% true positive rate (FPR95$\downarrow$) := $\mathbb{E}_{x\sim P^{\rm SEM}_{\rm test}}(\mathbb{I}(G_{\gamma}(x)={\rm IN}))$ measures the OOD detection ability, where $\gamma$ is chosen when true positive rate (TPR) is $95\%$. $\mathbb{I}$ is the indicator function. In OOD detection, ID samples are considered as negative.
\end{itemize}
It is worth noting that in the standard OOD detection setting~\cite{ming2022poem,bai2023feed}, the test OOD data should not have any overlapped classes or samples with training-time auxiliary OOD data $P^{\rm SEM}_{\rm train}$. Let $\mathcal{Y}^{\rm SEM}_{\rm test}$ and $\mathcal{Y}^{\rm SEM}_{\rm train}$ be the label space of $P^{\rm SEM}_{\rm test}$ and $P^{\rm SEM}_{\rm train}$ respectively, we have $\mathcal{Y}^{\rm SEM}_{\rm test}\cap \mathcal{Y}^{\rm SEM}_{\rm train}=\emptyset$. Otherwise, OOD detection would be a trivial problem.

\section{Sensitive-robust Dilemma of Out-of-distribution Detection}\label{sec:theory}
In this section, we detail the limitation of current OOD detection methods: their OOD detection performance is achieved at the cost of generalization ability. This limitation implies the potential risk of SOTA OOD detection methods and underscores the urgent need for a better solution. Firstly, we re-examine representative OOD detection methods of six different types, including 1) baseline model MSP that is trained without any OOD detection regularization~\cite{hendrycks2016baseline}, 2) entropy-regularization (Entropy) that encourages high predictive entropy on OOD~\cite{hendrycks2018deep}, which is devised for MSP detectors, 3) energy-regularization for EBM detectors that enforces the output with high energy score for OOD input~\cite{liu2020energy}, 4) Bayesian uncertainty learning that encourages high overall uncertainty on OOD~\cite{malinin2018predictive}, 5) state-of-the-art OOD detection methods WOODS~\cite{katz2022training} and POEM~\cite{ming2022poem} 6) the most related SCONE~\cite{bai2023feed} that seeks for a trade-off between OOD detection and generalization performance.

\textbf{Limitation of current OOD detection methods.} In Fig.~\ref{fig:coverfigure} (b), we investigate current OOD detection methods in terms of OOD classification error and FPR95. The expected classifier should yield both low OOD classification error and FPR95. As it is observed, despite the superior OOD detection performance, all above methods significantly underperform the baseline MSP in terms of OOD generalization. By contrast, our method (DUL) successfully overcomes the limitation.

\textbf{Theoretical justification.} Toward understanding the limitation, we provide theoretical analysis for two types of most popular OOD detection methods, i.e., MSP and EBM detectors. Our analysis identifies the ``\textit{sensitive-robust}'' dilemma as the main reason behind such a limitation. The roadmap of our analysis is: (1) inspired by transfer learning theory, we first reveal that OOD detection regularization applied on semantic OOD may also affect the behavior of model on covariate-shifted OOD; (2) then we demonstrate why MSP detectors suffer from poor generalization by characterizing its generalization error bound; (3) we further identify that EBM methods~\cite{liu2020energy} suffer from a similar drawback when incorporating with gradient-based optimization. First of all, we recap the definition of disparity discrepancy in transfer learning theory~\cite{ganin2015unsupervised, zhang2019bridging}.
\begin{definition}[Disparity with Total Variation Distance] Given two hypotheses $f',f\in\mathcal{F}$ and distribution $P$, we define the Disparity with Total Variation Distance between them as
\begin{equation}
\label{eq:DD with TVD}
    {\rm disp}_P(f',f)=\mathbb{E}_P[TV(F_f||F_{f'})],
\end{equation}
where $F_{f'},F_{f}$ are the class distributions predicted by $f',f$ respectively. $TV(\cdot||\cdot)$ is the total variation distance, i.e., $TV(F_f||F_{f'})=\frac{1}{2}\sum_{k=1}^K||F_{f,k}-F_{f',k}||$. 
\end{definition}
\begin{definition}[Disparity Discrepancy with Total Variation Distance, DD with TVD] Given a hypothesis space $\mathcal{F}$ and two distributions $P,Q$, the Disparity Discrepancy with Total Variation Distance (DD with TVD) is defined as
\begin{equation}
\label{eq:DD}
    d_{\mathcal{F}}(P,Q):=\underset{f',f\in\mathcal{F}}{\rm sup}({\rm disp}_{P}(f',f)-{\rm disp}_{Q}(f',f)).
\end{equation}
\end{definition}
Disparity discrepancy (DD) measures the "distance" between two distributions $P,Q$ which considers the hypothesis space. DD is one of the most fundamental conceptions in transfer learning theory which constrains the behavior of hypothesis in $\mathcal{F}$ should not be dissimilar substantially on different distributions $P$ and $Q$. \footnote{Encompassed by \cite{mansour2008domain} as a special case, our definition is realized using TVD for theoretical convenience.} If the DD between semantic OOD and covariate-shifted OOD is limited, one can suppose that OOD detection regularization applied to semantic OOD samples will also influence the model's behavior on covariate-shifted OOD. Thus encouraging high uncertainty on semantic OOD may also result in highly uncertain prediction on covariate-shifted OOD, which is potentially harmful to generalization ability. We first formalize this intuition for MSP detectors.
\begin{theorem}[Sensitive-robust dilemma]
\label{Theorem 1} Let $\mathcal{P^{\rm COV}}$, $P^{\rm SEM}_{\rm test}$ be the covariate-shifted OOD and semantic OOD distribution. ${\rm GError}_{P^{\rm COV}}(f)$ denotes standard cross entropy loss taking expectation on $P^{\rm COV}$, i.e., generalization error. Then we have
\begin{equation}
\underbrace{{\rm GError}_{P^{\rm COV}}(f)}_{\rm OOD\ generalization\ error} \geq \ C-\sqrt{2}\ \mathbb{E}_{P^{\rm SEM}_{\rm test}}[\underbrace{\mathcal{L}_{\rm reg}(f)}_{\rm OOD\ detection\ loss}-\log K]^{\frac{1}{2}}-2d_{\mathcal{F}}(\mathcal{P^{\rm COV}},\mathcal{P^{\rm SEM}_{\rm test}}),
\end{equation}
where $\mathcal{L}_{\rm reg}$ is the OOD detection loss devised for MSP detectors defined in~\cite{hendrycks2018deep}, i.e., cross-entropy between predicted distribution $F(x)$ and uniform distribution. $d_{\mathcal{F}}(P^{\rm COV},P^{\rm SEM}_{\rm test})$ is DD with TVD that measures the dissimilarity of covariate-shifted OOD and semantic OOD. $C$ is some constant depending on hypothesis space $\mathcal{F}$, $P^{\rm COV}$ and $P^{\rm SEM}_{\rm test}$.
\end{theorem}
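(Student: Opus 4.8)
The plan is to lower-bound the cross-entropy generalization error on $P^{\rm COV}$ by a quantity measuring how close the model's covariate-shifted predictions are to the uniform law $U$, and then transport that closeness from the semantic-OOD domain (where $\mathcal{L}_{\rm reg}$ forces it) to the covariate-shifted domain via the disparity discrepancy. Write $F=F_f$. Since ${\rm GError}_{P^{\rm COV}}(f)=\mathbb{E}_{(x,y)\sim P^{\rm COV}}[-\log F_y(x)]$, I would first use $-\log t\ge 1-t$ together with $|F_y(x)-\tfrac1K|\le TV(F(x)\|U)$ (the total variation distance dominates the gap on any event, in particular the singleton $\{y\}$), obtaining
\[
{\rm GError}_{P^{\rm COV}}(f)\ \ge\ 1-\tfrac1K-\mathbb{E}_{x\sim P^{\rm COV}}\big[TV(F(x)\|U)\big].
\]
So it suffices to upper-bound $\mathbb{E}_{P^{\rm COV}}[TV(F(x)\|U)]$. (A Jensen/$-\log$-convexity route giving ${\rm GError}\ge-\log(\text{soft accuracy})$ also works but yields messier constants.)

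Next I would invoke the disparity discrepancy. Let $f_U\in\mathcal{F}$ be the hypothesis whose softmax output is uniform (a constant-logit network; or its best in-class approximation). Then ${\rm disp}_P(f_U,f)=\mathbb{E}_P[TV(F(x)\|U)]$, so by the definition of $d_{\mathcal{F}}$,
\[
\mathbb{E}_{P^{\rm COV}}[TV(F(x)\|U)]\ \le\ \mathbb{E}_{P^{\rm SEM}_{\rm test}}[TV(F(x)\|U)]\ +\ d_{\mathcal{F}}(P^{\rm COV},P^{\rm SEM}_{\rm test}).
\]
If $f_U\notin\mathcal{F}$, the triangle inequality for $TV$ introduces an in-class approximation term (an analogue of the "ideal joint hypothesis" term in domain-adaptation bounds), which is precisely what makes the constant $C$ depend on $\mathcal{F}$, $P^{\rm COV}$ and $P^{\rm SEM}_{\rm test}$.

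Then I would connect the semantic-OOD term to the detection loss through the identity $\mathcal{L}_{\rm reg}(f)(x)-\log K={\rm KL}(U\|F(x))$: indeed $\mathcal{L}_{\rm reg}(f)(x)=-\tfrac1K\sum_k\log F_k(x)$ is the cross-entropy of $U$ against $F(x)$ and the entropy of $U$ is $\log K$. Pinsker's inequality gives $TV(F(x)\|U)\le\sqrt{\tfrac12(\mathcal{L}_{\rm reg}(f)(x)-\log K)}$, and concavity of $\sqrt{\cdot}$ yields $\mathbb{E}_{P^{\rm SEM}_{\rm test}}[TV(F(x)\|U)]\le\tfrac{1}{\sqrt2}\,\mathbb{E}_{P^{\rm SEM}_{\rm test}}[\mathcal{L}_{\rm reg}(f)-\log K]^{1/2}$. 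Chaining the three displays, using $\tfrac1{\sqrt2}\le\sqrt2$ and $d_{\mathcal{F}}\le 2d_{\mathcal{F}}$ (note $d_{\mathcal{F}}\ge0$, taking $f'=f$ in the supremum) to reach the stated looser constants, and setting $C=1-\tfrac1K$ plus the approximation term from the previous step, delivers the claimed bound.

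The main obstacle is the disparity-discrepancy step: $d_{\mathcal{F}}$ is a supremum over \emph{pairs} of hypotheses in $\mathcal{F}$, whereas the quantity I must bound compares $f$ to the fixed external reference $U$, which need not lie in $\mathcal{F}$. Making this rigorous needs either a mild realizability assumption (a uniform-output network is typically available) or a careful triangle-inequality argument that pushes the residual into $C$; one must also be scrupulous that the direction of the cross-entropy defining $\mathcal{L}_{\rm reg}$ matches the direction of KL used in Pinsker's inequality, so that the identity in the third step is exact rather than merely approximate.
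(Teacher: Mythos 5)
Your proposal is correct and follows essentially the same architecture as the paper's proof: reduce the generalization error to the quantity $\mathbb{E}_{P^{\rm COV}}[TV(F_f\|\mathcal{U})]$, transfer it to $P^{\rm SEM}_{\rm test}$ via the disparity discrepancy with the in-class approximation residual absorbed into $C$ (the paper's Lemma~1, using an ideal hypothesis $f^*$ and the TV triangle inequality, exactly as in your fallback), and then relate $\mathbb{E}_{P^{\rm SEM}_{\rm test}}[TV(F_f\|\mathcal{U})]$ to $\mathcal{L}_{\rm reg}-\log K$ through the identity $\mathcal{L}_{\rm reg}=\log K+{\rm KL}(\mathcal{U}\|F_f)$ and Pinsker's inequality (the paper's Lemma~2). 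The only difference is the first step, where you use $-\log t\ge 1-t$ together with the singleton-event TV bound while the paper lower-bounds ${\rm KL}(p(y|x)\|F_f)$ by $2TV(p(y|x)\|F_f)-1$ and applies the TV triangle inequality against the ground-truth distribution; both yield the required bound up to the unspecified constant $C$.
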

The proof is deferred in Appendix~\ref{appendix:proof}. Theorem~\ref{Theorem 1} demonstrates that for MSP detectors, the OOD detection objective conflicts with OOD generalization. The model's generalization error lower bound is negatively correlated with OOD detection loss that the model tries to minimize. Thus given a limited $d_{\mathcal{F}}$, pursuing low OOD detection loss on $P^{\rm SEM}_{\rm test}$ will also inevitably result in highly uncertain prediction on $P^{\rm COV}$. It is worth noting that such an interpretative theorem is applicable for all MSP-based OOD detectors no matter whether the model involves $P^{\rm SEM}_{\rm train}$ during training or not. Since the inherent motivation of OOD detection methods lies in minimizing the OOD detection loss in $P^{\rm SEM}_{\rm test}$, regardless of the training strategies used.

\textbf{Why a limited $d_{\mathcal{F}}(P^{\rm COV},P^{\rm SEM}_{\rm test})$ is practical?} In Theorem~\ref{Theorem 1}, $d_{\mathcal{F}}(P^{\rm COV},P^{\rm SEM}_{\rm test})$ measures the dissimilarity between $P^{\rm COV}$ and $P^{\rm SEM}_{\rm test}$. It seems that this lower bound will be very small and trivial when $d_{\mathcal{F}}(P^{\rm COV},P^{\rm SEM}_{\rm test})$ is large enough. However, since the semantic OOD samples can be any samples that do not belong to ID classes, one can suppose that semantic OOD samples are extremely diverse and some are of high similarity with ID and covariate-shifted OOD~\cite{yang2023imagenet}. Detecting these ``ID-like'' OOD samples is inherently the core challenge of OOD detection~\cite{ming2022poem,chen2021atom,bai2023id}. Thus, it is reasonable to assume a limited $d_{\mathcal{F}}(P^{\rm COV},P^{\rm SEM}_{\rm test})$. We provide more discussions in the Appendix~\ref{appendix:discussion}.

As presented before, the key limitation of MSP detectors is that they enforce high-entropy prediction on semantic OOD. We proceed to reveal that EBM detectors suffer from similar issues due to the natural property of gradient-based optimization. For EBM detectors, $\mathcal{L}_{\rm reg}$ is defined by~\cite{liu2020energy} is
\begin{equation}
    \mathcal{L}_{\rm reg}=\mathbb{E}_{\tilde{x}\sim{P}^{\rm SEM}_{\rm train}}[{\rm max}(m_{\rm OUT}-E(\tilde{x}),0)]^2+\mathbb{E}_{x\sim P^{\rm ID}}[{\rm max}(0,E(x)-m_{\rm IN})]^2,
\end{equation}
which constrains the energy score $E(x;f):=-log\sum_{k=1}^K e^{f_k(x)}$ of ID sample $x$ to be lower than that of OOD sample $\tilde{x}$. $m_{\rm IN}, m_{\rm OUT}$ are manually selected margins. Although such regularization does not indicate high entropy prediction at first glance, unfortunately, we demonstrate that EBM detectors also tend to uncertain prediction when equipped with gradient-based optimization. Here we focus on Gradient Descent (GD) as a showcase. In each training epoch $t$, model $f_{\theta}$ is updated with GD as $\theta_{t+1}=\theta_{t}-\eta\nabla_{\theta}\mathcal{L}_{\rm reg}$, where $\eta$ is the learning rate. For any sample $\tilde{x}$ drawn from $\mathcal{P^{\rm SEM}}$, the gradient $\nabla_{\theta}\mathcal{L}_{\rm reg}$ can be written as
\begin{equation}
\begin{split}
2\sum_{k=1}^K \nabla f_k(\tilde{x})\underbrace{[e^{f_k(\tilde{x})}(\sum_{k=1}^K e^{f_k(\tilde{x})})^{-1}]}_{{\rm higher\ for\ larger}\ f_k(\tilde{x})}(m_{\rm OUT}-E(\tilde{x})),
\end{split}
\end{equation}
where $m_{\rm OUT}-E(\tilde{x})>0$ (otherwise the gradient is zero) and $f_k(\tilde{x})$ is the predicted logits on the $k$-th class. For sample $\tilde{x}$, when class $k$ has larger predicted logit, it contributes more to the overall gradient $\nabla_{\theta}\mathcal{L}_{\rm reg}$ and thus could obtain more optimization efforts during backpropagation. Eventually, one can infer that when an EBM detector is about to converge, it tends to high-entropy prediction on $P^{\rm SEM}_{\rm train}$ accordingly. Incorporating this into the established Theorem~\ref{Theorem 1}, this is likely to harm the generalization ability of $P^{\rm COV}$. Empirical evidence can support this supposition (see Table~\ref{tab:entropy} in Appendix~\ref{appendix:full results}). Therefore both MSP and EBM detectors face the ``\textit{sensitive-robust}'' dilemma.

\section{Decoupled Uncertainty Learning}\label{sec:method}
We demonstrate how to handle the dilemma between OOD detection and generalization by decoupled uncertainty learning in the Bayesian framework. Unlike the most related work~\cite{bai2023feed} which aims to seek a good trade-off, our method successfully gets out of the aforementioned sensitive-robust dilemma.

\textbf{Uncertainty Estimation in Bayesian Framework.} We first revisit the theoretical properties of different types of uncertainty in a Bayesian framework. Non-Bayesian classifiers consider the model's output $f(x)$ as logits, which is then normalized with softmax to directly model predictive categoricals $p(\hat{y}|x)$. While in Bayesian framework~\cite{malinin2018predictive}, $f(x)$ is considered as parameters of a Dirichlet distribution $p(\mu|x)$ firstly, which is used to model the prior of predictive categoricals $p(\hat{y}|x)$ by
\begin{equation}
    p(\mu|x)={\rm Dir}(\mu|\boldsymbol{\alpha})=\frac{\Gamma(\alpha_0)}{\prod_{k=1}^K\Gamma(\alpha_k)}\prod_{k=1}^K \mu^{\alpha_k-1}_k, \ \boldsymbol{\alpha}=f(x),
\end{equation}
where ${\rm Dir}(\mu|\boldsymbol{\alpha})$ is Dirichlet distribution and $\boldsymbol{\alpha}$ is the concentration parameters of Dirichlet. The sum of all $\alpha_k\in\boldsymbol{\alpha}$ (noted as $\alpha_0$) is so called the strength of the Dirichlet distribution, i.e., $\alpha_k>0, \alpha_0=\sum_{k}\alpha_k$. After obtaining prior $p(\mu|x,\theta)$, the final predicted posterior $p(\hat{y}|x)$ over class labels is given by calculating the mean of the Dirichlet prior
\begin{equation}
    \begin{split}
        \underbrace{p(\hat{y}|x,\theta)}_{\rm overall\ uncertainty}\hspace{-4mm}=\int \hspace{-4mm}\overbrace{p(\hat{y}|\mu)}^{\rm data\ uncertainty}\hspace{-12mm}\underbrace{p(\mu|x,\theta)}_{\rm distributional\ uncertainty}\hspace{-6mm}d\mu.
    \end{split}
\end{equation}
From a Bayesian perspective, given deterministic parameters $\theta$, the overall uncertainty of final prediction $p(\hat{y}|x)$ can be decomposed into two factors, including data (aleatoric) uncertainty (AU) and distributional uncertainty (DU). DU lies in $p(\mu|x,\theta)$ which is defined as uncertainty due to the mismatch between the distributions of test and train data. AU is described by $p(\hat{y}|\mu)$ which captures the natural complexity of the data (e.g., class-overlap)~\cite{malinin2018predictive}. By definition, OOD detection is primarily associated with DU which is only a part of the overall uncertainty. While generalization is related to the overall uncertainty of $p(\hat{y}|x)$ as we mentioned in related works (both AU and DU). One essential property of DU is that it can be high even if the expected categorical $p(\hat{y}|x,\theta)$ expresses low overall uncertainty. Such a property is well suited to achieve OOD detection and generalization jointly since high DU no longer necessarily indicates high overall uncertainty.

\textbf{Decoupled Uncertainty Learning.} While the aforementioned Bayesian framework enjoys theoretical potentiality, its learning object~\cite{malinin2018predictive} lacks consideration of OOD generalization. Similar to other OOD detection methods, it also directly enforces high overall uncertainty on OOD
\begin{equation}
\label{eq:DPN}
    \min_{\theta} \mathbb{E}_{\mathcal{P}^{\rm ID}}{\rm KL}(p(y|x))||p(\hat{y}|x))+\mathbb{E}_{\mathcal{P}^{\rm SEM}_{\rm train}}{\rm KL}(p(\mu|\tilde{x}))||{\rm Dir}(\mu|\alpha=\mathbf{1})),
\end{equation}
where $p(y|x),p(\hat{y}|x)$ are the ground-truth distribution and predicted distribution on ID. The model's prediction on OOD is enforced to be close to a rather flat Dirichlet distribution. It is worth noting that ${\rm Dir}(\mu|\alpha=\mathbf{1})$ means all classes are equiprobable, and the entropy of the final prediction is maximized. As shown in Fig.~\ref{fig:coverfigure} (b), the vanilla Bayesian method~\cite{malinin2018predictive} also suffers from degraded OOD generalization performance. To this end, we propose \textbf{D}ecoupled \textbf{U}ncertainty \textbf{L}earning (DUL), a novel OOD detection regularization method that explicitly encourages high DU on OOD samples without affecting the overall uncertainty. Similarly to previous OOD detection methods~\cite{liu2020energy}, our DUL is also devised in a finetune manner for effectiveness. Given a classifier $f_{\theta_0}$ well pre-trained on $P^{\rm ID}$, the goal of DUL lies in enhancing its OOD detection performance without sacrificing any generalization ability. Specifically, we finetune the model by encouraging higher DU but non-increased overall uncertainty on $P^{\rm SEM}_{\rm train}$ . The learning objective of DUL is
\begin{equation}
\begin{split}
\label{eq:object-theory}
    \underset{\theta}{\rm min}\ &\underbrace{\mathbb{E}_{(x,y)\sim P^{\rm ID}}[\mathcal{L}_{\rm CE}(f(x),y)]}_{\rm ID\ classification}+\lambda\underbrace{\mathbb{E}_{\tilde{x}\sim P^{\rm SEM}_{\rm train}}||{\rm max}(0,(h_0+m_{\rm OUT})-h)||_{\tau}}_{\rm high\ distributional\ uncertainty\ (detection)}
  \\   &\ {\rm s.t.}\ \underbrace{H(p(\hat{y}|\tilde{x}))= H(p_0(\hat{y}|\tilde{x}))}_{\rm non-increased\ overall\ uncertainty\ (generalization)}\ \forall\ \tilde{x}\sim\ {P^{\rm SEM}_{\rm train}},
\end{split}
\end{equation}
where $H(\cdot)$ is the entropy. $p(\hat{y}|\tilde{x})$ and $p_0(\hat{y}|\tilde{x})$ are the predicted distribution on semantic OOD data $\tilde{x}$ after and before finetuning. The first term is the original ID classification loss. The second term is OOD detection loss, which encourages high DU on outlier $\tilde{x}$. $m_{\rm OUT}$ and $\tau>0$ are hyperparameters. $h_0,h$ are DU on $\tilde{x}$ before and after finetuning. Here we measure DU with the differential entropy ($h[p(\mu|\tilde{x},\theta)]=-\int_{S^{K-1}} p(\mu|\tilde{x}){\rm ln}(p(\mu|\tilde{x}))d\mu$). We refer interested readers to the Appendix~\ref{appendix:derivation} for mathematical details. The third term constraining on $H(p(\hat{y}|\tilde{x}))$ avoids increment of overall uncertainty during finetuning and thus the generalization ability can be retained. Considering the difficulty of constrained optimization, we convert Eq.~\ref{eq:object-theory} into an unconstrained form and get our final minimizing objective
\begin{equation}
\label{eq:object-practice}
    \underbrace{\mathbb{E}_{P^{\rm ID}}[\mathcal{L}_{\rm CE}(f(x),y)]}_{\rm ID\ classification}+\mathbb{E}_{P^{\rm SEM}_{\rm train}}\{\lambda\underbrace{||{\rm max}(0,(h_0+m_{\rm OUT})-h)||_{\tau}}_{\rm high\ distributional\ uncertainty}
     +\underbrace{\gamma{\rm KL}(p(\hat{y}|\tilde{x})||p_0(\hat{y}|\tilde{x}))}_{\rm unchanged\ overall\ uncertainty}\},
\end{equation}
where $\gamma$ is hyperparameter. In contrast to previous Bayesian method~\cite{malinin2018predictive}, DUL only encourages high DU rather than overall uncertainty on OOD and explicitly discourages high entropy in the final prediction. The implementation details are in Appendix~\ref{appendix:derivation}.

\section{Experiment}
We conduct experiments to validate our analysis and the superiority of DUL. The questions to be verified are Q1 Motivation. To what extent does OOD detection conflict with OOD generalization in previous methods? Q2 Effectiveness. Does DUL achieve better OOD detection and generalization performance compared to its counterparts? Q3 Interpretability. Does the proposed method well decouple uncertainty as expected?

\begin{table}[!ht]
\begin{center}

\vskip -0.4in
\caption{OOD detection and generalization performance comparison. Substantial ($\geq 0.5$) \textcolor{mycolor4}{improvement} and \textcolor{mycolor3}{degradation} compared to the baseline MSP~\cite{hendrycks2016baseline} (training without any OOD detection regularization) are highlighted in blue or red respectively. The \textbf{best} and \underline{second-best} results are in bold or underlined. DUL is the only method that achieves SOTA OOD detection performance (mostly the best or second best) without sacrificing generalization i.e., the value of the entire row is either blue or black. Full results with standard deviation and diverse types of corruption are in Appendix~\ref{appendix:full results}.}
\vskip -0.1in
\label{tab:main}
\center
\resizebox{1.0\textwidth}{!}{
\setlength{\tabcolsep}{3.9mm}
\begin{tabular}{cc|cc|ccc}
\toprule  \multirow{2}{*}{$\mathcal{P}^{\rm ID} / \mathcal{P}^{\rm SEM}_{\rm train}$}   &\multirow{2}{*}{Method}   &  \multicolumn{2}{c|}{Model generalization}                                                    &\multicolumn{3}{c}{OOD detection}\\
   &  &\text{ID-Acc $\uparrow^{\phantom{\pm01}}$} & \text{OOD-Acc $\uparrow^{\phantom{\pm01}}$} & \text{FPR $\downarrow^{\phantom{\pm0.0}}$}& \text{AUROC $\uparrow^{\phantom{\pm0.0}}$}  &\text{AUPR $\uparrow^{\phantom{\pm0.0}}$}  \\ \midrule
         
      \multirow{4}{*}{\shortstack{CIFAR-10 / \\ None}} &MSP        &  $96.11^{\phantom{\pm0.01}}$   &  $87.35^{\phantom{\pm0.01}}$                                                  &$41.96^{\phantom{\pm0.01}}$  & $89.28^{\phantom{\pm0.01}}$  & $68.00^{\phantom{\pm0.01}}$                          \\  &EBM (pretrain)      &  $96.11^{\phantom{\pm0.01}}$&  $87.35^{\phantom{\pm0.01}}$                                                  & $32.45^{\phantom{\pm0.01}}$ & $89.34^{\phantom{\pm0.01}}$   &$75.22^{\phantom{\pm0.01}}$                              \\ &Maxlogits      &  $96.11^{\phantom{\pm0.01}}$&  $87.35^{\phantom{\pm0.01}}$                                                  &$32.90^{\phantom{\pm0.01}}$ & $89.26^{\phantom{\pm0.01}}$   & $74.47^{\phantom{\pm0.01}}$                            \\ &Mahalanobis      &  $96.11^{\phantom{\pm0.01}}$&  $87.35^{\phantom{\pm0.01}}$                                                  & $32.53^{\phantom{\pm0.01}}$ & $93.93^{\phantom{\pm0.01}}$   & $74.96^{\phantom{\pm0.01}}$                                                            \\ \midrule \multirow{7}{*} {\shortstack{CIFAR-10 / \\ ImageNet-RC}} 
    
     &Entropy      &   $\underline{96.04}^{\phantom{\pm0.01}}$    & \textcolor{mycolor3}{$72.57^{\phantom{\pm0.01}}$}                                                    &\textcolor{mycolor4}{$6.63^{\phantom{\pm0.01}}$}  &\textcolor{mycolor4}{$\underline{98.72}^{\phantom{\pm0.01}}$}   &\textcolor{mycolor4}{$94.00^{\phantom{\pm0.01}}$}                            \\  
    
     &EBM (finetune)      &   $\textbf{96.10}^{\phantom{\pm0.01}}$     &\textcolor{mycolor3}{$79.03^{\phantom{\pm0.01}}$}                                                    &\textcolor{mycolor4}{$\underline{3.61}^{\phantom{\pm0.01}}$}  &\textcolor{mycolor4}{$98.39^{\phantom{\pm0.01}}$}   &\textcolor{mycolor4}{$94.88^{\phantom{\pm0.01}}$}                              \\  
    
     &POEM      &   \textcolor{mycolor3}{$94.32^{\phantom{\pm0.01}}$}     &\textcolor{mycolor3}{$78.89^{\phantom{\pm0.01}}$}                                                    &\textcolor{mycolor4}{$\textbf{3.32}^{\phantom{\pm0.01}}$}  &\textcolor{mycolor4}{$\textbf{98.99}^{\phantom{\pm0.01}}$}   & \textcolor{mycolor4}{$\textbf{99.38}^{\phantom{\pm0.01}}$}                 \\ &DPN      &   $95.69^{\phantom{\pm0.01}}$     &\textcolor{mycolor3}{${85.52}^{\phantom{\pm0.01}}$}                                                    &\textcolor{mycolor4}{${4.28}^{\phantom{\pm0.01}}$}  &\textcolor{mycolor4}{${98.53}^{\phantom{\pm0.01}}$}   & \textcolor{mycolor4}{${94.93}^{\phantom{\pm0.01}}$}                             \\ &WOODS      &   $96.01^{\phantom{\pm0.01}}$     &\textcolor{mycolor3}{${80.14}^{\phantom{\pm0.01}}$}                                                    &\textcolor{mycolor4}{${7.12}^{\phantom{\pm0.01}}$}  &\textcolor{mycolor4}{${98.45}^{\phantom{\pm0.01}}$}   & \textcolor{mycolor4}{${92.46}^{\phantom{\pm0.01}}$}                             \\        &SCONE      &  $95.96^{\phantom{\pm0.01}}$ &\textcolor{mycolor3}{$78.80^{\phantom{\pm0.01}}$}                                                    &\textcolor{mycolor4}{$7.02^{\phantom{\pm0.01}}$}  &\textcolor{mycolor4}{$98.45^{\phantom{\pm0.01}}$}   &\textcolor{mycolor4}{$92.46^{\phantom{\pm0.01}}$}                             \\
      &DUL (ours)      & \cellcolor{gray!20}$96.02^{\pm0.00}$
 &\cellcolor{gray!20}\textcolor{mycolor4}{$\textbf{88.01}^{\pm0.29}$}                                                    &\cellcolor{gray!20}\textcolor{mycolor4}{$5.89^{\pm0.12}$}  &\cellcolor{gray!20}\textcolor{mycolor4}{$98.47^{\pm0.02}$}   &\cellcolor{gray!20}\textcolor{mycolor4}{$92.44^{\pm1.29}$} 
     \\  
     &DUL\textsuperscript{\dag} (ours)      &  \cellcolor{gray!20}$\underline{96.04}^{\pm0.00}$  &\cellcolor{gray!20}{$\underline{87.53}^{\pm0.49}$}                                                    &\cellcolor{gray!20}\textcolor{mycolor4}{${5.99}^{\pm0.06}$}  &\cellcolor{gray!20}\textcolor{mycolor4}{$98.28^{\pm0.01}$}   & \cellcolor{gray!20}\textcolor{mycolor4}{$\underline{98.40}^{\pm0.13}$} 
     \\ \midrule
\multirow{7}{*} {\shortstack{CIFAR-10 / \\ TIN-597}}  
     &Entropy       &   $\underline{95.94}^{\phantom{\pm0.01}}$    &\textcolor{mycolor3}{$80.51^{\phantom{\pm0.01}}$}                                                    &\textcolor{mycolor4}{$11.60^{\phantom{\pm0.01}}$}  &\textcolor{mycolor4}{$97.93^{\phantom{\pm0.01}}$}   & \textcolor{mycolor4}{${92.16}^{\phantom{\pm0.01}}$}                             \\  
    
     &EBM (finetune)       &   \textcolor{mycolor3}{$95.38^{\phantom{\pm0.01}}$}     &\textcolor{mycolor3}{$83.67^{\phantom{\pm0.01}}$}                                                    &\textcolor{mycolor4}{$19.36^{\phantom{\pm0.01}}$}  &\textcolor{mycolor3}{$87.51^{\phantom{\pm0.01}}$}   &\textcolor{mycolor4}{$83.63^{\phantom{\pm0.01}}$}                              \\  
    
     &POEM       &   \textcolor{mycolor3}{$95.44^{\phantom{\pm0.01}}$}     &\textcolor{mycolor3}{$83.17^{\phantom{\pm0.01}}$}                                                    &\textcolor{mycolor4}{$24.34^{\phantom{\pm0.01}}$}  &\textcolor{mycolor3}{$86.83^{\phantom{\pm0.01}}$}   &\textcolor{mycolor4}{$\underline{94.25}^{\phantom{\pm0.01}}$}                 \\  &DPN      &   \textcolor{mycolor3}{$94.39^{\phantom{\pm0.01}}$} &\textcolor{mycolor3}{$79.23^{\phantom{\pm0.01}}$}                                                    &\textcolor{mycolor4}{$17.27^{\phantom{\pm0.01}}$}  &\textcolor{mycolor4}{$94.92^{\phantom{\pm0.01}}$}   &\textcolor{mycolor4}{$87.67^{\phantom{\pm0.01}}$}
                          \\       &WOODS      &  \textcolor{mycolor3}{$95.57^{\phantom{\pm0.01}}$} &\textcolor{mycolor3}{${84.68}^{\phantom{\pm0.01}}$}                                                    &\textcolor{mycolor4}{$\underline{7.58}^{\phantom{\pm0.01}}$}  &\textcolor{mycolor4}{$\textbf{98.29}^{\phantom{\pm0.01}}$}   & \textcolor{mycolor4}{${93.08}^{\phantom{\pm0.01}}$}                           \\       &SCONE      &  \textcolor{mycolor3}{$95.19^{\phantom{\pm0.01}}$} &\textcolor{mycolor3}{${84.68}^{\phantom{\pm0.01}}$}                                                    &\textcolor{mycolor4}{${8.02}^{\phantom{\pm0.01}}$}  &\textcolor{mycolor4}{$\underline{98.21}^{\phantom{\pm0.01}}$}   & \textcolor{mycolor4}{${93.08}^{\phantom{\pm0.01}}$}                           \\  
     &DUL (ours)      &  \cellcolor{gray!20}$\textbf{96.06}^{\pm0.01}$  &\cellcolor{gray!20}\textcolor{mycolor4}{$\underline{87.93}^{\pm0.39}$}                                                    &\cellcolor{gray!20}\textcolor{mycolor4}{$\textbf{6.87}^{\pm0.67}$}  &\cellcolor{gray!20}\textcolor{mycolor4}{$\underline{98.21}^{\pm0.01}$}   & \cellcolor{gray!20}\textcolor{mycolor4}{$91.29^{\pm1.39}$}                             \\  
     &DUL\textsuperscript{\dag} (ours)      &  \cellcolor{gray!20}$\underline{95.94}^{\pm0.01}$  &\cellcolor{gray!20}\textcolor{mycolor4}{$\textbf{88.10}^{\pm0.07}$}                                                    &\cellcolor{gray!20}\textcolor{mycolor4}{${10.34}^{\pm0.11}$}  &\cellcolor{gray!20}\textcolor{mycolor4}{${97.67}^{\pm0.01}$}   & \cellcolor{gray!20}\textcolor{mycolor4}{$\textbf{98.59}^{\pm0.06}$}                             \\ \midrule \midrule \multirow{4}{*} {\shortstack{CIFAR-100 / \\ None}}
 & MSP    &   $80.99^{\phantom{\pm0.01}}$   &   $55.95^{\phantom{\pm0.01}}$                                                 & $74.63^{\phantom{\pm0.01}}$ & $80.19^{\phantom{\pm0.01}}$  & $42.59^{\phantom{\pm0.01}}$                          \\  &EBM (pretrain)      &  $80.99^{\phantom{\pm0.01}}$  &$55.95^{\phantom{\pm0.01}}$                                                    &$67.42^{\phantom{\pm0.01}}$  &$82.67^{\phantom{\pm0.01}}$   &$49.35^{\phantom{\pm0.01}}$                              \\ 
    
     &Maxlogits      &  $80.99^{\phantom{\pm0.01}}$ &$55.95^{\phantom{\pm0.01}}$                                                    &$69.32^{\phantom{\pm0.01}}$  &$82.30^{\phantom{\pm0.01}}$   & $47.60^{\phantom{\pm0.01}}$                             \\  &Mahalanobis      &  $80.99^{\phantom{\pm0.01}}$ &$55.95^{\phantom{\pm0.01}}$                                                    &$61.51^{\phantom{\pm0.01}}$  &$85.97^{\phantom{\pm0.01}}$   & $56.10^{\phantom{\pm0.01}}$                              \\ \midrule \multirow{7}{*} {\shortstack{CIFAR-100 / \\ ImageNet-RC}}        &Entropy      &  \textcolor{mycolor3}{$80.21^{\phantom{\pm0.01}}$} &\textcolor{mycolor3}{$45.48^{\phantom{\pm0.01}}$}                                                    &\textcolor{mycolor4}{$22.29^{\phantom{\pm0.01}}$}  &\textcolor{mycolor4}{$95.33^{\phantom{\pm0.01}}$}   &\textcolor{mycolor4}{$82.34^{\phantom{\pm0.01}}$}                              \\  
    
     &EBM (finetune)      &   $80.53^{\phantom{\pm0.01}}$  &\textcolor{mycolor3}{$48.14^{\phantom{\pm0.01}}$}                                                    &\textcolor{mycolor4}{$13.47^{\phantom{\pm0.01}}$}  &\textcolor{mycolor4}{$\underline{96.78}^{\phantom{\pm0.01}}$}   &\textcolor{mycolor4}{${87.84}^{\phantom{\pm0.01}}$}                              \\  
    
     &POEM      &   \textcolor{mycolor3}{$78.15^{\phantom{\pm0.01}}$} &\textcolor{mycolor3}{$42.18^{\phantom{\pm0.01}}$}                                                    &\textcolor{mycolor4}{$\textbf{9.89}^{\phantom{\pm0.01}}$}  &\textcolor{mycolor4}{$\textbf{97.79}^{\phantom{\pm0.01}}$}   &\textcolor{mycolor4}{$\textbf{98.40}^{\phantom{\pm0.01}}$}                          \\  &DPN      &   \textcolor{mycolor3}{$78.90^{\phantom{\pm0.01}}$} &\textcolor{mycolor3}{$50.14^{\phantom{\pm0.01}}$}                              &\textcolor{mycolor4}{$18.36^{\phantom{\pm0.01}}$}  &\textcolor{mycolor4}{${95.42}^{\phantom{\pm0.01}}$}   &\textcolor{mycolor4}{$74.45^{\phantom{\pm0.01}}$}\\  &WOODS      &   {$80.69^{\phantom{\pm0.01}}$} &\textcolor{mycolor3}{$54.38^{\phantom{\pm0.01}}$}                              &\textcolor{mycolor4}{$38.15^{\phantom{\pm0.01}}$}  &\textcolor{mycolor4}{${92.01}^{\phantom{\pm0.01}}$}   &\textcolor{mycolor4}{$71.79^{\phantom{\pm0.01}}$}
                          \\      &SCONE      &  ${80.80}^{\phantom{\pm0.01}}$ &\textcolor{mycolor4}{$\textbf{56.73}^{\phantom{\pm0.01}}$}                                                    &\textcolor{mycolor4}{$47.60^{\phantom{\pm0.01}}$}  &\textcolor{mycolor4}{$89.61^{\phantom{\pm0.01}}$}   &\textcolor{mycolor4}{$65.29^{\phantom{\pm0.01}}$}                              \\ 
    
     &DUL (ours)      &  \cellcolor{gray!20}$\textbf{81.30}^{\pm0.04}$ &\cellcolor{gray!20}$\underline{56.27}^{\pm3.29}$                                                    &\cellcolor{gray!20}\textcolor{mycolor4}{${12.49}^{\pm0.05}$}  &\cellcolor{gray!20}\textcolor{mycolor4}{$95.24^{\pm0.01}$}   &\cellcolor{gray!20}\textcolor{mycolor4}{$86.72^{\pm0.58}$} 
      \\  
     &DUL\textsuperscript{\dag} (ours)      &  \cellcolor{gray!20}$\underline{81.23}^{\pm0.05}$  &\cellcolor{gray!20}{${55.41}^{\pm0.54}$}                                                    &\cellcolor{gray!20}\textcolor{mycolor4}{$\underline{11.12}^{\pm0.62}$}  &\cellcolor{gray!20}\textcolor{mycolor4}{$95.46^{\pm0.36}$}   & \cellcolor{gray!20}\textcolor{mycolor4}{$\underline{96.49}^{\pm0.13}$}\\  \midrule
\multirow{7}{*}{\shortstack{CIFAR-100 / \\ TIN-597}}        &Entropy      &  \textcolor{mycolor3}{${80.15}^{\phantom{\pm0.01}}$} &\textcolor{mycolor3}{$46.25^{\phantom{\pm0.01}}$}                                                    &\textcolor{mycolor4}{$26.88^{\phantom{\pm0.01}}$}  &\textcolor{mycolor4}{${93.50}^{\phantom{\pm0.01}}$}   &\textcolor{mycolor4}{$79.81^{\phantom{\pm0.01}}$}                             \\  
    
    &EBM (finetune)     &   \textcolor{mycolor3}{$79.94^{\phantom{\pm0.01}}$}  &\textcolor{mycolor3}{$50.00^{\phantom{\pm0.01}}$}                                                    &\textcolor{mycolor4}{$26.87^{\phantom{\pm0.01}}$}  &\textcolor{mycolor4}{$91.68^{\phantom{\pm0.01}}$}   & \textcolor{mycolor4}{$80.08^{\phantom{\pm0.01}}$}                             \\  &POEM      &   \textcolor{mycolor3}{$78.68^{\phantom{\pm0.01}}$} &\textcolor{mycolor3}{$52.53^{\phantom{\pm0.01}}$}                                                    &\textcolor{mycolor4}{$32.71^{\phantom{\pm0.01}}$}  &\textcolor{mycolor4}{$91.30^{\phantom{\pm0.01}}$}   &\textcolor{mycolor4}{$\underline{94.65}^{\phantom{\pm0.01}}$}                          \\  &DPN      &   \textcolor{mycolor3}{$78.44^{\phantom{\pm0.01}}$} &\textcolor{mycolor3}{$47.67^{\phantom{\pm0.01}}$}                              &\textcolor{mycolor4}{${24.99}^{\phantom{\pm0.01}}$}  &\textcolor{mycolor4}{$\underline{93.55}^{\phantom{\pm0.01}}$}   &\textcolor{mycolor4}{${81.63}^{\phantom{\pm0.01}}$}
                          \\      &WOODS      & \textcolor{mycolor3} {$79.26^{\phantom{\pm0.01}}$} &\textcolor{mycolor3}{${53.13}^{\phantom{\pm0.01}}$}                                                    &\textcolor{mycolor4}{$36.71^{\phantom{\pm0.01}}$}  &\textcolor{mycolor4}{$92.15^{\phantom{\pm0.01}}$}   &\textcolor{mycolor4}{$73.42^{\phantom{\pm0.01}}$}                              \\  &SCONE      & \textcolor{mycolor3} {$79.53^{\phantom{\pm0.01}}$} &\textcolor{mycolor3}{${52.70}^{\phantom{\pm0.01}}$}                                                    &\textcolor{mycolor4}{$35.60^{\phantom{\pm0.01}}$}  &\textcolor{mycolor4}{$92.47^{\phantom{\pm0.01}}$}   &\textcolor{mycolor4}{$73.58^{\phantom{\pm0.01}}$}                              \\ 
    
      &DUL (ours)      &  \cellcolor{gray!20}$\textbf{80.85}^{\pm0.06}$ &\cellcolor{gray!20}$\underline{56.19}^{\pm2.33}$                                                    &\cellcolor{gray!20}\textcolor{mycolor4}{$\underline{23.32}^{\pm1.22}$}  &\cellcolor{gray!20}\textcolor{mycolor4}{$\textbf{94.48}^{\pm0.12}$}   &\cellcolor{gray!20}\textcolor{mycolor4}{$80.82^{\pm2.63}$}  \
      \\  
     &DUL\textsuperscript{\dag} (ours)      &  \cellcolor{gray!20}$\underline{80.50}^{\pm0.06}$  &\cellcolor{gray!20}{$\textbf{56.22}^{\pm1.66}$}                                                    &\cellcolor{gray!20}\textcolor{mycolor4}{$\textbf{22.75}^{\pm0.78}$}  &\cellcolor{gray!20}\textcolor{mycolor4}{$90.88^{\pm0.08}$}   & \cellcolor{gray!20}\textcolor{mycolor4}{$\textbf{96.33}^{\phantom{\pm0.01}}$} \\
\midrule \midrule
\multirow{3}{*} {\shortstack{ImageNet-200 / \\ \\None}}         &MSP        &  $\underline{85.15}^{\phantom{\pm0.01}}$   &  $74.84^{\phantom{\pm0.01}}$                                                  &$58.23^{\phantom{\pm0.01}}$  & $86.98^{\phantom{\pm0.01}}$  & $82.27^{\phantom{\pm0.01}}$                          \\  &EBM (pretrain)      &  $\underline{85.15}^{\phantom{\pm0.01}}$&  $74.84^{\phantom{\pm0.01}}$                                                  & $51.94^{\phantom{\pm0.01}}$ & $88.18^{\phantom{\pm0.01}}$   &$84.75^{\phantom{\pm0.01}}$                              \\ &Maxlogits      &  $\underline{85.15}^{\phantom{\pm0.01}}$&  $74.84^{\phantom{\pm0.01}}$                                                  & $\underline{51.62}^{\phantom{\pm0.01}}$ & $88.30^{\phantom{\pm0.01}}$   &$84.71^{\phantom{\pm0.01}}$                             \\\cline{1-7} \multirow{6}{*} {\shortstack{ImageNet-200 / \\ \\ImageNet-800}}  
    
     &Entropy      &   {${84.92}^{\phantom{\pm0.01}}$}    &${74.75}^{\phantom{\pm0.01}}$                                                    &\textcolor{mycolor4}{${53.62}^{\phantom{\pm0.01}}$}  &\textcolor{mycolor4}{$\underline{89.05}^{\phantom{\pm0.01}}$}   &\textcolor{mycolor4}{$\underline{85.02}^{\phantom{\pm0.01}}$}                              \\  
    
     &EBM (finetune)      &   \textcolor{mycolor3}{$84.14^{\phantom{\pm0.01}}$}     &\textcolor{mycolor3}{$73.31^{\phantom{\pm0.01}}$}                                                    &\textcolor{mycolor3}{$59.73^{\phantom{\pm0.01}}$}  &\textcolor{mycolor4}{$87.54^{\phantom{\pm0.01}}$}     &\textcolor{mycolor4}{${82.81}^{\phantom{\pm0.01}}$}                                      \\  
    
     &DPN      &   $84.87^{\phantom{\pm0.01}}$     &$74.40^{\phantom{\pm0.01}}$                                                    &\textcolor{mycolor3}{$63.84^{\phantom{\pm0.01}}$}  &$87.18^{\phantom{\pm0.01}}$   &\textcolor{mycolor3}{$80.69^{\phantom{\pm0.01}}$}  \\  
    
     &WOODS      &   ${84.99}^{\phantom{\pm0.01}}$     &$\underline{74.98}^{\phantom{\pm0.01}}$                                                    &\textcolor{mycolor4}{${51.71}^{\phantom{\pm0.01}}$}  &\textcolor{mycolor4}{$88.30^{\phantom{\pm0.01}}$}   &\textcolor{mycolor4}{${84.80}^{\phantom{\pm0.01}}$} \\ &SCONE      &  ${84.93}^{\phantom{\pm0.01}}$ &${74.91}^{\phantom{\pm0.01}}$                                                    &\textcolor{mycolor4}{${52.52}^{\phantom{\pm0.01}}$}  &\textcolor{mycolor4}{${88.19}^{\phantom{\pm0.01}}$}   &\textcolor{mycolor4}{${84.50}^{\phantom{\pm0.01}}$}                              \\
     &DUL (ours)      & \cellcolor{gray!20}\textcolor{mycolor4}{$\textbf{85.65}^{\pm0.07}$}  &\cellcolor{gray!20}\textcolor{mycolor4}{$\textbf{75.59}^{\pm0.12}$}                                                    &\cellcolor{gray!20}\textcolor{mycolor4}{$\textbf{49.14}^{\pm0.13}$}  &\cellcolor{gray!20}\textcolor{mycolor4}{$\textbf{89.27}^{\pm0.03}$}   &\cellcolor{gray!20}\textcolor{mycolor4}{$\textbf{85.62}^{\pm0.03}$}                              \\  
    
       \bottomrule
\end{tabular}}
\vskip -0.2in
\end{center}
\end{table}

\subsection{Experimental Setup}
Our settings follow the common practice~\cite{liu2020energy,ming2022poem,yang2022openood,zhang2023openood} in OOD detection. Here we present a brief description and more details about datasets, metrics, and implementation are in Appendix~\ref{appendix:dataset} and~\ref{appendix:implementation}.

\textbf{Datasets.} $\circ$ \textbf{ID datasets $P^{\rm ID}$.} We train the model on different ID datasets including CIFAR-10, CIFAR-100 and ImageNet-200 (a subset of ImageNet-1K~\cite{russakovsky2015imagenet} with 200 classes). $\circ$ \textbf{Auxiliary OOD datasets $P^{\rm SEM}_{\rm train}$.} In CIFAR experiments, we use ImageNet-RC as $P^{\rm SEM}_{\rm train}$. ImageNet-RC is a down-sampled variant of the original ImageNet-1K which is widely adopted in previous OOD detection works~\cite{liu2020energy,ming2022poem,chen2021atom}. We also conduct experiments on the recent TIN-597~\cite{yang2022openood} as an alternative. When ImageNet-200 is ID, the remaining 800 classes termed ImageNet-800 are considered as $P^{\rm SEM}_{\rm train}$. $\circ$ \textbf{OOD detection test sets $P^{\rm SEM}_{\rm test}$} are a suite of diverse datasets introduced by commonly used benchmark~\cite{zhang2023openood}. In CIFAR experiments, we use SVHN~\cite{netzer2011reading}, Places365~\cite{zhou2017places}, Textures~\cite{cimpoi2014describing}, LSUN-R, LSUN-C~\cite{yu15lsun} and iSUN~\cite{pan2015end} as $P^{\rm SEM}_{\rm test}$. When $P^{\rm ID}$ is ImageNet-200, $P^{\rm SEM}_{\rm test}$ consists of iNaturlist~\cite{van2018inaturalist}, Open-Image~\cite{kuznetsova2020open}, NINCO~\cite{bitterwolf2023ninco} and SSB-Hard~\cite{vaze2022semantic}. It is worth noting that in standard OOD detection settings, there should be no overlapped classes between $P^{\rm ID}$, $P^{\rm SEM}_{\rm train}$ and $P^{\rm SEM}_{\rm test}$, otherwise OOD detection is a trivial problem. $\circ$ \textbf{OOD generalization test sets $P^{\rm COV}$} is the original ID test set corrupted with additive Gaussian noise of $\mathcal{N}(0,5)$, following~\cite{bai2023feed}.

\textbf{Metrics.} For OOD detection performance evaluation, we report the average FPR95, AUROC and AUPR to be consistent with~\cite{ming2022poem}. OOD generalization ability is compared in terms of classification accuracy (OOD-Acc). Besides, we also report classification accuracy on ID test sets (ID-Acc).

\textbf{Compared methods.} We compare DUL with a board line of OOD detection methods, including auxiliary OOD required and auxiliary OOD free methods. $\circ$ \textbf{Auxiliary OOD-free methods} do not require $P^{\rm SEM}_{\rm train}$ during training, including MSP~\cite{hendrycks2016baseline}, Maxlogits~\cite{hendrycks2019scaling}, pretrained EBM~\cite{liu2020energy} and Mahalaobis~\cite{lee2018simple}. $\circ$ \textbf{Auxiliary OOD-required methods} explicitly regularize the model on $P^{\rm SEM}_{\rm train}$, including entropy-regularization (Entropy)~\cite{hendrycks2018deep}, finetuned EBM~\cite{liu2020energy}, DPN of Bayesian framework~\cite{malinin2018predictive}, POEM~\cite{ming2022poem} and WOODS~\cite{katz2022training}. We also compare our DUL to recent advanced SCONE~\cite{bai2023feed} which aims to keep a balance between OOD detection and generalization.

\subsection{Experimental Results}
\textbf{Dilemma between OOD detection and generalization (Q1).} We validate the dilemma mentioned before in Fig.~\ref{fig:coverfigure}. As shown in Tab.~\ref{tab:main}, though many advanced methods establish superior OOD detection performance, their OOD generalization degrades a lot. For example, recent SOTA POEM achieves nearly perfect OOD detection performance on CIFAR10 when ImageNet-RC serves as $P^{\rm SEM}_{\rm train}$ with $3.32\%$ false positive error rate (FPR95). However, its OOD-Acc drops a lot (about $10\%$) compared to baseline MSP. This phenomenon is also observed in other advanced methods. To further detail this phenomenon, we reduce the weight of OOD detection regularization terms in Entropy and finetuned EBM and show the performance on both OOD detection and generalization. As shown in Table~\ref{tab:newbaseline}, when the regularization strength increases, OOD detection performance improves (lower FPR.), while the OOD generalization performance degrades (higher error rate).

\textbf{OOD detection and generalization ability (Q2).} As shown in Tab.~\ref{tab:main}, DUL establishes strong overall performance in terms of both OOD detection and generalization. We highlight a few essential observations: 1) \textbf{Compared to auxiliary OOD free methods}, DUL establishes substantial improvement due to additional regularization on auxiliary outliers. 2) \textbf{Compared to auxiliary OOD required methods}, our method achieves superior OOD detection performance without sacrificing generalization ability. Meanwhile, previous OOD detection methods commonly exhibit severely degraded classification accuracy, with many cases increasing by more than $10\%$ error rate. 3) \textbf{Comparison to the most related work SCONE~\cite{bai2023feed}.} Despite recent advanced SCONE simultaneously considering both two targets, we observe that it can be hard to find a good trade-off. In contrast, dual-optimal OOD detection and generalization performance is achieved by our DUL. Noted that DUL is the only method that achieves \textit{state-of-the-art} detection performance (mostly the best or second best) without degraded generalization ability (no red values in the entire row). The sensitive-robust dilemma is no longer observed in our method. These observations justify our expectation of DUL. 4) \textbf{Combining with existing methods.} Besides, to further demonstrate the effectiveness of the proposed DUL, we also add the unchanged overall uncertainty term in Eq.12 to the original Entropy and finetuned EBM. The results in Table~\ref{tab:newbaseline2} show that DUL regularization can also benefit EBM. However, combining Entropy with our regularization can not improve the accuracy substantially. This is not surprising, since the target of Entropy (high entropy prediction) and our DUL (non-increased entropy) directly conflict according to Theorem 1. 5) \textbf{Comparison to methods with an extra OOD detect branch.} Different from aforementioned methods, a line of recent OOD detectors~\cite{miao2024out,bitterwolf2022breaking,liu2024category,chen2021atom} employ extra output branches aside from the classification logits (with a shared backbone for feature extraction). For these OOD detectors, our theoretical analysis is not directly applicable and further analysis from a feature learning perspective may be needed in future work. However, the proposed DUL is devised in a finetune manner. Compared to OOD detectors with extra output branches that requires re-training the classifier from scratch, DUL can be applied to any pre-trained model (e.g., from torchvision, huggingface), with modest compute overhead.

\textbf{Visualization of estimated uncertainty (Q3).} To evaluate the uncertainty estimation, we visualize the distribution of ID (CIFAR-10) and OOD (SVHN) samples in terms of uncertainty. As we can see in Fig.~\ref{fig:uncertainty} (b), our DUL establishes a distinguishable (distributional) uncertainty gap between test-time ID and OOD data, which indicates a good sensitiveness for OOD detection. By contrast, the baseline method MSP (Fig.\ref{fig:uncertainty} (a)) can not effectively discriminate ID and OOD. Besides, we visualize the predictive entropy (overall uncertainty) on covariate-shifted OOD (CIFAR-10 with Gaussian noise) in Fig.~\ref{fig:uncertainty} (c), our DUL yields much lower entropy compared to other methods. Besides, we visualize the data uncertainty on semantic OOD test data (Textures) when CIFAR-10 is ID in Fig.~\ref{fig:uncertainty2}. The investigated methods are 1) pretrained model training on ID dataset only, 2) finetuned model with OOD detection regularization (ablating the last term in Eq.12), and 3) finetuned model with the full DUL method described by Eq.12. As shown in Fig.~\ref{fig:uncertainty2}, to keep the overall uncertainty and enlarge the distributional uncertainty (for OOD detection), the data uncertainty must be reduced. We use Eq.17 from~\cite{malinin2018predictive} to calculate data uncertainty. The distributional uncertainty is shifted by subtracting that on ID dataset. These results meet our expectation.

\begin{figure}[htbp]
\centering
\vspace{0mm}
\includegraphics[width=0.99\textwidth]{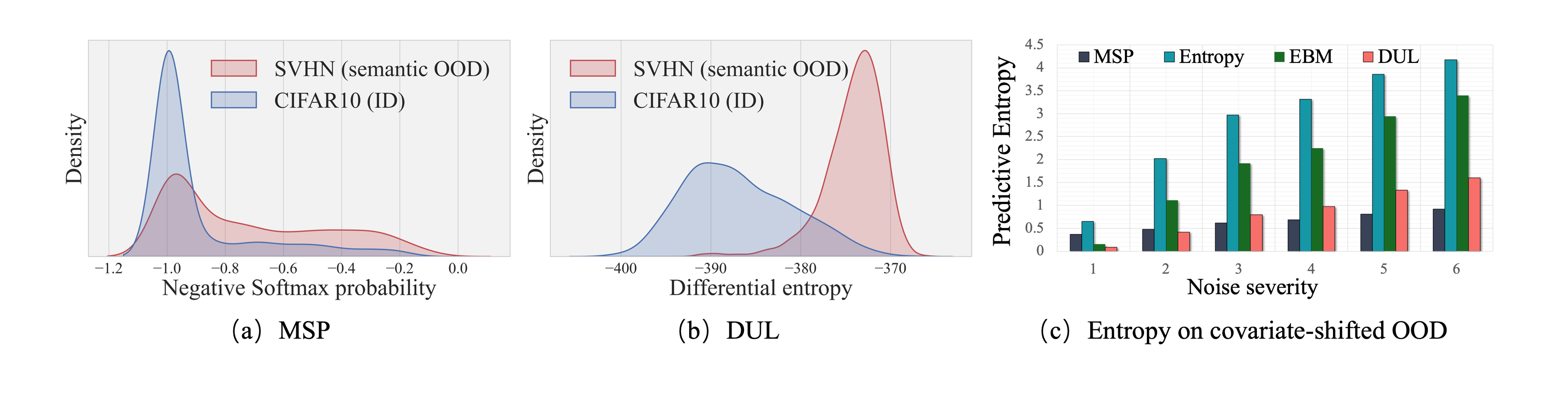}
\vspace{-0mm}
\caption{Visualization of different types of uncertainty estimated by DUL.}
\vspace{-0mm}
\label{fig:uncertainty}
\end{figure}

\begin{figure}[htbp]
\vskip -0.2in
\label{fig:uncertainty2}
    \caption{Visualization of different types of uncertainty on semantic OOD test dataset (i.e., Textures) when CIFAR-10 is ID dataset. Without DUL (orange), all three types of uncertainty will increase altogether on OOD. In contrast, DUL (green) increases the distributional uncertainty but decreases the data uncertainty on OOD, which further lead to unchanged overall uncertainty.}
    \includegraphics[width=\textwidth]{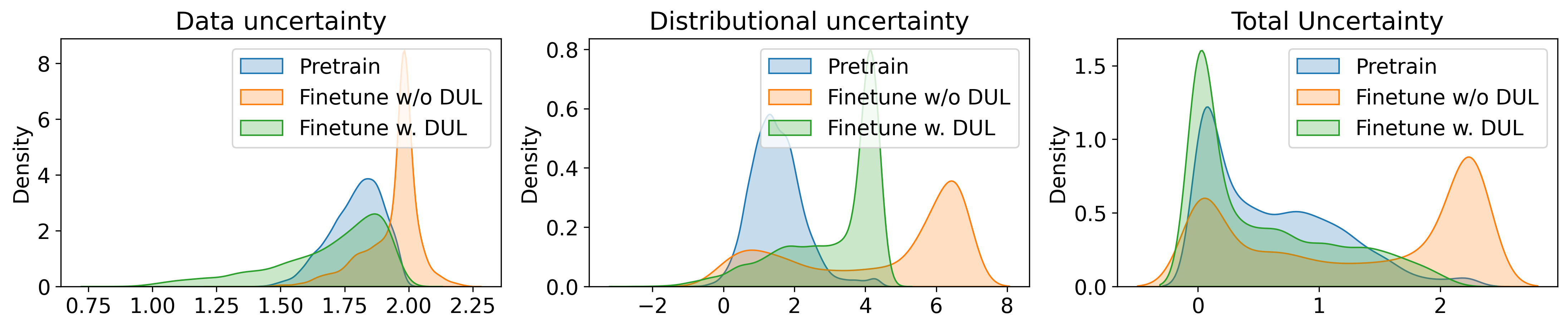}
    \vskip -0.2in
\end{figure}

\begin{multicols}{2}
\noindent
\begin{table}[H]
\caption{Additional results when equip DUL to existing methods i.e., Entropy and finetuned EBM. ID dataset is CIFAR-10. $P^{\rm SEM}_{\rm train}$ is ImageNet-RC. $P^{\rm Cov}_{\rm test}$ is the original CIFAR-10 testset corrupted by Gaussian noise $\mathcal{N}(0,5)$. }
\begin{minipage}{\columnwidth}
\vskip 0pt
\label{tab:newbaseline2}
\center
\resizebox{0.9\columnwidth}{!}{
\setlength{\tabcolsep}{2mm}
\begin{tabular}{c|cc|cc}
\toprule     &  \multicolumn{2}{c|}{Model generalization}                                                    &\multicolumn{2}{c}{OOD detection}\\
   \multirow{1}{*}{Method} &\text{ID-Acc $\uparrow$} & \text{OOD-Acc $\uparrow$} & \text{FPR $\downarrow$}& \text{AUC $\uparrow$} \\ \midrule 
     Entropy      &   $96.04$    & \textcolor{mycolor3}{$72.57$}                                                    &\textcolor{mycolor4}{$6.63$}  &\textcolor{mycolor4}{$98.72$}                           \\
    
     EBM (finetune)      &   $96.10$     &\textcolor{mycolor3}{$79.03$}                                                    &\textcolor{mycolor4}{$3.61$}  &\textcolor{mycolor4}{$98.39$}                               \\  
    
     POEM      &   \textcolor{mycolor3}{$94.32$}     &\textcolor{mycolor3}{$78.89$}                                                    &\textcolor{mycolor4}{$3.32$}  &\textcolor{mycolor4}{$\underline{98.99}$}                                              \\ \midrule 
     EBM w. DUL      &  \cellcolor{gray!20}${95.19}$  &\cellcolor{gray!20}{${87.45}$}                                                    &\cellcolor{gray!20}\textcolor{mycolor4}{${6.17}$}  &\cellcolor{gray!20}\textcolor{mycolor4}{$98.28$}    \\  
     Entropy w. DUL      &  \cellcolor{gray!20}${96.10}$  &\cellcolor{gray!20}{${87.41}$}                                                    &\cellcolor{gray!20}\textcolor{mycolor4}{${29.56}$}  &\cellcolor{gray!20}\textcolor{mycolor4}{$95.92$}    \\  \midrule
      DUL      & \cellcolor{gray!20}$96.02$
 &\cellcolor{gray!20}\textcolor{mycolor4}{$88.01$}                                                    &\cellcolor{gray!20}\textcolor{mycolor4}{$5.89$}  &\cellcolor{gray!20}\textcolor{mycolor4}{$98.47$}   
     \\  
     DUL\textsuperscript{\dag}      &  \cellcolor{gray!20}${96.04}$  &\cellcolor{gray!20}{${87.53}$}                                                    &\cellcolor{gray!20}\textcolor{mycolor4}{${5.99}$}  &\cellcolor{gray!20}\textcolor{mycolor4}{$98.28$}   
     \\  
      \bottomrule

     \end{tabular}}
\vskip -0.5in
\end{minipage}
\end{table}
\columnbreak
\noindent
\begin{table}[H]
\caption{We tune the weight of OOD detection regularization term for EBM as well as Entropy and report the FPR (OOD detection metric) and error rate (Err, OOD generalization metric). The experimental settings are the same with Table 2.}
\begin{minipage}{0.9\columnwidth}
\vskip 15pt
\label{tab:newbaseline}
\center
\resizebox{\columnwidth}{!}{
\setlength{\tabcolsep}{1mm}
\begin{tabular}{ccc|ccc}
\toprule      \multicolumn{3}{c|}{Entropy}  &                                                  \multicolumn{3}{c}{EBM}\\
   \multirow{1}{*}{$\lambda$} &\text{OOD-Err $\downarrow$}&\text{FPR $\downarrow$} & $\lambda$ &\text{OOD-Err $\downarrow$} & \text{FPR $\downarrow$} \\ \midrule 
     $0$      &   $9.55$    & $35.15$                                                    &$0$  &$9.55$  &$20.57$                      \\
    
     $5\times 10^{-4}$      &   $13.58$     &$8.36$                                                    &$1\times 10^{-4}$  &$9.46$   &$14.69$                            \\  
    
     $5\times 10^{-3}$      &   $15.48$     &$6.37$                                                    &$1\times 10^{-3}$  &$10.32$   &$13.54$                                           \\$5\times 10^{-2}$      &   $17.97$     &$5.71$                                                    &$1\times 10^{-2}$  &$16.43$  &$8.15$   \\  
     $5\times 10^{-1}$      &  ${18.53}$  &${5.60}$                                                    &${1\times 10^{-1}}$  &{$24.38$}  &$6.11$   \\ \bottomrule

     \end{tabular}}
\vskip -0.5in
\end{minipage}
\end{table}
\end{multicols}

\section{Conclusion}
This paper provides both theoretical and empirical analysis towards understanding the dilemma between OOD detection and generalization. We demonstrate that the superior OOD detection performance of current advances are achieved at the cost of generalization ability. The theory-inspired algorithm successfully removes the conflict between previous OOD detection and generalization methods. For SOTA OOD detection performance, our implementation assumes that auxiliary outliers are available during training. This limitation is noteworthy for our DUL as well as the most existing SOTA OOD detection methods. We argue that this added cost is minor and reasonable given the significance of ensuring model trustworthiness in open-environments. Reducing the dependency on auxiliary OOD data can be an interesting research direction for the future exploration.


\clearpage 
\newpage
\appendix

\hypersetup{ colorlinks = false, pdfborder = {0 0 0},
    linkbordercolor = {1 1 1} }

\startcontents[sections]
\printcontents[sections]{l}{1}{\section*{Appendices}\setcounter{tocdepth}{2}}

\section{Proofs}
\label{appendix:proof}
First, we recap the definitions of Disparity with Total Variation Distance and Disparity Discrepancy.
\begin{definition}[Disparity with Total Variation Distance] Given two hypotheses $f',f\in\mathcal{F}$ and distribution $P$, we define the Disparity with Total Variation Distance between them as
\begin{equation}
    {\rm disp}_P(f',f)=\mathbb{E}_P[TV(F_f||F_{f'})],
\end{equation}
where $F_f,F_{f'}$ are the class distributions predicted by $f',f$ respectively. $TV(\cdot||\cdot)$ is the total variation distance, i.e., $TV(F_f||F_{f'})=\frac{1}{2}\sum_{k=1}^K||F_{f,k}-F_{f',k}||$. 
\end{definition}

\begin{definition}[Disparity Discrepancy with Total Variation Distance, DD with TVD] Given a hypothesis space $\mathcal{F}$ and two distributions $P,Q$, the Disparity Discrepancy with Total Variation Distance (DD with TVD) is defined as
\begin{equation}
    d_{\mathcal{F}}(P,Q):=\underset{f',f\in\mathcal{F}}{\rm sup}({\rm disp}_{P}(f',f)-{\rm disp}_{Q}(f',f)).
\end{equation}
\end{definition}

Since TVD is a distance measurement of two distribution. It yields the triangle equality. For any distribution $P_{\mathcal{X}}$ support on $\mathcal{X}$ and hypotheses $f^1,f^2$ and $f^3$ $\in\mathcal{F}$, we have 
\begin{equation}
\begin{split}
    {\rm disp}_{P_\mathcal{X}}(f^1,f^2)\leq \mathbb{E}_{x\sim{P_\mathcal{X}}}[TV(F_{f^1}(x)||F_{f^3}(x))]+\mathbb{E}_{x\sim{P_\mathcal{X}}}[TV(F_{f^2}(x)||F_{f^3}(x))]
    \\
    {\rm disp}_{P_\mathcal{X}}(f^1,f^2)\geq \mathbb{E}_{x\sim P_\mathcal{X}}[TV(F_{f^1}(x)||F_{f^3}(x))]-\mathbb{E}_{x\sim P_\mathcal{X}}[TV(F_{f^2}(x)||F_{f^3}(x))]
\end{split}
\end{equation}
where $TV(\cdot||\cdot)$ is total variance distance (TVD).

To prove Theorem~\ref{Theorem 1}, we need the following two lemmas.
\begin{lemma}
\label{Lemma 1} For any $f\in\mathcal{F}$, we have
\begin{equation}
\begin{split}
    \mathbb{E}_{P^{\rm COV}}TV(F_f||\mathcal{U})\leq \mathbb{E}_{P^{\rm SEM}_{\rm test}}TV(F_f||\mathcal{U})+d_\mathcal{F}(P^{\rm COV},P^{\rm SEM}_{\rm test})+\lambda
\end{split}
\end{equation}
where $\lambda$ is a constant independent of $f$. $\mathcal{U}$ is the $K$-classes uniform distribution. $P^{\rm COV}$ is the covariant-shifted OOD distribution. $P^{\rm SEM}_{\rm test}$ is the semantic OOD distribution.
\end{lemma}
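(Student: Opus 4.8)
The plan is to recognize $\mathbb{E}_{P}[TV(F_f\|\mathcal{U})]$ as essentially a disparity term and then apply the definition of $d_{\mathcal{F}}$ together with the triangle inequality for TVD recalled just above the lemma. The only genuine subtlety is that the constant ``uniform predictor'' $\mathcal{U}$ need not be realized by any hypothesis in $\mathcal{F}$, so I will introduce a fixed proxy hypothesis $f^\star\in\mathcal{F}$ whose predicted distribution is as close to $\mathcal{U}$ as possible on both relevant distributions, and absorb the residual approximation error into $\lambda$. Concretely, fix
\[
f^\star \in \arg\min_{f'\in\mathcal{F}}\Big(\mathbb{E}_{P^{\rm COV}}[TV(F_{f'}\|\mathcal{U})]+\mathbb{E}_{P^{\rm SEM}_{\rm test}}[TV(F_{f'}\|\mathcal{U})]\Big),
\]
and define $\lambda := \mathbb{E}_{P^{\rm COV}}[TV(F_{f^\star}\|\mathcal{U})]+\mathbb{E}_{P^{\rm SEM}_{\rm test}}[TV(F_{f^\star}\|\mathcal{U})]$, which depends only on $\mathcal{F}$, $P^{\rm COV}$, $P^{\rm SEM}_{\rm test}$ and in particular not on $f$.

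\textbf{Step 1.} By the triangle inequality for TVD, for every $f\in\mathcal{F}$,
\[
\mathbb{E}_{P^{\rm COV}}[TV(F_f\|\mathcal{U})]\le \mathbb{E}_{P^{\rm COV}}[TV(F_f\|F_{f^\star})]+\mathbb{E}_{P^{\rm COV}}[TV(F_{f^\star}\|\mathcal{U})]={\rm disp}_{P^{\rm COV}}(f,f^\star)+\mathbb{E}_{P^{\rm COV}}[TV(F_{f^\star}\|\mathcal{U})].
\]
\textbf{Step 2.} Since $d_{\mathcal{F}}(P^{\rm COV},P^{\rm SEM}_{\rm test})$ is a supremum over all pairs in $\mathcal{F}\times\mathcal{F}$, it dominates the particular pair $(f,f^\star)$, so
\[
{\rm disp}_{P^{\rm COV}}(f,f^\star)\le {\rm disp}_{P^{\rm SEM}_{\rm test}}(f,f^\star)+d_{\mathcal{F}}(P^{\rm COV},P^{\rm SEM}_{\rm test}).
\]
\textbf{Step 3.} Bound ${\rm disp}_{P^{\rm SEM}_{\rm test}}(f,f^\star)$ back toward the target quantity by a second triangle inequality:
\[
{\rm disp}_{P^{\rm SEM}_{\rm test}}(f,f^\star)=\mathbb{E}_{P^{\rm SEM}_{\rm test}}[TV(F_f\|F_{f^\star})]\le \mathbb{E}_{P^{\rm SEM}_{\rm test}}[TV(F_f\|\mathcal{U})]+\mathbb{E}_{P^{\rm SEM}_{\rm test}}[TV(\mathcal{U}\|F_{f^\star})].
\]
Chaining these three displays and collecting the two $f$-independent expectations into $\lambda$ gives exactly the claimed inequality.

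There is no deep obstacle here beyond two applications of the TVD triangle inequality and unwinding the definition of disparity discrepancy; the one point that requires care is the non-realizability of $\mathcal{U}$, handled via the proxy $f^\star$, and checking that the leftover term $\lambda$ truly does not depend on the hypothesis $f$ being analyzed — which is why $f^\star$ is chosen to minimize the combined distance-to-uniform above (this $\lambda$ is the analogue of the ``ideal joint hypothesis'' error term in standard domain-adaptation bounds). If one additionally wants the tightest statement, one notes that whenever $\mathcal{F}$ contains a hypothesis predicting exactly $\mathcal{U}$ one may take $\lambda=0$.
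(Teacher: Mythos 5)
Your proof is correct and follows essentially the same route as the paper's: the paper likewise introduces the jointly TV-closest-to-uniform hypothesis $f^*$, defines $\lambda$ as its combined expected TV distance to $\mathcal{U}$ over $P^{\rm COV}$ and $P^{\rm SEM}_{\rm test}$, and concludes via two applications of the TVD triangle inequality together with the supremum definition of the disparity discrepancy. Your three-step presentation is simply a cleaner chaining of the same inequalities.
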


\begin{proof}
Let $f^*$ be the hypothesis which jointly minimizes the total variance distance between the predicted distribution $F_f$ with uniform distribution $\mathcal{U}$ taking expectation on $P^{\rm COV}, P^{\rm SEM}_{\rm test}$ , which is to say
\begin{equation}
    f^*:=\underset{f\in\mathcal{F}}{\rm argmin}\{\mathbb{E}_{x\sim{P^{\rm COV}}}[TV(F_{f}(x)||\mathcal{U})]+\mathbb{E}_{x\sim{P^{\rm SEM}_{\rm test}}}[TV(F_{f}(x)||\mathcal{U})]\}.
\end{equation}
Set $\lambda=\mathbb{E}_{x\sim{P^{\rm COV}}}[TV(F_{f^*}(x)||\mathcal{U})]+\mathbb{E}_{x\sim{P^{\rm SEM}_{\rm test}}}[TV(F_{f^*}(x)||\mathcal{U})]$, then by the triangle equality, we have
\begin{equation}
\begin{split}
   \mathbb{E}_{P^{\rm COV}} TV(F_{f}||\mathcal{U})&\leq {\rm disp}_{{P^{\rm COV}}}(f,f^*)+\mathbb{E}_{P^{\rm COV}}TV(F_{f^*}||\mathcal{U})
\\
&\leq\mathbb{E}_{P^{\rm SEM}_{\rm test}}TV(F_{f}||\mathcal{U})-\mathbb{E}_{P^{\rm SEM}_{\rm test}}TV(F_{f}||\mathcal{U})+{\rm disp}_{{P^{\rm COV}}}(f,f^*)+\mathbb{E}_{P^{\rm COV}}TV(F_{f^*}||\mathcal{U})
\\
&\leq\mathbb{E}_{P^{\rm SEM}_{\rm test}}TV(F_{f}||\mathcal{U})+\mathbb{E}_{P^{\rm SEM}_{\rm test}}TV(F_{f^*}||\mathcal{U})
\\
&\ \ -{\rm disp}_{P^{\rm SEM}_{\rm test}}(f,f^*)+{\rm disp}_{P^{\rm COV}}(f,f^*)+\mathbb{E}_{P^{\rm COV}}TV(F_{f^*}||\mathcal{U})
\\
&\leq \mathbb{E}_{P^{\rm SEM}_{\rm test}}TV(F_{f}||\mathcal{U})+d_{\mathcal{F}}(P^{\rm SEM}_{\rm test},P^{\rm COV})+\lambda.
\end{split}
\end{equation}
\end{proof}
Intuitively speaking, Lemma~\ref{Lemma 1} demonstrates that if the classifier $f$ express high overall uncertainty on $P^{\rm SEM}_{\rm test}$ (the predicted distribution $F_f$ is close to uniform distribution), it will also tend to high uncertain prediction on $P^{\rm COV}$ given a limited $d_{\mathcal{F}}(P^{\rm SEM}_{\rm test},P^{\rm COV})$.
\begin{lemma}
\label{Lemma 2} Denote OOD detection loss used for MSP detectors as $\mathcal{L}_{\rm MSP}$, then we have
\begin{equation}
\mathbb{E}_{P_{\rm test}^{\rm SEM}}TV(F_f||\mathcal{U})\leq \mathbb{E}_{P_{\rm test}^{\rm SEM}}\sqrt{\frac{1}{2}(\mathcal{L}_{{\rm MSP}}(f) - \log K)}
\end{equation}
where $TV(\cdot||\cdot)$ is total variance distance (TVD), $\mathcal{U}$ denotes uniform distribution support on $\mathcal{Y}=\{1,2\cdots K\}$. $\mathcal{L}_{\rm MSP}$ is defined in ~\cite{hendrycks2018deep} is the cross-entropy between predicted distribution $F_f(x)$ and uniform distribution $\mathcal{U}$.
\end{lemma}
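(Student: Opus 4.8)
The plan is to reduce the claim to a pointwise inequality in $x$ and then integrate against $P^{\rm SEM}_{\rm test}$. First I would unfold the definition of $\mathcal{L}_{\rm MSP}$ from~\cite{hendrycks2018deep}: for a fixed input $x$ it is the cross-entropy of the uniform distribution $\mathcal{U}$ against the predicted softmax $F_f(x)$, namely $\mathcal{L}_{\rm MSP}(f)(x) = -\tfrac{1}{K}\sum_{k=1}^K \log F_{f,k}(x)$. The key algebraic step is the standard decomposition of cross-entropy into entropy plus KL divergence,
\begin{equation}
\mathcal{L}_{\rm MSP}(f)(x) = H(\mathcal{U}) + {\rm KL}(\mathcal{U}\,\|\,F_f(x)) = \log K + {\rm KL}(\mathcal{U}\,\|\,F_f(x)),
\end{equation}
so that $\mathcal{L}_{\rm MSP}(f)(x) - \log K = {\rm KL}(\mathcal{U}\,\|\,F_f(x)) \ge 0$; in particular the quantity under the square root is nonnegative and the statement is well posed.

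Next I would invoke Pinsker's inequality in the form matching the paper's $\tfrac12$-normalized definition $TV(P\,\|\,Q) = \tfrac12\sum_k |P_k - Q_k|$, i.e. $TV(P\,\|\,Q) \le \sqrt{\tfrac12\,{\rm KL}(P\,\|\,Q)}$ with the KL measured in nats. Applying it with $P = \mathcal{U}$ and $Q = F_f(x)$, and using that total variation distance is symmetric, gives the pointwise bound
\begin{equation}
TV(F_f(x)\,\|\,\mathcal{U}) = TV(\mathcal{U}\,\|\,F_f(x)) \le \sqrt{\tfrac12\,{\rm KL}(\mathcal{U}\,\|\,F_f(x))} = \sqrt{\tfrac12\big(\mathcal{L}_{\rm MSP}(f)(x) - \log K\big)}.
\end{equation}
Taking expectation over $x \sim P^{\rm SEM}_{\rm test}$ of both sides and using monotonicity of the expectation yields exactly the claimed inequality.

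I do not anticipate a substantive obstacle: the argument is essentially a one-line application of Pinsker's inequality after the cross-entropy/KL bookkeeping. The only points that need a little care are (i) confirming the direction of the KL divergence induced by the outlier-exposure loss --- it is ${\rm KL}(\mathcal{U}\,\|\,F_f)$, not ${\rm KL}(F_f\,\|\,\mathcal{U})$, which is precisely why the $-\log K$ shift appears --- and (ii) matching the constant $\tfrac12$ inside the square root to the $\tfrac12$-normalized definition of $TV$ used in the paper; under a different $TV$ convention this constant would change but Pinsker would still close the bound. Equivalently, one could skip Pinsker and argue directly via $\|F_f(x) - \mathcal{U}\|_1 \le \sqrt{2\,{\rm KL}(\mathcal{U}\,\|\,F_f(x))}$, which is the same estimate.
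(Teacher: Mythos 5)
Your proposal is correct and follows essentially the same route as the paper's own proof: decompose the outlier-exposure cross-entropy as $\log K + {\rm KL}(\mathcal{U}\,\|\,F_f)$, apply Pinsker's inequality (${\rm KL}\ge 2\,TV^2$ under the $\tfrac12$-normalized $TV$), rearrange, and take expectation over $P^{\rm SEM}_{\rm test}$. Your write-up is in fact cleaner about the KL direction and the constant than the paper's (whose intermediate rearrangement contains a typographical slip), but there is no substantive difference in the argument.
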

Lemma~\ref{Lemma 2} means that minimizing the OOD detection loss will constrains the predicted distribution to be close to uniform distribution, which is a straightforward result.
\begin{proof}
In $K$-classes classification task, for any sample $\tilde{x}$ drawn from $P_{\mathcal{X}}^{\rm SEM}$, we have
\begin{equation}
    \mathcal{L}_{{\rm OE}}(f(\tilde{x}))= KL(\mathcal{U}||F_f)+H(\mathcal{U}).
\end{equation}
Applying Pinsker's inequality, the following inequality holds
\begin{equation}
    \mathcal{L}_{{\rm OE}}(f(\tilde{x}))= KL(\mathcal{U}||F_f)+H(\mathcal{U})\geq2TV(\mathcal{U}||F_f(\tilde{x}))^2+H(\mathcal{U}).
\end{equation}
Noted that $H(\mathcal{U})=\log K$, we can written above inequality as
\begin{equation}
    \mathcal{L}_{{\rm OE}}(f(\tilde{x}))\leq \frac{1}{2}\sqrt{KL(\mathcal{U}||F_f(\tilde{x}))-\log K}.
\end{equation}
Then, by taking expectation on $P_{\mathcal{X}}^{\rm SEM}$ we can get the result.
\end{proof}
Now we are ready to present the proof of Theorem~\ref{Theorem 1}.
\begin{proof}
By the definition, the generalization error can be written as
\begin{equation}
\begin{split}
        {\rm GError}_{P_{\mathcal{XY}}^{\rm COV}}(f):=&\mathbb{E}_{(x,y)\sim P_{\mathcal{XY}}^{\rm COV}}\mathcal{L}_{\rm CE}(f(x),y)
    \\
=&\mathbb{E}_{(x,y)\sim P_{\mathcal{XY}}^{\rm COV}}KL(p(y|x)||F_f(x))+H(p(y|x))
\end{split}
\end{equation}
where $p(y|x)$ is the target distribution given input $x$ (ground truth).

Applying Bretagnolle–Huber inequality, for any $x$ we have 
\begin{equation}
\begin{split}
        KL(p(y|x)||F_f(x))+1\geq&2\sqrt{KL(p(y|x)||F_f(x))}
        \\
        \geq& 2\sqrt{1-\exp^{-KL(p(y|x)||F_f(x))}}
        \\
        \geq& 2TV(p(y|x)||F_f(x)).
\end{split}
\end{equation}
By the sub-additivity of TVD, we have
\begin{equation}
\begin{split}
        KL(p(y|x)||F_f(x))\geq&2TV(p(y|x)||F_f(x))-1
        \\
        \geq&2[TV(p(y|x)||\mathcal{U})-TV(F_f(x)||\mathcal{U})]-1.
\end{split}
\end{equation}
Taking expectation on $P_{\mathcal{XY}}^{\rm COV}$, we have
\begin{equation}
    \begin{split}
        {\rm GError}_{P_{\mathcal{XY}}^{\rm COV}}(f)
=&\mathbb{E}_{(x,y)\sim P_{\mathcal{XY}}^{\rm COV}}KL(p(y|x)||F_f(x))+H(p(y|x))
\\
\geq&2\mathbb{E}_{(x,y)\sim P_{\mathcal{XY}}^{\rm COV}} [TV(p(y|x)||\mathcal{U})-TV(F_f(x)||\mathcal{U})]-1+H(p(y|x))
    \end{split}
\end{equation}
Applying Lemma~\ref{Lemma 1} and Lemma~\ref{Lemma 2},
\begin{equation}
    \begin{split}
        {\rm GError}_{P_{\mathcal{X}}^{\rm COV}}(f)
=&\mathbb{E}_{P_{\mathcal{XY}}^{\rm COV}}KL[p(y|x)||F_f(x)]+H(p(y|x))
\\
\geq&2\mathbb{E}_{P_{\mathcal{XY}}^{\rm COV}} [TV(p(y|x)||\mathcal{U})-TV(F_f(x)||\mathcal{U})]-1+H(p(y|x))
\\
\geq&2\mathbb{E}_{P_{\mathcal{XY}}^{\rm COV}}TV(p(y|x)||\mathcal{U})-2\mathbb{E}_{P_{\mathcal{XY}}^{\rm COV}}TV(F_f||\mathcal{U})-1+H(p(y|x))
\\
\geq&2\mathbb{E}_{P_{\mathcal{XY}}^{\rm COV}}TV(p(y|x)||\mathcal{U})-2\mathbb{E}_{P_{\mathcal{X}}^{\rm SEM}}TV(F_f||\mathcal{U})-2d_{\mathcal{F}}(P_{\mathcal{XY}}^{\rm COV},P_{\mathcal{X}}^{\rm SEM})-2\lambda-1+H(p(y|x))
\\
\geq&2\mathbb{E}_{P_{\mathcal{XY}}^{\rm COV}}TV(p(y|x)||\mathcal{U})-\mathbb{E}_{P_{\mathcal{X}}^{\rm SEM}}\sqrt{2(\mathcal{L}_{{\rm OE}}(f) - \log K)}
\\& \ -2d_{\mathcal{F}}(P_{\mathcal{X}}^{\rm COV},P_{\mathcal{X}}^{\rm SEM})-2\lambda-1+H(p(y|x))
\\
\geq&-\mathbb{E}_{P_{\mathcal{X}}^{\rm SEM}}\sqrt{2(\mathcal{L}_{{\rm OE}}(f) - \log K)}-2d_{\mathcal{F}}(P_{\mathcal{XY}}^{\rm COV},P_{\mathcal{X}}^{\rm SEM})+C.
    \end{split}
\end{equation}
Given the fact that $P^{\rm COV}_{\mathcal{XY}}$, $p(y|x)$ are both fixed, $H(p(y|x))$ and $TV(p(y|x)||\mathcal{U})$ are constants for each $x$. Finally, we get
\begin{equation}
    {\rm GError}_{P_{\mathcal{XY}}^{\rm COV}}(f)
\geq C-\mathbb{E}_{P_{\mathcal{X}}^{\rm SEM}}\sqrt{2(\mathcal{L}_{{\rm OE}}(f) - \log K)}-2d_{\mathcal{F}}(P_{\mathcal{X}}^{\rm COV},P_{\mathcal{X}}^{\rm SEM}).
\end{equation}
\end{proof}

\section{Experimental Details}

\subsection{Datasets details}
\label{appendix:dataset}
\textbf{ID datasets $P^{\rm ID}$.} ID datasets are chosen following common practice in OOD detection. We use CIFAR-10, CIFAR-100 and ImageNet-200 as $P^{\rm ID}$. ImageNet-200 is a subset of the original ImageNet-1K introduced by~\cite{yang2022openood,zhang2023openood}.

\textbf{Auxiliary OOD datasets $P^{\rm SEM}_{\rm train}$.}
For CIFAR experiments, we use ImageNet-RC and TIN-597 as auxiliary datasets. ImageNet-RC is a down-sampled variant of the ImageNet-1K, which consists of 1000 classes and 1,281,167 images. We also conduct experiments on TIN-597 as an alternative for ImageNet-RC. TIN-597 is introduced by recent work~\cite{zhang2023openood}. The resolutions of ID and auxiliary samples are both $64\times 64$. For ImageNet experiments, we use a subset of ImageNet-1K consisting of 200 classes as ID datasets. The remaining images belong to other 800 classes are utilized as auxiliary datasets. The resolutions of ID and auxiliary images are both $224 \times 224$.

\textbf{OOD detection test datasets $P^{\rm SEM}_{\rm test}$.}
In CIFAR experiments, following standard practice~\cite{liu2020energy}, we use SVHN~\cite{netzer2011reading}, Textures~\cite{cimpoi2014describing}, Places365~\cite{zhou2017places}, iSUN~\cite{pan2015end}, LSUN-C and LSUN-R~\cite{yu15lsun} to evaluate the OOD detection performance. $\circ$ The SVHN test set comprises 26,032 color images of house numbers. $\circ$ Textures (Describable Textures Dataset, DTD) consists of 5,640 images depicting natural textures. $\circ$ Places365 dataset consists scenic images of 365 different categories. Each class consists of 900 images. $\circ$ The iSUN dataset is a subset of the SUN database with 8,925 images. $\circ$ The Large-scale Scene Understanding dataset (LSUN) comprises a testing set with 10,000 images of 10 different scenes. LSUN offers two datasets, LSUN-C and LSUN-R. In LSUN-C, the original high-resolution images are randomly cropped into \(32 \times 32\). Meanwhile, in LSUN-R, the images are resized to \(32 \times 32\). In ImageNet experiments, we follow the settings of~\cite{zhang2023openood}, where OpenImage-O~\cite{wang2022vim}, SSB-hard~\cite{vaze2021open}, Textures~\cite{cimpoi2014describing}, iNaturalist~\cite{van2018inaturalist} and NINCO~\cite{bitterwolf2023ninco} are selected as OOD detection test datasets. $\circ$ OpenImage-O contains 17632 manually filtered images and is 7.8 \( \times \) larger than the ImageNet dataset. $\circ$ SSB-hard is selected from ImageNet-21K. It consists of 49K images and 980 categories. $\circ$ Textures (Describable Textures Dataset, DTD) consists of 5,640 images depicting natural textures. $\circ$ iNaturalist consists of 859000 images from over 5000 different species of plants and animals. $\circ$ NINCO consists with a total of 5879 samples of 64 classes which are non-overlapped with ImageNet-1K.

\textbf{OOD generalization test datasets $P^{\rm COV}$.} Following previous work~\cite{bai2023feed}, we corrupt the original test data with Gaussian noise of zero mean and variance of 5 in the main paper. In appendix, we conduct additional experiments involving CIFAR10-C, CIFAR100-C and ImageNet-C~\cite{hendrycks2018benchmarking} with 15 different types of noise.

\subsection{Implementation details}
\label{appendix:implementation}
\subsubsection{CIFAR experiments.} We use WideResNet-40-10~\cite{zagoruyko2016wide} as the backbone network, which comprises 40 layers. The widen factor is set to 10. We use SGD optimizer to train all methods with dropout strategy. The dropout rate is 0.3. The momentum is set to 0.9 and weight decay is set to 0.0005.

\textbf{Pretraining details.}
The pretrained model is obtained by training WideResNet-40-10 for 200 epochs with an initial learning rate of 0.1. We decay the learning rate by a factor of 0.2 at the 60-th, 120-th, and 160-th epochs. Batch size is set to 128.

\textbf{Finetuning details.}
 $\circ$ For Entropy and EBM, we finetune the pretrained model for 20 epochs with an initial learning rate of 0.001, utilizing a cosine annealing strategy to adjust the learning rate. Following the official implementation, the weight of OOD detection regularization term is set to 0.5 and 0.1 for Entropy and EBM (finetune) respectively. The hyperparameters $m_{\rm ID}$ and $m_{\rm OOD}$ in EBM regularization learning are set to -25 and -7 respectively. The ID batch size is 128 and the OOD batch size is set to 256.  $\circ$ For SCONE, we finetune the pretrained model for 10 epochs with an initial learning rate of 0.0002, utilizing a cosine annealing strategy to adjust the learning rate. The batch size is 32, the OOD batch size is 64. The margin of the OOD detection boundary is set to 1. To be aligned with most previous works in OOD detection and generalization, we assume $P_{\mathcal{X}}^{\rm COV}$ is unavailable during finetuning.  $\circ$ For WOODS, we finetune the pretrained model for 10 epochs with an initial learning rate of 0.0002, utilizing a cosine annealing strategy to adjust the learning rate. The ID batch size is 32, the OOD batch size is 64. Other hyperparameters settings are consistent with SCONE. $\circ$ For DUL, $\alpha_0$ is set to 12. While finetuning on CIFAR10, the $m_{\rm ID}$ and $m_{\rm OOD}$ are set to 10 and 30 respectively. The weight $\lambda,\gamma$ are set to 0.3 and 2. We train for 20 epochs with an initial learning rate of 0.00005, utilizing a cosine annealing strategy to adjust the learning rate. While finetuning on CIFAR100/TIN-597, the $m_{\rm ID}$ and $m_{\rm OOD}$ are set to 10 and 30 respectively. The weights $\lambda,\gamma$ are set to 0.05 and 2 respectively. We finetune for 20 epochs with an initial learning rate of 0.00005, utilizing a cosine annealing strategy to adjust the learning rate. While finetuning on CIFAR100/ImageNet-RC, we set $h_0=0$. The $m_{\rm ID}$ and $m_{\rm OOD}$ are set to -430 and -370 respectively. We train for 30 epochs with an initial learning rate of 0.0001, utilizing a cosine annealing strategy to adjust the learning rate. The weights $\lambda,\gamma$ are set to 0.1 and 1 respectively. For CIFAR-100/ImageNet-RC, we set $\tau=2$ and otherwise $\tau=1$.$\circ$ For DUL\textsuperscript{\dag}, we use Thompson sampling strategy~\cite{ming2022poem} for OOD informativeness mining. The sampling hyperparameters are consistent with that of POEM.

\textbf{Training from scratch details.}
 $\circ$ For POEM, we train from scratch for 200 epochs with an initial learning rate of 0.1, and decay the learning rate by a factor of 0.2 at the $60$-th, $120$-th, and $160$-th epochs. The ID and OOD batch size are set to 128 and 256 respectively. Following the official implementation, the pool of outliers consists of randomly selected 400,000 samples from auxiliary datasets, and only 50,000 samples (same size as the ID training set) are selected for training based on the boundary score.  $\circ$ For DPN, we train for 200 epochs with an initial learning rate of 0.1, and decay the learning rate by a factor of 0.2 at the 60-th, 120-th, and 160-th epochs. The Dirichlet parameters $\mathbf{\alpha}$ are calculated by performing ReLU plus one on the model's outputs, i.e., $\alpha={\rm ReLU}(f(x))+1$. $\alpha_0$ is set to 15 and 12 respectively when training on CIFAR10 and CIFAR100. The auxiliary datasets are ImageNet and TIN-597. The ID and OOD batch size are set to 128 and 256 respectively. When training on CIFAR100/TIN-597, the OOD regularization weight $\lambda$ is set to 0.05. In other cases, $\lambda$ is set to 0.5.

\subsubsection{ImageNet experiments.}
We use ResNet18~\cite{he2016deep} as the backbone network. We use SGD optimizer to train all the models. The momentum is set to 0.9.

\textbf{Pretraining details.}
The pretrained model is obtained by training ResNet18 for 100 epochs with an initial learning rate of 0.1, utilizing a cosine annealing strategy to adjust the learning rate. The weight decay is set to 0.0001. Batch size is set to 64.

\textbf{Finetuning details.}
$\circ$ For Energy regularized learning, we finetune the pretrained model for 10 epochs with an initial learning rate of 0.001, utilizing a cosine annealing strategy to adjust the learning rate. The weight decay is set to 0.0001. Following the official implementation, the weights of OOD detection regularization term are set to 0.1. Specifically, the $m_{\rm ID}$ and $m_{\rm OOD}$ in energy regularization method are set to -25 and -7 respectively. The ID batch size is 64 and the OOD batch size is set to 128. $\circ$ For Entropy, we finetune the pretrained model for 10 epochs with an initial learning rate of 0.001, utilizing a cosine annealing strategy to adjust the learning rate. The weight decay is set to 0.0001. The ID and OOD batch size are set to 64 and 128 respectively. Following the official implementation, the weights of OOD detection regularization term are set to 0.5.$\circ$ For SCONE, we finetune the pretrained model for 10 epochs with an initial learning rate of 0.0002, utilizing a cosine annealing strategy to adjust the learning rate. The weight decay is set to 0.0005. The batch size is 32, the OOD batch size is 64. The margin of the OOD detection boundary is set to 1. To be aligned with most previous works in OOD detection and generalization, we assume $P_{\mathcal{X}}^{\rm COV}$ is unavailable during finetuning.  $\circ$ For WOODS, we finetune the pretrained model for 10 epochs with an initial learning rate of 0.0002, utilizing a cosine annealing strategy to adjust the learning rate. The batch size is 32, the OOD batch size is 64. Other hyperparameters of WOODS are consistent with SCONE. $\circ$ For our DUL, the Dirichlet parameters $\mathbf{\alpha}$ are calculated by performing ReLU and exp operation on the model's outputs, i.e., $\alpha=\exp({\rm ReLU}(f(x)))$. For numerical stability, we measure the distributional uncertainty by the strength of Dirichlet distribution. $\lambda,\gamma$ are set to 0.1 and 4 respectively. We set $\tau=1$ in large-scale experiments.

\textbf{Training from scratch details.}
$\circ$ For DPN, we train ResNet18 for 100 epochs with an initial learning rate of 0.1, utilizing a cosine annealing strategy to adjust the learning rate. The weight decay is set to 0.0001. Batch size is set to 64. The Dirichlet parameters $\mathbf{\alpha}$ are calculated by performing ReLU plus one on the model's outputs, i.e., $\alpha={\rm ReLU}(f(x))+1$. The ID classification loss is set to KL-divergence between predicted class distribution under Dirichlet prior and target distribution because of the inconvenience of directly setting $\alpha_0$. The target distribution is obtained by label smoothing strategy with parameter of 0.01~\cite{malinin2018predictive}. The weight of regularization term applied on OOD auxiliary samples is 1.
 
\begin{algorithm}[ht]
\label{alg-DUL}
\SetKwInOut{Input}{\textbf{Input}}
\SetKwInOut{Output}{\textbf{Output}}
\SetAlgoNoLine
 	\caption{Pseudo Code of Decoupled Uncertainty Learning (DUL)}
 	 	\label{alg:DUL}
 		\Input{ ID data $P^{\text{ID}}$, auxiliary outliers $P^{\text{SEM}}_{\rm train}$, classifier $f_{\theta_0}$ pretrained on $P^{\rm ID}$.}
        \Output{ finetuned classifier $f_{\theta}$}
      Initialize $\theta=\theta_0$;
      
 		\For {each iteration}{
 		    Obtain ID sample $(x, y)$ from $P^{\text{ID}}$ and auxiliary outlier $\tilde{x}$ from $P^{\text{SEM}}_{\rm train}$;
            
            Update model parameters $\theta$ by minimizing objective defined in Eq.~\ref{eq:object-practice};
           }
\end{algorithm}

\section{Additional Results}
\label{appendix:full results}

\subsection{Uncertainty estimation.}
We add Gaussian noise with zero mean and varying variance $\epsilon$ on CIFAR-10 and investigate the estimated distributional uncertainty and overall uncertainty. Distributional uncertainty is measured by differential entropy. It clear that with DUL regularization, the prediction yields a low overall uncertainty and high distributional uncertainty on covariate-shifted data. We conduct experiments on CIFAR-10/ImageNet-RC and CIFAR-10/TIN-597, tabular results are shown in Tab.~\ref{tab:uncertainty-visualization}.

\begin{table}[!ht]
\centering
\small
\vskip 0.15in
\caption{Mean value of estimated uncertainty on CIFAR-10-C with varying severity of Gaussian noise with zero mean and variance of $\epsilon$.}
\label{tab:uncertainty-visualization}
\resizebox{1.0\textwidth}{!}{
\begin{tabular}{ccccccccc}
\toprule
$\mathcal{P}_{\mathcal{X}}^{\rm in}/\mathcal{P}_{\mathcal{X}}^{\rm aux}$&Uncertainty type& DUL & $\epsilon=0.0$ & $\epsilon=2.0$ & $\epsilon=4.0$ & $\epsilon=6.0$ & $\epsilon=8.0$ & $\epsilon=10.0$ \\
\midrule
\multirow{4}{*}{\shortstack{CIFAR-10\\ImageNet-RC}}&\multirow{2}{*}{Distributional uncertainty}& \xmark & -21.33 & -18.94 & -15.62 & -13.71 & -12.91 &-12.85\\
& & \cmark & -21.23 & -19.42 & -16.96 & -15.47 & -14.85 &-14.58\\
\cline{2-9}
& \multirow{2}{*}{Total uncertainty}& \xmark & 0.04 & 0.47 & 1.31 & 1.93 &2.20 & 2.28 \\
& & \cmark & 0.03 & 0.17 & 0.48 & 0.77 & 0.98 &1.14\\
\midrule
\multirow{4}{*}{\shortstack{CIFAR-10\\TIN-597}}&\multirow{2}{*}{Distributional uncertainty}& \xmark & -20.94 & -20.14 & -18.29 & -16.23 & -14.71 &-13.72\\
& & \cmark & -21.48 & -20.68 & -19.21 & -17.70 & -16.57 &-15.78\\
\cline{2-9}
& \multirow{2}{*}{Total uncertainty}& \xmark & 0.06& 0.15 & 0.51 & 1.06 & 1.55 &1.92  \\
& & \cmark & 0.04 & 0.08 & 0.18 & 0.34 & 0.51 &0.68 \\
\bottomrule
\end{tabular}

}
\end{table}

\subsection{Time-consuming comparison.}
We compare the time-cost of proposed DUL to other training-required OOD detection methods in Tab.~\ref{tab:time-comsuming}. We run all the experiments on one single NVIDIA GeForce RTX-3090 GPU. Compared with other OOD detection methods in a finetune manner, DUL does not introduce noticeably extra cost of computation.

\begin{table*}[htbp]
\centering
\caption{Average execution times (s) per epoch of training required OOD detection methods. Compare to other OOD detection methods, DUL does not introduce noticeable computational cost.}
\label{tab:time-comsuming}
\resizebox{1.0\textwidth}{!}{
\begin{tabular}{lcccc}
\toprule
Method & CIFAR-10/ImageNet-RC & CIFAR-10/TIN-597 & CIFAR-100/ImageNet-RC & CIFAR-100/TIN-597 \\
\midrule
EBM (finetune) & 354.87 & 229.12 & 355.70 & 112.47 \\
Entropy & 355.98 & 140.88 & 1108.43 & 120.85 \\
DPN & 842.05 & 75.62 & 841.69 & 80.98 \\
POEM & 615.51 & 483.04 & 825.87 & 500.01 \\
WOODS & 808.77 & 291.37 & 906.67 & 282.50 \\
SCONE & 911.83 & 183.57 & 1169.33 & 160.00 \\
DUL & 329.58 & 101.08 & 925.62 & 100.43 \\
DUL* & 598.68 & 597.82 &544.26 & 595.25 \\
\bottomrule
\end{tabular}}
\end{table*}

\subsection{Full results with standard deviation.}
Full results with standard deviation are presented this section. In CIFAR experiments, we report the mean and standard deviation in 5 random runs. In ImageNet experiments,  we report the mean and standard deviation in 3 random runs to be consist with~\cite{zhang2023openood}. CIFAR experimental results are shown in Tab.~\ref{tab:full-results-with-std}. Large-scale ImageNet results are shown in Tab.~\ref{tab:full-results-with-std on imagenet}.

\begin{table*}[!ht]
\vskip 0.15in
\begin{center}
\caption{OOD detection and generalization performance comparison with standard variance. Marginal \textcolor{mycolor4}{improvement} and \textcolor{mycolor3}{degradation} ($\geq 0.5$) compare to the baseline method MSP are highlighted in blue or red respectively. The \textbf{best} and \underline{second best} results are in bold or underlined. DUL is the only method achieves state-of-art OOD detection performance (mostly the best or second best) without trade-offs on generalization i.e., the value of entire row is either blue or black.}
\label{tab:full-results-with-std}
\center
\resizebox{1.0\textwidth}{!}{
\setlength{\tabcolsep}{3.9mm}
\begin{tabular}{c|c|cc|ccc}
\toprule \multirow{2}{*}{$\mathcal{P}^{\rm ID}/\mathcal{P}^{\rm SEM}_{\rm train}$}   &\multirow{2}{*}{Method}   &  \multicolumn{2}{c|}{ID/OOD generalization}                                                    &\multicolumn{3}{c}{OOD detection}\\
 &    &\text{ID-Acc. $\uparrow$} & \text{OOD-Acc. $\uparrow$} & \text{FPR $\downarrow$}& \text{AUROC $\uparrow$}  &\text{AUPR $\uparrow$}  \\ \midrule \multirow{5}{*}{\shortstack{CIFAR-10\\ \\Only}}
         
      &MSP        &  $96.11^{\pm0.09}$   &  $87.35^{\pm0.58}$                                                  &$41.96^{\pm3.85}$  & $89.28^{\pm1.12}$  & $68.00^{\pm2.19}$                          \\  &EBM (pretrain)      &  $96.11^{\pm0.09}$   &  $87.35^{\pm0.58}$                                                  & $32.45^{\pm3.45}$ & $89.34^{\pm1.21}$   &$75.22^{\pm2.67}$                              \\ &Maxlogits      &  $96.11^{\pm0.09}$   &  $87.35^{\pm0.58}$                                                   &$32.90^{\pm3.51}$ & $89.26^{\pm1.21}$   & $74.47^{\pm2.55}$                            \\ &Mahalanobis     &  $96.11^{\pm0.09}$   &  $87.35^{\pm0.58}$                                                 & $32.53^{\pm9.61}$ & $93.93^{\pm2.68}$   & $74.96^{\pm7.47}$                                                            \\ \midrule \multirow{8}{*} {\shortstack{CIFAR-10\\ \\ImageNet-RC}} 
    
     &Entropy      &   $\underline{96.04}^{\pm0.14}$    & \textcolor{mycolor3}{$72.57^{\pm3.87}$}                                                    &\textcolor{mycolor4}{$6.63^{\pm0.80}$}  &\textcolor{mycolor4}{$\underline{98.72}^{\pm0.14}$}   &\textcolor{mycolor4}{$94.00^{\pm1.00}$}                            \\  
    
     &EBM (pretrain)      &   $\textbf{96.10}^{\pm0.23}$     &\textcolor{mycolor3}{$79.03^{\pm2.53}$}                                                    &\textcolor{mycolor4}{$\underline{3.61}^{\pm0.71}$}  &\textcolor{mycolor4}{$98.39^{\pm0.39}$}   &\textcolor{mycolor4}{$94.88^{\pm0.91}$}                              \\  
    
     &POEM      &   \textcolor{mycolor3}{$94.32^{\pm0.14}$}     &\textcolor{mycolor3}{$78.89^{\pm2.25}$}                                                    &\textcolor{mycolor4}{$\textbf{3.32}^{\pm0.41}$}  &\textcolor{mycolor4}{$\textbf{98.99}^{\pm0.17}$}   & \textcolor{mycolor4}{$\textbf{99.38}^{\pm0.12}$}                 \\ &DPN      &   $95.69^{\pm0.17}$     &\textcolor{mycolor3}{${85.52}^{\pm0.51}$}                                                    &\textcolor{mycolor4}{${4.28}^{\pm0.60}$}  &\textcolor{mycolor4}{${98.53}^{\pm0.17}$}   & \textcolor{mycolor4}{${94.93}^{\pm0.60}$}  \\ &WOODS      &   $96.01^{\pm0.16}$     &\textcolor{mycolor3}{${80.14}^{\pm1.69}$}                                                    &\textcolor{mycolor4}{${7.12}^{\pm1.54}$}  &\textcolor{mycolor4}{${98.40}^{\pm0.21}$}   & \textcolor{mycolor4}{${92.92}^{\pm0.96}$}                            \\         &SCONE      &$95.96^{\pm0.08}$ &\textcolor{mycolor3}{$78.80^{\pm1.57}$}                                                    &\textcolor{mycolor4}{$7.02^{\pm1.06}$}  &\textcolor{mycolor4}{$98.45^{\pm0.12}$}   &\textcolor{mycolor4}{$92.46^{\pm0.93}$}                             \\
       &DUL (ours)      & \cellcolor{gray!20}$96.02^{\pm0.07}$
 &\cellcolor{gray!20}\textcolor{mycolor4}{$\textbf{88.01}^{\pm0.54}$}                                                    &\cellcolor{gray!20}\textcolor{mycolor4}{$5.89^{\pm0.35}$}  &\cellcolor{gray!20}\textcolor{mycolor4}{$98.47^{\pm0.12}$}   &\cellcolor{gray!20}\textcolor{mycolor4}{$92.44^{\pm1.14}$} 
     \\  
     &DUL\textsuperscript{\dag} (ours)      &  \cellcolor{gray!20}$\underline{96.04}^{\pm0.03}$  &\cellcolor{gray!20}{$\underline{87.53}^{\pm0.70}$}                                                    &\cellcolor{gray!20}\textcolor{mycolor4}{${5.99}^{\pm0.25}$}  &\cellcolor{gray!20}\textcolor{mycolor4}{$98.28^{\pm0.11}$}   & \cellcolor{gray!20}\textcolor{mycolor4}{$\underline{98.40}^{\pm0.36}$} 
     \\ \midrule
\multirow{8}{*} {\shortstack{CIFAR-10\\ \\TIN-597}}         
       
     &Entropy       &   $\underline{95.94}^{\pm0.00}$    &\textcolor{mycolor3}{$80.51^{\pm0.68}$}                                                    &\textcolor{mycolor4}{$11.60^{\pm0.82}$}  &\textcolor{mycolor4}{$97.93^{\pm0.15}$}   & \textcolor{mycolor4}{${92.16}^{\pm0.50}$}                             \\  
    
     &EBM (pretrain)       &   \textcolor{mycolor3}{$95.38^{\pm0.13}$}     &\textcolor{mycolor3}{$83.67^{\pm1.41}$}                                                    &\textcolor{mycolor4}{$19.36^{\pm1.92}$}  &\textcolor{mycolor3}{$87.51^{\pm1.53}$}   &\textcolor{mycolor4}{$83.63^{\pm1.73}$}                              \\  
    
     &POEM       &   \textcolor{mycolor3}{$95.44^{\pm0.18}$}     &\textcolor{mycolor3}{$83.17^{\pm1.39}$}                                                    &\textcolor{mycolor4}{$24.34^{\pm2.48}$}  &\textcolor{mycolor3}{$86.83^{\pm1.13}$}   &\textcolor{mycolor4}{$\underline{94.25}^{\pm0.53}$}                 \\  &DPN      &   \textcolor{mycolor3}{$94.39^{\pm0.38}$} &\textcolor{mycolor3}{$79.23^{\pm2.95}$}                                                    &\textcolor{mycolor4}{$17.27^{\pm1.07}$}  &\textcolor{mycolor4}{$94.92^{\pm0.65}$}   &\textcolor{mycolor4}{$87.67^{\pm0.88}$} \\ &WOODS      &   \textcolor{mycolor3}{$95.57^{\pm0.64}$}     &\textcolor{mycolor3}{${83.12}^{\pm1.71}$}                                                    &\textcolor{mycolor4}{$\underline{7.58}^{\pm0.52}$}  &\textcolor{mycolor4}{$\textbf{98.29}^{\pm0.04}$}   & \textcolor{mycolor4}{${93.39}^{\pm0.39}$} 
                          \\  &SCONE      &  \textcolor{mycolor3}{$95.19^{\pm0.77}$} &\textcolor{mycolor3}{${84.68}^{\pm1.44}$}                                                    &\textcolor{mycolor4}{${8.02}^{\pm0.92}$}  &\textcolor{mycolor4}{$\underline{98.22}^{\pm0.08}$}   & \textcolor{mycolor4}{${93.08}^{\pm0.30}$}                           \\ 
     &DUL (ours)      &  \cellcolor{gray!20}$\textbf{96.06}^{\pm0.08}$  &\cellcolor{gray!20}\textcolor{mycolor4}{$\underline{87.93}^{\pm0.62}$}                                                    &\cellcolor{gray!20}\textcolor{mycolor4}{$\textbf{6.87}^{\pm0.82}$}  &\cellcolor{gray!20}\textcolor{mycolor4}{${98.21}^{\pm0.12}$}   & \cellcolor{gray!20}\textcolor{mycolor4}{$91.29^{\pm1.18}$}                             \\  
     &DUL\textsuperscript{\dag} (ours)      &  \cellcolor{gray!20}$\underline{95.94}^{\pm0.09}$  &\cellcolor{gray!20}\textcolor{mycolor4}{$\textbf{88.10}^{\pm0.27}$}                                                    &\cellcolor{gray!20}\textcolor{mycolor4}{${10.34}^{\pm0.34}$}  &\cellcolor{gray!20}\textcolor{mycolor4}{${97.67}^{\pm0.09}$}   & \cellcolor{gray!20}\textcolor{mycolor4}{$\textbf{98.59}^{\pm0.24}$}                               \\ \midrule
 \multirow{5}{*} { \shortstack{CIFAR-100\\ \\Only}}
     &MSP    &   $80.99^{\pm0.16}$   &   $55.95^{\pm1.38}$                                                 & $74.63^{\pm2.43}$ & $80.19^{\pm1.65}$  & $42.59^{\pm2.79}$                          \\  &EBM (pretrain)      &   $80.99^{\pm0.16}$   &   $55.95^{\pm1.38}$                                                   &$67.42^{\pm4.35}$  &$82.67^{\pm1.82}$   &$49.35^{\pm4.00}$                              \\ 
    
     &Maxlogits      & $80.99^{\pm0.16}$   &   $55.95^{\pm1.38}$                                                    &$69.32^{\pm3.97}$  &$82.30^{\pm1.79}$   & $47.60^{\pm3.68}$                             \\  &Mahalanobis      & $80.99^{\pm0.16}$   &   $55.95^{\pm1.38}$                                                     &$61.51^{\pm3.62}$  &$85.97^{\pm1.22}$   & $56.10^{\pm3.22}$                             \\ \midrule \multirow{8}{*} { \shortstack{CIFAR-100\\ \\ImageNet-RC}} &Entropy      &  \textcolor{mycolor3}{$80.21^{\pm0.09}$} &\textcolor{mycolor3}{$45.48^{\pm0.78}$}                                                    &\textcolor{mycolor4}{$22.29^{\pm1.32}$}  &\textcolor{mycolor4}{$95.33^{\pm0.28}$}   &\textcolor{mycolor4}{$82.34^{\pm1.11}$}                              \\   
    
     &EBM (finetune)      &   $80.53^{\pm0.22}$  &\textcolor{mycolor3}{$48.14^{\pm0.33}$}                                                    &\textcolor{mycolor4}{$13.47^{\pm0.43}$}  &\textcolor{mycolor4}{$\underline{96.78}^{\pm0.13}$}   &\textcolor{mycolor4}{${87.84}^{\pm0.86}$}                              \\  
    
     &POEM      &   \textcolor{mycolor3}{$78.15^{\pm0.18}$} &\textcolor{mycolor3}{$42.18^{\pm2.34}$}                                                    &\textcolor{mycolor4}{$\textbf{9.89}^{\pm0.36}$}  &\textcolor{mycolor4}{$\textbf{97.79}^{\pm0.12}$}   &\textcolor{mycolor4}{$\textbf{98.40}^{\pm0.08}$}                          \\  &DPN      &   \textcolor{mycolor3}{$78.90^{\pm0.25}$} &\textcolor{mycolor3}{$50.14^{\pm0.36}$}                              &\textcolor{mycolor4}{$18.36^{\pm0.82}$}  &\textcolor{mycolor4}{${95.42}^{\pm0.17}$}   &\textcolor{mycolor4}{~~$74.45^{\pm18.40}$}\\ &WOODS      &   $80.69^{\pm0.30}$     &\textcolor{mycolor3}{${54.38}^{\pm4.42}$}                                                    &\textcolor{mycolor4}{~~$38.15^{\pm12.91}$}  &\textcolor{mycolor4}{${92.01}^{\pm3.23}$}   & \textcolor{mycolor4}{${71.79}^{\pm7.98}$} 
                          \\       &SCONE      &  ${80.80}^{\pm0.30}$ &\textcolor{mycolor4}{$\textbf{56.73}^{\pm4.66}$}                                                    &\textcolor{mycolor4}{~~$47.60^{\pm14.73}$}  &\textcolor{mycolor4}{$89.61^{\pm3.75}$}   &\textcolor{mycolor4}{$65.29^{\pm9.66}$}                              \\ 
    
      &DUL (ours)    &    \cellcolor{gray!20}$\textbf{81.30}^{\pm0.19}$ &\cellcolor{gray!20}$\underline{56.27}^{\pm1.82}$                                                    &\cellcolor{gray!20}\textcolor{mycolor4}{${12.49}^{\pm0.22}$}  &\cellcolor{gray!20}\textcolor{mycolor4}{$95.24^{\pm0.09}$}   &\cellcolor{gray!20}\textcolor{mycolor4}{$86.72^{\pm0.76}$} 
      \\  
    &DUL\textsuperscript{\dag} (ours)      &  \cellcolor{gray!20}$\underline{81.22}^{\pm0.23}$  &\cellcolor{gray!20}{${56.07}^{\pm0.54}$}                                                    &\cellcolor{gray!20}\textcolor{mycolor4}{$\underline{11.75}^{\pm1.69}$}  &\cellcolor{gray!20}\textcolor{mycolor4}{$95.33^{\pm0.79}$}   & \cellcolor{gray!20}\textcolor{mycolor4}{$\underline{96.45}^{\pm0.42}$}\\  \midrule
\multirow{8}{*}{ \shortstack{CIFAR-100\\ \\TIN-597}}  &Entropy      &  \textcolor{mycolor3}{${80.15^{\pm0.17}}$} &\textcolor{mycolor3}{$46.25^{\pm1.42}$}                                                    &\textcolor{mycolor4}{$26.88^{\pm2.06}$}  &\textcolor{mycolor4}{${93.50^{\pm0.36}}$}   &\textcolor{mycolor4}{$79.81^{\pm1.31}$}                             \\  
    
    &EBM (finetune)      &   \textcolor{mycolor3}{$79.94^{\pm0.27}$}  &\textcolor{mycolor3}{$50.00^{\pm0.93}$}                                                    &\textcolor{mycolor4}{$26.87^{\pm1.15}$}  &\textcolor{mycolor4}{$91.68^{\pm0.45}$}   & \textcolor{mycolor4}{$80.08^{\pm0.76}$}                             \\  &POEM      &   \textcolor{mycolor3}{$78.68^{\pm0.13}$} &\textcolor{mycolor3}{$52.53^{\pm1.06}$}                                                    &\textcolor{mycolor4}{$32.71^{\pm0.96}$}  &\textcolor{mycolor4}{$91.30^{\pm0.68}$}   &\textcolor{mycolor4}{$\underline{94.65}^{\pm0.49}$}                          \\ &DPN      &   \textcolor{mycolor3}{$78.44^{\pm0.22}$} &\textcolor{mycolor3}{$47.67^{\pm0.28}$}                              &\textcolor{mycolor4}{${25.02}^{\pm2.19}$}  &\textcolor{mycolor4}{$\underline{93.55}^{\pm0.19}$}   &\textcolor{mycolor4}{${81.63}^{\pm1.27}$} \\ &WOODS      &   \textcolor{mycolor3}{$79.26^{\pm2.28}$}     &\textcolor{mycolor3}{${53.13}^{\pm2.97}$}                                                    &\textcolor{mycolor4}{${36.71}^{\pm7.92}$}  &\textcolor{mycolor4}{${92.15}^{\pm2.35}$}   & \textcolor{mycolor4}{${73.42}^{\pm4.72}$} 
                          \\  &SCONE      & \textcolor{mycolor3} {$79.53^{\pm1.82}$} &\textcolor{mycolor3}{${52.70}^{\pm0.96}$}                                                    &\textcolor{mycolor4}{$35.60^{\pm9.50}$}  &\textcolor{mycolor4}{$92.47^{\pm2.19}$}   &\textcolor{mycolor4}{$73.58^{\pm5.21}$}                              \\ 
    
      &DUL (ours)      &  \cellcolor{gray!20}$\textbf{80.85}^{\pm0.24}$ &\cellcolor{gray!20}$\underline{56.19}^{\pm1.53}$                                                    &\cellcolor{gray!20}\textcolor{mycolor4}{$\underline{23.32}^{\pm1.11}$}  &\cellcolor{gray!20}\textcolor{mycolor4}{$\textbf{94.48}^{\pm0.35}$}   &\cellcolor{gray!20}\textcolor{mycolor4}{$80.82^{\pm1.62}$}  
      \\  
     &DUL\textsuperscript{\dag} (ours)      &  \cellcolor{gray!20}$\underline{80.50}^{\pm0.25}$  &\cellcolor{gray!20}{$\textbf{56.22}^{\pm1.29}$}                                                    &\cellcolor{gray!20}\textcolor{mycolor4}{$\textbf{22.75}^{\pm0.88}$}  &\cellcolor{gray!20}\textcolor{mycolor4}{$90.88^{\pm0.28}$}   & \cellcolor{gray!20}\textcolor{mycolor4}{$\textbf{96.33}^{\pm0.09}$} \\
    
       \bottomrule
\end{tabular}}
\end{center}
\end{table*}

\begin{table*}[!htbp]
\vskip 0.15in
\begin{center}
\caption{OOD detection and generalization performance comparison with standard variance. Substantially \textcolor{mycolor4}{improvement} and \textcolor{mycolor3}{degradation} ($\geq 0.5$) compare to baseline method w.r.t. MSP are highlighted in blue or red respectively. The \textbf{best} and \underline{second best} results are in bold or underlined. Similar with CIFAR experiments, DUL establishes strong OOD detection performance (always the best or second best) without degraded generalization i.e., the entire row is either blue or black.}
\vskip 0.15in
\label{tab:full-results-with-std on imagenet}
\center
\resizebox{1.0\textwidth}{!}{
\setlength{\tabcolsep}{3.9mm}
\begin{tabular}{c|c|cc|ccc}
\toprule \multirow{2}{*} {$\mathcal{P}^{\rm ID}/\mathcal{P}^{\rm SEM}_{\rm train}$}   &\multirow{2}{*} {Method}    &  \multicolumn{2}{c|}{ID/OOD Generalization}                                                    &\multicolumn{3}{c}{OOD Detection}                              \\
 &   &\text{ID-Acc. $\uparrow$} & \text{OOD-Acc. $\uparrow$} & \text{FPR $\downarrow$}& \text{AUROC $\uparrow$}  &\text{AUPR $\uparrow$} \\ \midrule 
\multirow{8}{*} {\shortstack{ImageNet-200\\ \\ImageNet-800}}         &MSP        &  $\underline{85.15}^{{\pm0.33}}$   &  $74.84^{{\pm0.47}}$                                                  &$58.23^{{\pm1.54}}$  & $86.98^{{\pm0.24}}$  & $82.27^{{\pm0.32}}$                          \\  &EBM (pretrain)      &  $\underline{85.15}^{{\pm0.33}}$&  $74.84^{{\pm0.47}}$                                                  & $51.94^{{\pm0.82}}$ & $88.18^{{\pm0.11}}$   &$84.75^{{\pm0.08}}$                              \\ &Maxlogits      &  $\underline{85.15}^{{\pm0.33}}$&  $74.84^{{\pm0.47}}$                                                  & $\underline{51.62}^{{\pm0.20}}$ & $88.30^{{\pm0.09}}$   &$84.71^{{\pm0.07}}$                              \\ \cline{2-7} 
    
     &Entropy      &   {${84.92}^{\pm0.30}$}    &${74.75}^{{\pm0.52}}$                                                    &\textcolor{mycolor4}{${53.62}^{{\pm0.76}}$}  &\textcolor{mycolor4}{$\underline{89.05}^{{\pm0.01}}$}   &\textcolor{mycolor4}{$\underline{85.02}^{{\pm0.08}}$}                              \\  
    
     &EBM (finetune)      &   \textcolor{mycolor3}{$84.14^{{\pm0.11}}$}     &\textcolor{mycolor3}{$73.31^{{\pm0.57}}$}                                                    &\textcolor{mycolor3}{$59.73^{{\pm0.83}}$}  &\textcolor{mycolor4}{$87.54^{{\pm0.03}}$}     &\textcolor{mycolor4}{${82.81}^{{\pm0.16}}$}                                      \\  
    
     &DPN      &   $84.87^{{\pm0.30}}$     &$74.40^{{\pm0.90}}$                                                    &\textcolor{mycolor3}{$63.84^{{\pm0.70}}$}  &$87.18^{{\pm0.18}}$   &\textcolor{mycolor3}{$80.69^{{\pm0.35}}$}  \\  
    
     &WOODS      &   ${84.99}^{{\pm0.62}}$     &$\underline{74.98}^{{\pm0.46}}$                                                    &\textcolor{mycolor4}{${51.71}^{{\pm2.84}}$}  &\textcolor{mycolor4}{$88.30^{{\pm0.56}}$}   &\textcolor{mycolor4}{${84.80}^{{\pm0.98}}$}\\  &SCONE      &  ${84.93}^{{\pm0.71}}$ &${74.91}^{{\pm0.49}}$                                                    &\textcolor{mycolor4}{${52.52}^{{\pm3.54}}$}  &\textcolor{mycolor4}{${88.19}^{{\pm0.41}}$}   &\textcolor{mycolor4}{${84.50}^{{\pm1.08}}$}                              \\
     &DUL (ours)      & \cellcolor{gray!20}\textcolor{mycolor4}{$\textbf{85.65}^{\pm0.07}$}  &\cellcolor{gray!20}\textcolor{mycolor4}{$\textbf{75.59}^{\pm0.12}$}                                                    &\cellcolor{gray!20}\textcolor{mycolor4}{$\textbf{49.14}^{\pm0.13}$}  &\cellcolor{gray!20}\textcolor{mycolor4}{$\textbf{89.27}^{\pm0.03}$}   &\cellcolor{gray!20}\textcolor{mycolor4}{$\textbf{85.62}^{\pm0.03}$}                              \\  
       \bottomrule
\end{tabular}}
\end{center}
\end{table*}

\begin{table}[htbp]
\centering
\caption{OOD detection results of DUL on each individual OOD detection test dataset.}
\label{tab:ood detection performance on each OOD dataset}
\resizebox{\textwidth}{!}{
\begin{tabular}{c|c|cc|cc|cc|cc|cc|cc}
\toprule
  
 \multirow{2}{*}{$\mathcal{P}^{\rm ID}/\mathcal{P}^{\rm SEM}_{\rm train}$}& & \multicolumn{2}{c}{LSUN-crop} & \multicolumn{2}{c}{Places365} & \multicolumn{2}{c}{LSUN-resize} & \multicolumn{2}{c}{iSUN} & \multicolumn{2}{c}{Texture} & \multicolumn{2}{c}{SVHN} \\ & Method & FPR↓ & AUROC↑ & FPR↓ & AUROC↑ & FPR↓ & AUROC↑ & FPR↓ & AUROC↑ & FPR↓ & AUROC↑ & FPR↓ & AUROC↑  \\
\midrule
\multirow{2}{*}{\shortstack{CIFAR-10\\ \\ImageNet-RC}} &DUL &6.75  &98.75 &15.55  &96.34  &0.00 &99.63  &0.00  &99.57  &3.20  &98.89  &8.35 &98.24 \\&DUL\textsuperscript{\dag}&12.79  &98.08 &14.99  &96.11  &0.00 &99.49  &0.00  &99.45  &1.72  &98.79  &4.84 &98.47 \\\midrule
\multirow{2}{*}{\shortstack{CIFAR-10\\ \\TIN-597}} &DUL&0.90  &99.47 &22.40  &95.21  &0.10 &99.52  &0.30  &99.45  &8.00  &97.94  &5.90 &98.52 \\&DUL\textsuperscript{\dag} &3.48  &99.25 &32.81  &91.48  &0.00 &99.78  &0.00  &99.78  &13.97  &97.15  &9.69 &98.28 \\\midrule
\multirow{2}{*}{\shortstack{CIFAR-100\\ \\ImageNet-RC}}&DUL &44.65  &83.73 &21.05  &79.30  &0.00 &99.63  &0.00  &99.61  &3.05  &98.53  &5.00 &98.15 \\&DUL\textsuperscript{\dag} &47.75  &81.29 &14.92  &95.08  &0.00 &99.58  &0.00  &99.46  &1.70  &98.70  &4.44 &98.02 \\\midrule
\multirow{2}{*}{\shortstack{CIFAR-100\\ \\TIN-597}}&DUL &7.05  &98.69 &65.35  &83.60  &3.80 &99.03  &3.50  &98.98  &34.45  &92.21  &21.15 &95.81 \\&DUL\textsuperscript{\dag} &6.25  &98.55 &78.95  &64.09  &0.02 &99.84  &0.00  &99.83  &39.79  &85.02  &5.61 &98.51 \\
\bottomrule
\end{tabular}
}
\end{table}

\begin{table}[htbp]
\centering
\caption{OOD detection results of DUL on each OOD detection test dataset.}
\label{tab:ood detection performance on each OOD dataset imagenet}
\resizebox{\textwidth}{!}{
\begin{tabular}{cccccccccccc}
\toprule
\multirow{2}{*}{$\mathcal{P}^{\rm ID}/\mathcal{P}^{\rm SEM}_{\rm train}$} &\multirow{2}{*}{Method} & \multicolumn{2}{c}{OpenImage-O} & \multicolumn{2}{c}{SSB-hard} & \multicolumn{2}{c}{Textures} & \multicolumn{2}{c}{iNaturalist} & \multicolumn{2}{c}{NINCO}  \\ 
\cmidrule(r){3-4} \cmidrule(r){5-6} \cmidrule(r){7-8} \cmidrule(r){9-10} \cmidrule(r){11-12}  
 & & FPR↓ & AUROC↑ & FPR↓ & AUROC↑ & FPR↓ & AUROC↑ & FPR↓ & AUROC↑ & FPR↓ & AUROC↑  \\
\midrule
\multirow{1}{*}{\shortstack{ImageNet-200/800}} &DUL &49.68  &91.31 &72.40  &80.60  &30.76 &92.98  &33.21  &94.76  &59.92  &86.61   \\
\bottomrule
\end{tabular}
}
\end{table}

\begin{table}[htbp]
\centering
\caption{Classification error rate comparison on CIFAR10-C. ID dataset is CIFAR10.}
\label{tab:ood generalization cifar10-C}
\resizebox{\textwidth}{!}{
\begin{tabular}{c|c|ccc|cccc|cccc|cccc|c}
\toprule
  & & \multicolumn{3}{c}{Noise} & \multicolumn{4}{c}{Blur} & \multicolumn{4}{c}{Weather} & \multicolumn{4}{c}{Digital} & \multicolumn{1}{c}{} \\  
 Method &$\mathcal{P}^{\rm SEM}_{\rm train}$ &  Gauss. & Shot & Impul. & Defoc. & Glass & Motion & Zoom & Snow & Frost & Fog & Brit. & Contr. &Elastic &Pixel &JPEG &Avg.  \\
\midrule
MSP &\multirow{4}{*}{None} & 53.35 &39.98 & 44.07 & 15.61  &42.35  & 19.33 & 19.83 & 14.39 & 18.02 &9.61 &5.09& 18.19&13.45 &22.71&18.61&23.64\\EBM (pretrain) & & 53.35 &39.98 & 44.07 & 15.61  &42.35  & 19.33 & 19.83 & 14.39 & 18.02 &9.61 &5.09& 18.19&13.45 &22.71&18.61&23.64 \\Maxlogits & & 53.35 &39.98 & 44.07 & 15.61  &42.35  & 19.33 & 19.83 & 14.39 & 18.02 &9.61 &5.09& 18.19&13.45 &22.71&18.61&23.64  \\Mahalanobis  & & 53.35 &39.98 & 44.07 & 15.61  &42.35  & 19.33 & 19.83 & 14.39 & 18.02 &9.61 &5.09& 18.19&13.45 &22.71&18.61&23.64  \\ \midrule Entropy &\multirow{7}{*}{ImageNet-RC} &  67.20&53.99&70.93&17.09&81.99&20.00&22.17&22.41&31.78&11.43&5.82&16.99&14.28&28.92&21.37&32.42 \\EBM (finetune) & &62.49&50.00&73.01&21.01&73.93&20.06&25.80&21.02&27.00&11.55&5.60&15.08&15.15&31.46&22.69&31.72 \\DPN & &50.19&38.80&59.81&16.98&52.81&19.68&22.24&18.25&20.11&11.10&5.74&18.01&14.41&24.30&18.76&26.08 \\POEM & &47.52&38.16&63.39&22.64&67.37&24.12&28.43&23.44&27.33&14.30&7.65&19.02&18.37&31.87&22.23&30.40\\WOODS & &62.36&49.09&60.77&15.56&74.37&19.05&20.16&18.79&25.41&9.84&5.45&16.76&14.16&24.07&20.45&29.09\\SCONE& &63.29&50.01&61.96&15.61&77.16&18.65&20.24&19.77&26.43&9.82&5.54&16.88&14.09&24.46&20.64&29.64 \\DUL (Ours) & &53.46&40.11&43.71&15.62&42.77&19.49&19.89&14.27&18.07&9.82&5.06&18.49&13.56&22.65&18.69&23.71 \\\midrule Entropy &\multirow{7}{*}{TIN-597} &65.68&52.08&56.49&18.68&51.96&22.73&24.43&18.57&24.64&10.56&5.51&16.38&15.88&25.19&41.25&30.00\\EBM (finetune) & &58.95&44.89&52.91&18.30&47.25&23.99&24.33&17.70&22.18&12.23&6.07&19.78&17.11&26.85&51.31&29.59 \\DPN & &48.07&38.05&42.56&22.54&47.39&27.29&30.41&20.46&26.17&14.53&7.22&24.54&18.33&27.40&26.23&28.08 \\POEM & &51.06&39.56&47.20&16.08&48.97&20.23&21.40&17.36&22.15&11.29&5.94&18.08&14.87&22.08&31.74&25.87 \\WOODS & &60.01&46.74&51.26&18.04&49.83&20.95&23.17&17.34&22.66&10.02&5.69&16.02&16.32&23.55&25.92&27.17 \\SCONE & &58.20&44.92&50.36&17.14&48.47&20.64&22.80&17.05&21.15&9.87&6.07&17.24&16.17&24.89&25.53&26.70 \\DUL (Ours)& &54.56&40.84&44.18&15.67&43.61&19.93&19.74&14.49&18.34&9.84&5.18&19.24&13.95&22.91&18.66&24.08 \\
\bottomrule
\end{tabular}
}
\end{table}

\begin{table}[htbp]
\centering
\caption{Classification error rate comparison on CIFAR100-C. ID dataset is CIFAR100.}
\label{tab:ood generalization cifar100-C}
\resizebox{\textwidth}{!}{
\begin{tabular}{c|c|ccc|cccc|cccc|cccc|c}
\toprule
  & & \multicolumn{3}{c}{Noise} & \multicolumn{4}{c}{Blur} & \multicolumn{4}{c}{Weather} & \multicolumn{4}{c}{Digital} & \multicolumn{1}{c}{} \\  
 Method &$\mathcal{P}^{\rm SEM}_{\rm train}$ &  Gauss. & Shot & Impul. & Defoc. & Glass & Motion & Zoom & Snow & Frost & Fog & Brit. & Contr. &Elastic &Pixel &JPEG &Avg.  \\
\midrule
MSP &\multirow{4}{*}{None} &76.98&67.70&74.51&36.08&78.49&40.69&41.39&40.52&46.66&31.44&22.59&40.29&35.31&42.83&44.93&48.03 \\EBM (pretrain) & &76.98&67.70&74.51&36.08&78.49&40.69&41.39&40.52&46.66&31.44&22.59&40.29&35.31&42.83&44.93&48.03 \\Maxlogits &&76.98&67.70&74.51&36.08&78.49&40.69&41.39&40.52&46.66&31.44&22.59&40.29&35.31&42.83&44.93&48.03 \\Mahalanobis & &76.98&67.70&74.51&36.08&78.49&40.69&41.39&40.52&46.66&31.44&22.59&40.29&35.31&42.83&44.93&48.03 \\ \midrule Entropy &\multirow{7}{*}{ImageNet-RC} &81.69&72.89&86.99&38.19&88.94&42.20&44.62&45.50&53.90&33.14&24.40&39.77&37.19&47.62&48.82&52.39 \\EBM (finetune) & &82.33&73.51&90.58&39.20&90.71&41.43&44.40&47.29&54.88&34.26&24.72&39.15&37.65&51.47&49.29&53.39\\DPN & &77.60&68.59&83.59&39.74&86.77&43.33&44.91&47.01&53.69&36.37&26.11&43.45&37.56&47.48&44.47&52.04 \\POEM & &83.65&76.63&88.02&44.09&90.05&44.29&46.44&52.80&60.84&39.49&28.90&43.35&42.49&56.06&54.50&56.77 \\WOODS & &77.51&68.48&77.47&36.38&80.62&40.92&41.93&41.27&47.08&31.36&23.30&39.74&36.21&42.87&46.16&48.75 \\SCONE & &76.43&67.30&75.32&36.23&77.00&40.92&41.59&40.22&45.53&31.35&23.09&40.10&35.87&42.42&45.42&47.92\\DUL (Ours) & &77.23&67.95&75.13&35.83&78.52&39.74&40.64&39.76&46.13&30.96&22.43&39.17&34.81&43.03&44.64&47.73 \\\midrule Entropy &\multirow{7}{*}{TIN-597} &83.34&75.44&79.77&38.02&82.74&42.22&44.17&44.52&53.36&32.80&23.67&38.00&37.85&44.55&62.91&52.22\\EBM (finetune) & &81.24&72.78&78.14&37.55&80.00&42.94&43.64&43.70&51.11&33.52&23.92&40.19&38.13&45.05&73.27&52.34\\DPN & &81.81&72.87&79.09&39.61&81.25&44.66&46.21&45.42&52.40&34.28&25.64&40.89&39.15&47.66&76.40&53.82\\POEM & &78.88&70.32&74.68&38.59&77.95&43.41&44.27&43.00&50.22&34.64&25.37&43.09&37.28&45.43&81.55&52.58\\WOODS & &81.14&72.39&76.49&38.50&79.00&43.23&44.59&42.37&49.21&33.03&24.32&40.23&39.01&41.19&47.80&50.17 \\SCONE & &80.90&72.30&77.19&37.91&78.70&42.36&43.99&42.81&49.49&32.19&24.00&39.26&37.96&41.80&48.34&49.95\\DUL (Ours) & &77.01&67.67&74.16&36.16&78.35&40.79&41.32&40.17&46.33&31.42&22.87&40.79&35.34&42.74&45.05&48.01 \\
\bottomrule
\end{tabular}
}
\end{table}

\begin{table}[htbp]
\centering
\caption{Classification error rate comparison on ImageNet-C. Here we test compared methods on a subset of the original ImageNet-C consisting of 200 classes. ID dataset is ImageNet-200.}
\label{tab:ood generalization ImageNet-C}
\resizebox{\textwidth}{!}{
\begin{tabular}{c|c|ccc|cccc|cccc|cccc|c}
\toprule
  & & \multicolumn{3}{c}{Noise} & \multicolumn{4}{c}{Blur} & \multicolumn{4}{c}{Weather} & \multicolumn{4}{c}{Digital} & \multicolumn{1}{c}{} \\  
 Method &$\mathcal{P}^{\rm SEM}_{\rm train}$ &  Gauss. & Shot & Impul. & Defoc. & Glass & Motion & Zoom & Snow & Frost & Fog & Brit. & Contr. &Elastic &Pixel &JPEG &Avg.  \\
\midrule
MSP &\multirow{4}{*}{None}&52.2&70.7&715&56.0&55.5&52.6&54.1&67.3&67.0&63.5&56.5&53.3&46.3&50.3&51.9&57.9 \\EBM (pretrain)& &52.2&70.7&715&56.0&55.5&52.6&54.1&67.3&67.0&63.5&56.5&53.3&46.3&50.3&51.9&57.9\\Maxlogits & &52.2&70.7&715&56.0&55.5&52.6&54.1&67.3&67.0&63.5&56.5&53.3&46.3&50.3&51.9&57.9 \\ \midrule Entropy &\multirow{6}{*}{ImageNet-800} &51.3&71.2&71.7&54.4&54.6&51.8&53.2&66.9&66.2&62.7&55.8&51.4&45.1&49.7&51.1&57.1 \\EBM (finetune) & &52.9&72.2&72.8&56.0&56.2&53.5&54.1&67.8&67.4&64.0&57.0&52.2&46.5&51.4&52.9&58.5 \\DPN & &51.9&69.2&69.6&56.5&55.0&52.0&53.1&65.5&65.3&62.3&55.5&54.5&46.0&49.7&51.4&57.2 \\WOODS & &51.4&69.4&70.0&55.2&54.8&51.9&52.9&66.2&66.0&62.5&55.6&52.6&45.6&49.7&51.3&57.0 \\SCONE & &51.6&69.4&70.0&55.4&55.0&52.1&53.1&66.3&66.0&62.6&55.7&53.0&45.8&49.9&51.4&57.1 \\DUL (Ours) & &51.1&69.1&70.5&55.1&54.5&51.6&52.4&66.2&65.9&62.6&55.7&53.0&45.4&49.4&50.9&56.9 \\
\bottomrule
\end{tabular}
}
\end{table}

\begin{table}[!htbp]
\centering
\caption{Comprehensive comparison involving 15 different types of corruption from commonly-used domain adaption benchmark [52]. Substantial ($\geq0.5$) \textcolor{mycolor4}{improvement} and \textcolor{mycolor3}{degradation} compared to the baseline MSP [6] are highlighted in \textcolor{mycolor4}{blue} or \textcolor{mycolor3}{red} respectively. DUL is the only method that achieves SOTA OOD detection performance without sacrificing generalization i.e., the value of the entire row is almost black or blue. The \textbf{best} or \underline{second best} results are highlighted in bold or underlined. MD is the shorthand of Mahalanobis.}
\vskip 4pt
\label{tab:ood generalization cifar10-C}
\resizebox{\textwidth}{!}{
\setlength{\tabcolsep}{0.75mm}
\begin{tabular}{c|c|c|ccccccccccccccc|c|cc}
\toprule
 \multicolumn{1}{c}{} &\multicolumn{1}{c}{} &\multicolumn{1}{c}{} & \multicolumn{16}{c}{OOD generalization (Error rate $\downarrow$)} & \multicolumn{2}{c}{OOD detection} \\  
  Method &$\mathcal{P}^{\rm ID}_{\rm train}$  &$\mathcal{P}^{\rm SEM}_{\rm train}$ &  Gauss. & Shot & Impul. & Defoc. & Glass & Motion & Zoom & Snow & Frost & Fog & Brit. & Contr. &Elast. &Pixel &JPEG &Avg. &FPR$\downarrow$ &AUC$\uparrow$\\
\midrule
MSP &\multirow{18}{*}{\rotatebox{90}{CIFAR-10}} &\multirow{4}{*}{\rotatebox{90}{None}} & 77.0 & 67.7 & 74.5 & 36.1 & 78.5 & 40.7 & 41.4 & 40.5 & 46.7 & 31.4 & 22.6 & 40.3 & 35.3 & 42.8 & 44.9 & 48.0 &42.0 &89.3\\EBM  & & & 77.0 & 67.7 & 74.5 & 36.1 & 78.5 & 40.7 & 41.4 & 40.5 & 46.7 & 31.4 & 22.6 & 40.3 & 35.3 & 42.8 & 44.9 & 48.0&32.5 &89.3\\Maxlogits & & &77.0 & 67.7 & 74.5 & 36.1 & 78.5 & 40.7 & 41.4 & 40.5 & 46.7 & 31.4 & 22.6 & 40.3 & 35.3 & 42.8 & 44.9 & 48.0 &32.9 &89.3 \\ MD & & & 77.0 & 67.7 & 74.5 & 36.1 & 78.5 & 40.7 & 41.4 & 40.5 & 46.7 & 31.4 & 22.6 & 40.3 & 35.3 & 42.8 & 44.9 & 48.0 &32.5 &93.9 \\
\cline{1-1} \cline{3-21} Entropy& &\multirow{7}{*}{\rotatebox{90}{ImageNet-RC}} &\textcolor{mycolor3}{81.7} & \textcolor{mycolor3}{72.9} & \textcolor{mycolor3}{87.0} & \textcolor{mycolor3}{38.2} & \textcolor{mycolor3}{88.9} & \textcolor{mycolor3}{42.2} & \textcolor{mycolor3}{44.6} & \textcolor{mycolor3}{45.5} & \textcolor{mycolor3}{53.9} & \textcolor{mycolor3}{33.1} & \textcolor{mycolor3}{24.4} & \textcolor{mycolor4}{39.8} & \textcolor{mycolor3}{37.2} & \textcolor{mycolor3}{47.6} & \textcolor{mycolor3}{48.8} & \textcolor{mycolor3}{52.4} &\textcolor{mycolor4}{6.6} &\textcolor{mycolor4}{\underline{98.7}}\\EBM (FT) & & &\textcolor{mycolor3}{82.3} & \textcolor{mycolor3}{73.5} & \textcolor{mycolor3}{90.6} & \textcolor{mycolor3}{39.2} & \textcolor{mycolor3}{90.7} & \textcolor{mycolor3}{41.4} & \textcolor{mycolor3}{44.4} & \textcolor{mycolor3}{47.3} & \textcolor{mycolor3}{54.9} & \textcolor{mycolor3}{34.3} & \textcolor{mycolor3}{24.7} & \textcolor{mycolor4}{\textbf{39.2}} & \textcolor{mycolor3}{37.7} & \textcolor{mycolor3}{51.5} & \textcolor{mycolor3}{49.3} & \textcolor{mycolor3}{53.4} &\textcolor{mycolor4}{\underline{3.6}} &\textcolor{mycolor4}{98.4}\\DPN & & &\textcolor{mycolor3}{77.6} & \textcolor{mycolor3}{68.6} & \textcolor{mycolor3}{83.6} & \textcolor{mycolor3}{39.7} & \textcolor{mycolor3}{86.8} & \textcolor{mycolor3}{43.3} & \textcolor{mycolor3}{44.9} & \textcolor{mycolor3}{47.0} & \textcolor{mycolor3}{53.7} & \textcolor{mycolor3}{36.4} & \textcolor{mycolor3}{26.1} & \textcolor{mycolor3}{43.5} & \textcolor{mycolor3}{37.6} & \textcolor{mycolor3}{47.5} & \textbf{44.5} & \textcolor{mycolor3}{52.0} &\textcolor{mycolor4}{4.3} &\textcolor{mycolor4}{98.5}\\ POEM & & & \textcolor{mycolor3}{83.7} & \textcolor{mycolor3}{76.6} & \textcolor{mycolor3}{88.0} & \textcolor{mycolor3}{44.1} & \textcolor{mycolor3}{90.1} & \textcolor{mycolor3}{44.3} & \textcolor{mycolor3}{46.4} & \textcolor{mycolor3}{52.8} & \textcolor{mycolor3}{60.8} & \textcolor{mycolor3}{39.5} & \textcolor{mycolor3}{28.9} & \textcolor{mycolor3}{43.4} & \textcolor{mycolor3}{42.5} & \textcolor{mycolor3}{56.1} & \textcolor{mycolor3}{54.5} & \textcolor{mycolor3}{56.8} &\textcolor{mycolor4}{\textbf{3.3}} &\textcolor{mycolor4}{\textbf{99.0}}\\WOODS & & &\textcolor{mycolor3}{77.5} & \textcolor{mycolor3}{68.5} & \textcolor{mycolor3}{77.5} & 36.4 & \textcolor{mycolor3}{80.6} & \underline{40.9} & \textcolor{mycolor3}{41.9} & \textcolor{mycolor3}{41.3} & 47.1 & \underline{31.4} & \textcolor{mycolor3}{23.3} & \textcolor{mycolor4}{\underline{39.7}} & \textcolor{mycolor3}{36.2} & \underline{42.9} & \textcolor{mycolor3}{46.2} & \textcolor{mycolor3}{48.8} &\textcolor{mycolor4}{7.1} &\textcolor{mycolor4}{98.5}\\SCONE& & & \textcolor{mycolor4}{\textbf{76.4}} & \textbf{67.3} & \textcolor{mycolor3}{\underline{75.3}} & \underline{36.2} & \textcolor{mycolor4}{\textbf{77.0}} & \underline{40.9} & \underline{41.6} & \underline{40.2} & \textcolor{mycolor4}{\textbf{45.5}} & \underline{31.4} & \textcolor{mycolor3}{\underline{23.1}} & 40.1 & \textcolor{mycolor3}{\underline{35.9}} & \textbf{42.4} & \textcolor{mycolor3}{45.4} & \underline{47.9} &\textcolor{mycolor4}{7.0} &\textcolor{mycolor4}{98.5}\\DUL (Ours) & & &\cellcolor{gray!20}\underline{77.2} & \cellcolor{gray!20}\underline{68.0} & \cellcolor{gray!20}\textcolor{mycolor3}{\textbf{75.1}} & \cellcolor{gray!20}\textbf{35.8} & \cellcolor{gray!20}\underline{78.5} & \cellcolor{gray!20}\textcolor{mycolor4}{\textbf{39.7}} & \cellcolor{gray!20}\textcolor{mycolor4}{\textbf{40.6}} & \cellcolor{gray!20}\textcolor{mycolor4}{\textbf{39.8}} & \cellcolor{gray!20}\textcolor{mycolor4}{\underline{46.1}} & \cellcolor{gray!20}\textbf{31.0} & \cellcolor{gray!20}\textbf{22.4} & \cellcolor{gray!20}\textcolor{mycolor4}{\textbf{39.2}} & \cellcolor{gray!20}\textcolor{mycolor4}{\textbf{34.8}} & \cellcolor{gray!20}43.0 & \cellcolor{gray!20}\underline{44.6} & \cellcolor{gray!20}\textbf{47.7} &\cellcolor{gray!20}\textcolor{mycolor4}{5.9} &\cellcolor{gray!20}\textcolor{mycolor4}{98.5}\\
\cline{1-1} \cline{3-21} Entropy & &\multirow{7}{*}{\rotatebox{90}{TIN-597}} & \textcolor{mycolor3}{83.3} & \textcolor{mycolor3}{75.4} & \textcolor{mycolor3}{79.8} & \textcolor{mycolor3}{38.0} & \textcolor{mycolor3}{82.7} & \textcolor{mycolor3}{\underline{42.2}} & \textcolor{mycolor3}{44.2} & \textcolor{mycolor3}{44.5} & \textcolor{mycolor3}{53.4} & \textcolor{mycolor3}{32.8} & \textcolor{mycolor3}{\underline{23.7}} & \textcolor{mycolor4}{\textbf{38.0}} & \textcolor{mycolor3}{37.9} & \textcolor{mycolor3}{44.6} & \textcolor{mycolor3}{62.9} & \textcolor{mycolor3}{52.2} &\textcolor{mycolor4}{11.6} &\textcolor{mycolor4}{97.9}\\EBM (FT) & & &\textcolor{mycolor3}{81.2} & \textcolor{mycolor3}{72.8} & \textcolor{mycolor3}{78.1} & \textcolor{mycolor3}{\underline{37.6}} & \textcolor{mycolor3}{80.0} & \textcolor{mycolor3}{42.9} & \textcolor{mycolor3}{43.6} & \textcolor{mycolor3}{43.7} & \textcolor{mycolor3}{51.1} & \textcolor{mycolor3}{33.5} & \textcolor{mycolor3}{23.9} & 40.2 & \textcolor{mycolor3}{38.1} & \textcolor{mycolor3}{45.1} & \textcolor{mycolor3}{73.3} & \textcolor{mycolor3}{52.3}&\textcolor{mycolor4}{19.4} &\textcolor{mycolor3}{87.5}\\DPN & & &\textcolor{mycolor3}{81.8} & \textcolor{mycolor3}{72.9} & \textcolor{mycolor3}{79.1} & \textcolor{mycolor3}{39.6} & \textcolor{mycolor3}{81.3} & \textcolor{mycolor3}{44.9} & \textcolor{mycolor3}{46.2} & \textcolor{mycolor3}{45.4} & \textcolor{mycolor3}{52.4} & \textcolor{mycolor3}{34.3} & \textcolor{mycolor3}{25.6} & \textcolor{mycolor3}{40.9} & \textcolor{mycolor3}{39.2} & \textcolor{mycolor3}{47.7} & \textcolor{mycolor3}{76.4} & \textcolor{mycolor3}{53.8}&\textcolor{mycolor4}{17.3} &\textcolor{mycolor4}{94.9}\\POEM & & &\textcolor{mycolor3}{\underline{78.9}} & \textcolor{mycolor3}{\underline{70.3}} & \underline{74.7} & \textcolor{mycolor3}{38.6} & \textcolor{mycolor4}{\textbf{78.0}} & \textcolor{mycolor3}{43.4} & \textcolor{mycolor3}{44.3} & \textcolor{mycolor3}{43.0} & \textcolor{mycolor3}{50.2} & \textcolor{mycolor3}{34.6} & \textcolor{mycolor3}{25.4} & \textcolor{mycolor3}{43.1} & \textcolor{mycolor3}{\underline{37.3}} & \textcolor{mycolor3}{45.4} & \textcolor{mycolor3}{81.6} & \textcolor{mycolor3}{52.6} &\textcolor{mycolor4}{34.3} &\textcolor{mycolor3}{86.8}\\WOODS & & & \textcolor{mycolor3}{81.1} & \textcolor{mycolor3}{72.4} & \textcolor{mycolor3}{76.5} & \textcolor{mycolor3}{38.6} & \textcolor{mycolor3}{79.0} & \textcolor{mycolor3}{43.2} & \textcolor{mycolor3}{44.6} & \textcolor{mycolor3}{\underline{42.4}} & \textcolor{mycolor3}{\underline{49.2}} & \textcolor{mycolor3}{33.0} & \textcolor{mycolor3}{24.3} & 40.2 & \textcolor{mycolor3}{39.0} & \textcolor{mycolor4}{\textbf{41.2}} & \textcolor{mycolor3}{\underline{47.8}} & \textcolor{mycolor3}{50.2} &\textcolor{mycolor4}{\underline{7.6}} &\textcolor{mycolor4}{\textbf{98.3}}\\SCONE & & & \textcolor{mycolor3}{80.9} & \textcolor{mycolor3}{72.3} & \textcolor{mycolor3}{77.2} & \textcolor{mycolor3}{37.9} & 78.7 & \textcolor{mycolor3}{42.4} & \textcolor{mycolor3}{\underline{43.4}} & \textcolor{mycolor3}{42.8} & \textcolor{mycolor3}{49.5} & \textcolor{mycolor3}{\underline{32.2}} & \textcolor{mycolor3}{24.0} & \textcolor{mycolor4}{\underline{39.3}} & \textcolor{mycolor3}{38.0} & \textcolor{mycolor4}{\underline{41.8}} & \textcolor{mycolor3}{48.3} & \textcolor{mycolor3}{\underline{50.0}} &\textcolor{mycolor4}{8.0} &\textcolor{mycolor4}{\underline{98.2}}\\DUL (Ours)& & &\cellcolor{gray!20}\textbf{77.0} & \cellcolor{gray!20}\textbf{67.7} & \cellcolor{gray!20}\textbf{74.2} & \cellcolor{gray!20}\textbf{36.2} & \cellcolor{gray!20}\underline{78.4} & \cellcolor{gray!20}\textbf{40.8} & \cellcolor{gray!20}\textbf{41.3} & \cellcolor{gray!20}\textbf{40.2} & \cellcolor{gray!20}\textbf{46.3} & \cellcolor{gray!20}\textbf{31.4} & \cellcolor{gray!20}\textbf{22.9} & \cellcolor{gray!20}\textcolor{mycolor3}{40.8} & \cellcolor{gray!20}\textbf{35.3} & \cellcolor{gray!20}42.7 & \cellcolor{gray!20}\textbf{45.1} & \cellcolor{gray!20}\textbf{48.0}&\cellcolor{gray!20}\textcolor{mycolor4}{\textbf{6.9}} &\cellcolor{gray!20}\textcolor{mycolor4}{\underline{98.2}}\\
\midrule \midrule

MSP &\multirow{9}{*}{\rotatebox{90}{ImageNet-200}} &\multirow{3}{*}{\rotatebox{90}{None}} & 52.2 & 70.7 & 71.5 & 56.0  &55.5  & 52.6 & 54.1 & 67.3 & 67.0 & 63.5 &56.5& 53.3&46.3 &50.3&51.9&57.9 &58.2 &82.3\\EBM  & & & 52.2 & 70.7 & 71.5 & 56.0  &55.5  & 52.6 & 54.1 & 67.3 & 67.0 & 63.5 &56.5& 53.3&46.3 &50.3&51.9&57.9&32.5 &89.3\\Maxlogits & & &52.2 & 70.7 & 71.5 & 56.0  &55.5  & 52.6 & 54.1 & 67.3 & 67.0 & 63.5 &56.5& 53.3&46.3 &50.3&51.9&57.9 &51.9 &88.2 \\
\cline{1-1} \cline{3-21} Entropy& &\multirow{6}{*}{\rotatebox{90}{ImageNet-800}} & \textcolor{mycolor4}{\underline{51.3}}&\textcolor{mycolor3}{71.2}&71.7&\textcolor{mycolor4}{\textbf{54.4}}&\textcolor{mycolor4}{\underline{54.6}}&\textcolor{mycolor4}{\underline{51.8}}&\textcolor{mycolor4}{53.2}&\textcolor{mycolor4}{66.9}&\textcolor{mycolor4}{66.2}&\textcolor{mycolor4}{62.7}&\textcolor{mycolor4}{55.8}&\textcolor{mycolor4}{\textbf{51.4}}&\textcolor{mycolor4}{\textbf{45.1}}&\textcolor{mycolor4}{\underline{49.7}}&\textcolor{mycolor4}{\underline{51.1}}&\textcolor{mycolor4}{57.1} &\textcolor{mycolor4}{53.6} &\textcolor{mycolor4}{\underline{89.1}}\\EBM (FT) & & &\textcolor{mycolor3}{52.9}&\textcolor{mycolor3}{72.2}&\textcolor{mycolor3}{72.8}&56.0&\textcolor{mycolor3}{56.2}&\textcolor{mycolor3}{53.5}&54.1&\textcolor{mycolor3}{67.8}&67.4&\textcolor{mycolor3}{64.0}&\textcolor{mycolor3}{57.0}&\textcolor{mycolor4}{\underline{52.2}}&46.5&\textcolor{mycolor3}{51.4}&\textcolor{mycolor3}{52.9}&\textcolor{mycolor3}{58.5}&\textcolor{mycolor3}{59.7} &\textcolor{mycolor4}{87.5}\\DPN & & & 51.9 & \textcolor{mycolor4}{\underline{69.2}} & \textcolor{mycolor4}{\textbf{69.6}} & \textcolor{mycolor3}{56.5} & 55.6 & 52.4 & \textcolor{mycolor4}{53.1} & \textcolor{mycolor4}{\textbf{65.5}} & \textcolor{mycolor4}{\textbf{65.0}} & \textcolor{mycolor4}{\textbf{62.2}} & \textcolor{mycolor4}{\textbf{55.5}} & \textcolor{mycolor3}{54.5} & 46.0 & \textcolor{mycolor4}{\underline{49.7}} & \textcolor{mycolor4}{51.4} & \textcolor{mycolor4}{57.2} &\textcolor{mycolor3}{63.8} &87.2\\WOODS & & &\textcolor{mycolor4}{51.4} & \textcolor{mycolor4}{69.4} & \textcolor{mycolor4}{\underline{70.0}} & \textcolor{mycolor4}{55.2} & \textcolor{mycolor4}{54.8} & \textcolor{mycolor4}{51.9} & \textcolor{mycolor4}{\underline{52.9}} & \textcolor{mycolor4}{\underline{66.2}} & \textcolor{mycolor4}{66.0} & \textcolor{mycolor4}{\underline{62.5}} & \textcolor{mycolor4}{\underline{55.6}} & \textcolor{mycolor4}{52.6} & \textcolor{mycolor4}{45.6} & \textcolor{mycolor4}{\underline{49.7}} & \textcolor{mycolor4}{51.3} & \textcolor{mycolor4}{\underline{57.0}} &\textcolor{mycolor4}{\underline{51.7}} &\textcolor{mycolor4}{88.3}\\SCONE& & &\textcolor{mycolor4}{51.6} & \textcolor{mycolor4}{69.4} & \textcolor{mycolor4}{\underline{70.0}} & \textcolor{mycolor4}{55.4} & \textcolor{mycolor4}{55.0} & \textcolor{mycolor4}{52.1} & \textcolor{mycolor4}{53.1} & \textcolor{mycolor4}{66.3} & \textcolor{mycolor4}{66.0} & \textcolor{mycolor4}{62.6} & \textcolor{mycolor4}{55.7} & 53.0 & \textcolor{mycolor4}{45.8} & 49.9 & \textcolor{mycolor4}{51.4} & \textcolor{mycolor4}{57.1}&\textcolor{mycolor4}{52.5} &\textcolor{mycolor4}{88.2}\\DUL (Ours) & & &\cellcolor{gray!20}\textcolor{mycolor4}{\textbf{51.0}} & \cellcolor{gray!20}\textcolor{mycolor4}{\textbf{69.1}} & \cellcolor{gray!20}\textcolor{mycolor4}{70.5} & \cellcolor{gray!20}\textcolor{mycolor4}{\underline{55.1}} & \cellcolor{gray!20}\textcolor{mycolor4}{\textbf{54.5}} & \cellcolor{gray!20}\textcolor{mycolor4}{\textbf{51.5}} & \cellcolor{gray!20}\textcolor{mycolor4}{\textbf{52.4}} & \cellcolor{gray!20}\textcolor{mycolor4}{\underline{66.2}} & \cellcolor{gray!20}\textcolor{mycolor4}{\underline{65.9}} & \cellcolor{gray!20}\textcolor{mycolor4}{62.6} & \cellcolor{gray!20}\textcolor{mycolor4}{55.7} & \cellcolor{gray!20}53.0 & \cellcolor{gray!20}\textcolor{mycolor4}{\underline{45.4}} & \cellcolor{gray!20}\textcolor{mycolor4}{\textbf{49.4}} & \cellcolor{gray!20}\textcolor{mycolor4}{\textbf{50.9}} & \cellcolor{gray!20}\textcolor{mycolor4}{\textbf{56.9}} &\cellcolor{gray!20}\textcolor{mycolor4}{\textbf{49.1}} &\cellcolor{gray!20}\textcolor{mycolor4}{\textbf{89.3}}\\
\bottomrule
\end{tabular}
}
\vskip -0.2in
\end{table}

\subsection{Results of different types of corruption.}
We conduct additional experiments on CIFAR10-C, CIFAR100-C and ImageNet-C with 15 different types of corruption. The results validate that the proposed method can improve the overall performance under different types of corruption.

\subsection{OOD detection results on individual datasets.}
We provide OOD detection results of DUL on each individual OOD detection test dataset in Tab.~\ref{tab:ood detection performance on each OOD dataset} and Tab.~\ref{tab:ood detection performance on each OOD dataset imagenet}, based on the checkpoint with random seed 1.

\subsection{Empirical evidence}
Here we provide empirical supports to our intuition about energy-based OOD detection regularization. We calculate the entropy of predicted distribution before and after finetuning with Energy regularization~\cite{liu2020energy}. The results show can support our claim in Section~\ref{sec:theory}.

\begin{table}[!htbp]
\vskip 0.15in
\begin{center}
\caption{Predictive entropy of predicted distribution on covariate-shifted OOD dataset before and after finetuning with energy-based OOD detection regularization~\cite{liu2020energy}.}
\label{tab:entropy}
\center
\resizebox{0.5\textwidth}{!}{
\setlength{\tabcolsep}{3.9mm}
\begin{tabular}{c|c|c|c}
\toprule                                              
$\mathcal{P}^{\rm ID}$&$\mathcal{P}^{\rm SEM}_{\rm train}$& Before  &After   \\ \midrule
CIFAR10 & ImageNet-RC & 0.11 & 1.05 \\
CIFAR10 & TIN-597 & 0.11 & 0.14 \\
CIFAR100 & ImageNet-RC & 0.92 & 3.39 \\
CIFAR100 & TIN-597 & 0.92 & 1.15
\\
       \bottomrule
\end{tabular}}
\end{center}
\end{table}

\section{Discussions}
\subsection{Math derivation}
\label{appendix:derivation}

\textbf{Differential entropy.} The proposed DUL calculate differential entropy as OOD detection measurement. Here we detail how to calculate the differential entropy of a Dirichlet distribution. The following derivation of differential entropy is taken from~\cite{malinin2018predictive}. The differential entropy of a Dirichlet parameterized by $\mathbf{\alpha}$ is calculated by
\begin{equation}
    \begin{split}
        h[p(\mu|x)]&=-\int_S p(\mu|x)\ln(p(\mu|x))d\mu
    \\
    &=\sum^K_k\ln \Gamma(\alpha_k)-\ln \Gamma(\alpha_0)-\sum^K_k(\alpha_k-1)\cdot(\psi(\alpha_k)-\psi(\alpha_0))
    \end{split}
\end{equation}
where $\alpha_0$ is the strength of Dirichlet, i.e., $\alpha_0=\sum_K \alpha_k$. $\alpha_k$ denotes the $k$-th element in $\mathbf{\alpha}$. $\Gamma$ is the Gamma function and $\psi$ is the digamma function. Here we provide a PyTorch implementation on how to calculate distribution uncertainty measured by differential entropy.

\begin{python}
def diff_entropy(alphas):
    alpha0 = torch.sum(alphas, dim=1)
    return torch.sum(
            torch.lgamma(alphas)-(alphas-1)*(torch.digamma(alphas)-torch.digamma(alpha0).unsqueeze(1)),
            dim=1) - torch.lgamma(alpha0)
            
logits = model(x)
alpha = torch.Relu(logits)+1
diff_entropy = diff_entropy(alpha)
\end{python}

We refer interested readers to~\cite{malinin2018predictive} and Gal's PhD Thesis~\cite{gal2016uncertainty} for more detailed math derivations.

\subsection{Discussion about Disparity Discrepancy}
\label{appendix:discussion}
In section~\ref{sec:theory}, we claim that a limited disparity discrepancy between test-time semantic OOD and covariant-shifted OOD is practical. Here we provide more evidence and discussion to support such a claim.

\textbf{The key challenge of OOD detection lies in identifying ID-like semantic OOD.} As mentioned in recent works~\cite{bai2023id}, effectively distinguishing between the most challenging OOD samples that are much like in-distribution (ID) data is the core challenge of OOD detection. Recent works regularize models on ID-like OOD to enhance the OOD detection performance. Since the ID-like semantic OOD samples are more difficult to be detected and more informative. For example, NTOM~\cite{chen2021atom} and POEM~\cite{ming2022poem} utilizes greedy and Thompson sampling strategies to find semantic OOD samples which are more closely to ID. ~\cite{bai2023id} proposes to explicitly discover outliers near ID by prompt learning.

\textbf{Semantic OOD and covariate OOD can be very similar.} As shown in Fig.~\ref{fig:examples}. There exists many similar samples from semantic OOD and covariate OOD in large-scale commonly used benchmarks. We borrow some examples from recent works~\cite{bitterwolf2023ninco} to show case. 

\begin{figure}[htbp]
\centering
\includegraphics[width=0.99\textwidth]{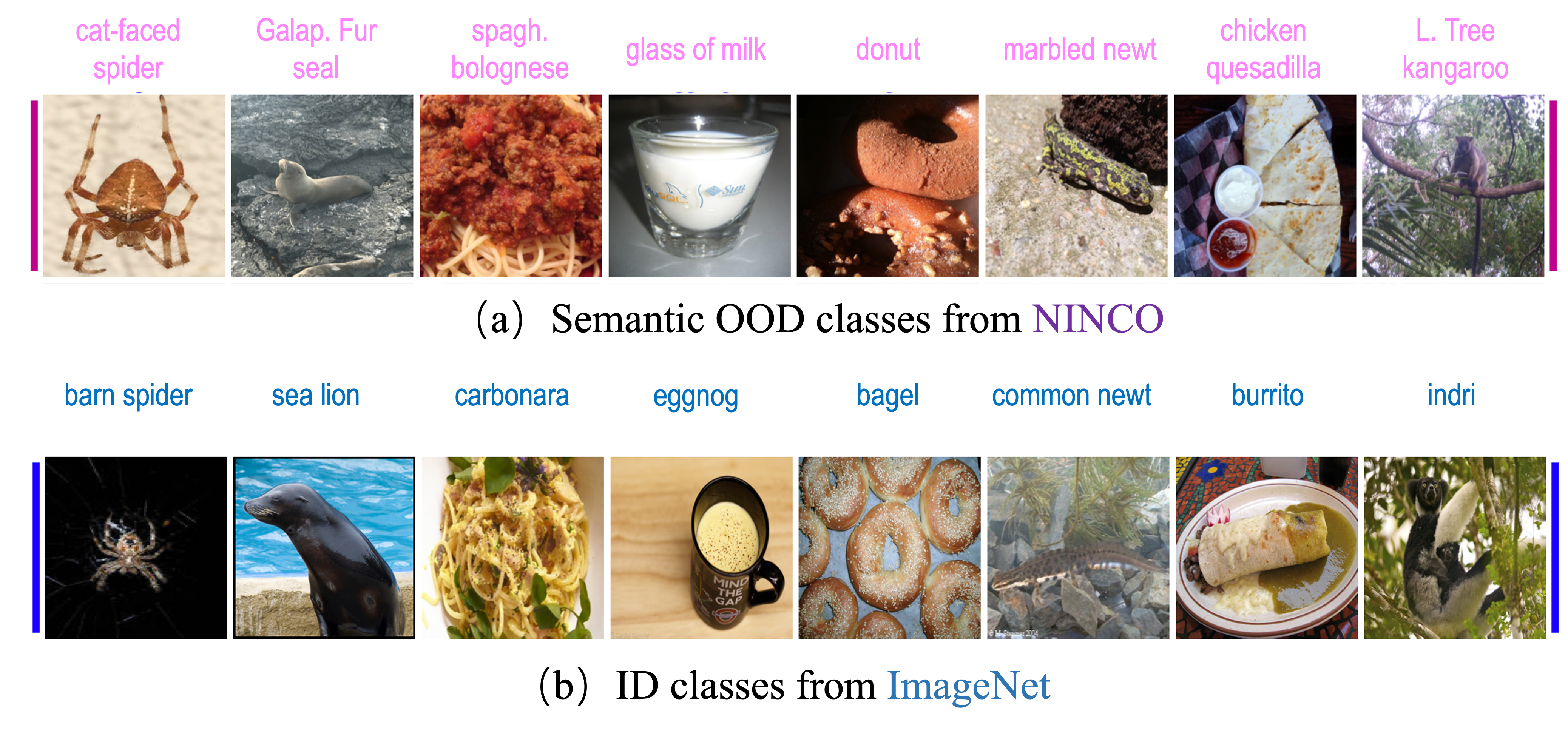}
\vspace{-3mm}
\caption{Semantic OOD samples can be very similar to ID.}
\vspace{-3mm}
\label{fig:examples}
\end{figure}

\subsection{Social Impact}
AI safety and trustworthiness are closely related to our work. This paper presents work to harmonize the conflicts between out-of-distribution detection methods and model generalization. The proposed method puts effort to enhance machine learning models for their safely deployment on out-of-distribution data, avoiding both undesirable behavior and degraded performance in challenging high-stake tasks. However, due to the bias from data used by current OOD detection benchmark, e.g., large-scale ImageNet, the ones using the proposed method need to carefully consider the selection of auxiliary outliers for safety-critical applications.

\newpage
\section*{NeurIPS Paper Checklist}

\begin{enumerate}

\item {\bf Claims}
    \item[] Question: Do the main claims made in the abstract and introduction accurately reflect the paper's contributions and scope?
    \item[] Answer: \answerYes{} 
    \item[] Justification: The abstract and introduction accurately reflect the paper's contributions and scope.
    \item[] Guidelines:
    \begin{itemize}
        \item The answer NA means that the abstract and introduction do not include the claims made in the paper.
        \item The abstract and/or introduction should clearly state the claims made, including the contributions made in the paper and important assumptions and limitations. A No or NA answer to this question will not be perceived well by the reviewers. 
        \item The claims made should match theoretical and experimental results, and reflect how much the results can be expected to generalize to other settings. 
        \item It is fine to include aspirational goals as motivation as long as it is clear that these goals are not attained by the paper. 
    \end{itemize}

\item {\bf Limitations}
    \item[] Question: Does the paper discuss the limitations of the work performed by the authors?
    \item[] Answer: \answerYes{} 
    \item[] Justification: In conclusion and Appendix.
    \item[] Guidelines:
    \begin{itemize}
        \item The answer NA means that the paper has no limitation while the answer No means that the paper has limitations, but those are not discussed in the paper. 
        \item The authors are encouraged to create a separate "Limitations" section in their paper.
        \item The paper should point out any strong assumptions and how robust the results are to violations of these assumptions (e.g., independence assumptions, noiseless settings, model well-specification, asymptotic approximations only holding locally). The authors should reflect on how these assumptions might be violated in practice and what the implications would be.
        \item The authors should reflect on the scope of the claims made, e.g., if the approach was only tested on a few datasets or with a few runs. In general, empirical results often depend on implicit assumptions, which should be articulated.
        \item The authors should reflect on the factors that influence the performance of the approach. For example, a facial recognition algorithm may perform poorly when image resolution is low or images are taken in low lighting. Or a speech-to-text system might not be used reliably to provide closed captions for online lectures because it fails to handle technical jargon.
        \item The authors should discuss the computational efficiency of the proposed algorithms and how they scale with dataset size.
        \item If applicable, the authors should discuss possible limitations of their approach to address problems of privacy and fairness.
        \item While the authors might fear that complete honesty about limitations might be used by reviewers as grounds for rejection, a worse outcome might be that reviewers discover limitations that aren't acknowledged in the paper. The authors should use their best judgment and recognize that individual actions in favor of transparency play an important role in developing norms that preserve the integrity of the community. Reviewers will be specifically instructed to not penalize honesty concerning limitations.
    \end{itemize}

\item {\bf Theory Assumptions and Proofs}
    \item[] Question: For each theoretical result, does the paper provide the full set of assumptions and a complete (and correct) proof?
    \item[] Answer: \answerYes{} 
    \item[] Justification: The paper provide the full set of assumptions and a complete (and correct) proof in Appendix.
    \item[] Guidelines:
    \begin{itemize}
        \item The answer NA means that the paper does not include theoretical results. 
        \item All the theorems, formulas, and proofs in the paper should be numbered and cross-referenced.
        \item All assumptions should be clearly stated or referenced in the statement of any theorems.
        \item The proofs can either appear in the main paper or the supplemental material, but if they appear in the supplemental material, the authors are encouraged to provide a short proof sketch to provide intuition. 
        \item Inversely, any informal proof provided in the core of the paper should be complemented by formal proofs provided in appendix or supplemental material.
        \item Theorems and Lemmas that the proof relies upon should be properly referenced. 
    \end{itemize}

    \item {\bf Experimental Result Reproducibility}
    \item[] Question: Does the paper fully disclose all the information needed to reproduce the main experimental results of the paper to the extent that it affects the main claims and/or conclusions of the paper (regardless of whether the code and data are provided or not)?
    \item[] Answer: \answerYes{} 
    \item[] Justification: Section 6 and Appendix.
    \item[] Guidelines:
    \begin{itemize}
        \item The answer NA means that the paper does not include experiments.
        \item If the paper includes experiments, a No answer to this question will not be perceived well by the reviewers: Making the paper reproducible is important, regardless of whether the code and data are provided or not.
        \item If the contribution is a dataset and/or model, the authors should describe the steps taken to make their results reproducible or verifiable. 
        \item Depending on the contribution, reproducibility can be accomplished in various ways. For example, if the contribution is a novel architecture, describing the architecture fully might suffice, or if the contribution is a specific model and empirical evaluation, it may be necessary to either make it possible for others to replicate the model with the same dataset, or provide access to the model. In general. releasing code and data is often one good way to accomplish this, but reproducibility can also be provided via detailed instructions for how to replicate the results, access to a hosted model (e.g., in the case of a large language model), releasing of a model checkpoint, or other means that are appropriate to the research performed.
        \item While NeurIPS does not require releasing code, the conference does require all submissions to provide some reasonable avenue for reproducibility, which may depend on the nature of the contribution. For example
        \begin{enumerate}
            \item If the contribution is primarily a new algorithm, the paper should make it clear how to reproduce that algorithm.
            \item If the contribution is primarily a new model architecture, the paper should describe the architecture clearly and fully.
            \item If the contribution is a new model (e.g., a large language model), then there should either be a way to access this model for reproducing the results or a way to reproduce the model (e.g., with an open-source dataset or instructions for how to construct the dataset).
            \item We recognize that reproducibility may be tricky in some cases, in which case authors are welcome to describe the particular way they provide for reproducibility. In the case of closed-source models, it may be that access to the model is limited in some way (e.g., to registered users), but it should be possible for other researchers to have some path to reproducing or verifying the results.
        \end{enumerate}
    \end{itemize}

\item {\bf Open access to data and code}
    \item[] Question: Does the paper provide open access to the data and code, with sufficient instructions to faithfully reproduce the main experimental results, as described in supplemental material?
    \item[] Answer: \answerYes{} 
    \item[] Justification: See Appendix and supplemental material.
    \item[] Guidelines:
    \begin{itemize}
        \item The answer NA means that paper does not include experiments requiring code.
        \item Please see the NeurIPS code and data submission guidelines (\url{https://nips.cc/public/guides/CodeSubmissionPolicy}) for more details.
        \item While we encourage the release of code and data, we understand that this might not be possible, so “No” is an acceptable answer. Papers cannot be rejected simply for not including code, unless this is central to the contribution (e.g., for a new open-source benchmark).
        \item The instructions should contain the exact command and environment needed to run to reproduce the results. See the NeurIPS code and data submission guidelines (\url{https://nips.cc/public/guides/CodeSubmissionPolicy}) for more details.
        \item The authors should provide instructions on data access and preparation, including how to access the raw data, preprocessed data, intermediate data, and generated data, etc.
        \item The authors should provide scripts to reproduce all experimental results for the new proposed method and baselines. If only a subset of experiments are reproducible, they should state which ones are omitted from the script and why.
        \item At submission time, to preserve anonymity, the authors should release anonymized versions (if applicable).
        \item Providing as much information as possible in supplemental material (appended to the paper) is recommended, but including URLs to data and code is permitted.
    \end{itemize}

\item {\bf Experimental Setting/Details}
    \item[] Question: Does the paper specify all the training and test details (e.g., data splits, hyperparameters, how they were chosen, type of optimizer, etc.) necessary to understand the results?
    \item[] Answer: \answerYes{} 
    \item[] Justification: The paper provides above details in Section 6 and Appendix.
    \item[] Guidelines:
    \begin{itemize}
        \item The answer NA means that the paper does not include experiments.
        \item The experimental setting should be presented in the core of the paper to a level of detail that is necessary to appreciate the results and make sense of them.
        \item The full details can be provided either with the code, in appendix, or as supplemental material.
    \end{itemize}

\item {\bf Experiment Statistical Significance}
    \item[] Question: Does the paper report error bars suitably and correctly defined or other appropriate information about the statistical significance of the experiments?
    \item[] Answer: \answerYes{} 
    \item[] Justification: Appendix C.
    \item[] Guidelines:
    \begin{itemize}
        \item The answer NA means that the paper does not include experiments.
        \item The authors should answer "Yes" if the results are accompanied by error bars, confidence intervals, or statistical significance tests, at least for the experiments that support the main claims of the paper.
        \item The factors of variability that the error bars are capturing should be clearly stated (for example, train/test split, initialization, random drawing of some parameter, or overall run with given experimental conditions).
        \item The method for calculating the error bars should be explained (closed form formula, call to a library function, bootstrap, etc.)
        \item The assumptions made should be given (e.g., Normally distributed errors).
        \item It should be clear whether the error bar is the standard deviation or the standard error of the mean.
        \item It is OK to report 1-sigma error bars, but one should state it. The authors should preferably report a 2-sigma error bar than state that they have a 96\% CI, if the hypothesis of Normality of errors is not verified.
        \item For asymmetric distributions, the authors should be careful not to show in tables or figures symmetric error bars that would yield results that are out of range (e.g. negative error rates).
        \item If error bars are reported in tables or plots, The authors should explain in the text how they were calculated and reference the corresponding figures or tables in the text.
    \end{itemize}

\item {\bf Experiments Compute Resources}
    \item[] Question: For each experiment, does the paper provide sufficient information on the computer resources (type of compute workers, memory, time of execution) needed to reproduce the experiments?
    \item[] Answer: \answerYes{} 
    \item[] Justification: In Appendix C.
    \item[] Guidelines:
    \begin{itemize}
        \item The answer NA means that the paper does not include experiments.
        \item The paper should indicate the type of compute workers CPU or GPU, internal cluster, or cloud provider, including relevant memory and storage.
        \item The paper should provide the amount of compute required for each of the individual experimental runs as well as estimate the total compute. 
        \item The paper should disclose whether the full research project required more compute than the experiments reported in the paper (e.g., preliminary or failed experiments that didn't make it into the paper). 
    \end{itemize}
    
\item {\bf Code Of Ethics}
    \item[] Question: Does the research conducted in the paper conform, in every respect, with the NeurIPS Code of Ethics \url{https://neurips.cc/public/EthicsGuidelines}?
    \item[] Answer: \answerYes{} 
    \item[] Justification: The authors have carefully reviewed the NeurIPS Code of Ethics.
    \item[] Guidelines:
    \begin{itemize}
        \item The answer NA means that the authors have not reviewed the NeurIPS Code of Ethics.
        \item If the authors answer No, they should explain the special circumstances that require a deviation from the Code of Ethics.
        \item The authors should make sure to preserve anonymity (e.g., if there is a special consideration due to laws or regulations in their jurisdiction).
    \end{itemize}

\item {\bf Broader Impacts}
    \item[] Question: Does the paper discuss both potential positive societal impacts and negative societal impacts of the work performed?
    \item[] Answer: \answerYes{} 
    \item[] Justification: In Appendix.
    \item[] Guidelines:
    \begin{itemize}
        \item The answer NA means that there is no societal impact of the work performed.
        \item If the authors answer NA or No, they should explain why their work has no societal impact or why the paper does not address societal impact.
        \item Examples of negative societal impacts include potential malicious or unintended uses (e.g., disinformation, generating fake profiles, surveillance), fairness considerations (e.g., deployment of technologies that could make decisions that unfairly impact specific groups), privacy considerations, and security considerations.
        \item The conference expects that many papers will be foundational research and not tied to particular applications, let alone deployments. However, if there is a direct path to any negative applications, the authors should point it out. For example, it is legitimate to point out that an improvement in the quality of generative models could be used to generate deepfakes for disinformation. On the other hand, it is not needed to point out that a generic algorithm for optimizing neural networks could enable people to train models that generate Deepfakes faster.
        \item The authors should consider possible harms that could arise when the technology is being used as intended and functioning correctly, harms that could arise when the technology is being used as intended but gives incorrect results, and harms following from (intentional or unintentional) misuse of the technology.
        \item If there are negative societal impacts, the authors could also discuss possible mitigation strategies (e.g., gated release of models, providing defenses in addition to attacks, mechanisms for monitoring misuse, mechanisms to monitor how a system learns from feedback over time, improving the efficiency and accessibility of ML).
    \end{itemize}
    
\item {\bf Safeguards}
    \item[] Question: Does the paper describe safeguards that have been put in place for responsible release of data or models that have a high risk for misuse (e.g., pretrained language models, image generators, or scraped datasets)?
    \item[] Answer: \answerNA{} 
    \item[] Justification: The paper poses no such risks.
    \item[] Guidelines:
    \begin{itemize}
        \item The answer NA means that the paper poses no such risks.
        \item Released models that have a high risk for misuse or dual-use should be released with necessary safeguards to allow for controlled use of the model, for example by requiring that users adhere to usage guidelines or restrictions to access the model or implementing safety filters. 
        \item Datasets that have been scraped from the Internet could pose safety risks. The authors should describe how they avoided releasing unsafe images.
        \item We recognize that providing effective safeguards is challenging, and many papers do not require this, but we encourage authors to take this into account and make a best faith effort.
    \end{itemize}

\item {\bf Licenses for existing assets}
    \item[] Question: Are the creators or original owners of assets (e.g., code, data, models), used in the paper, properly credited and are the license and terms of use explicitly mentioned and properly respected?
    \item[] Answer: \answerYes{} 
    \item[] Justification: The license and terms of use explicitly mentioned and properly respected.
    \item[] Guidelines:
    \begin{itemize}
        \item The answer NA means that the paper does not use existing assets.
        \item The authors should cite the original paper that produced the code package or dataset.
        \item The authors should state which version of the asset is used and, if possible, include a URL.
        \item The name of the license (e.g., CC-BY 4.0) should be included for each asset.
        \item For scraped data from a particular source (e.g., website), the copyright and terms of service of that source should be provided.
        \item If assets are released, the license, copyright information, and terms of use in the package should be provided. For popular datasets, \url{paperswithcode.com/datasets} has curated licenses for some datasets. Their licensing guide can help determine the license of a dataset.
        \item For existing datasets that are re-packaged, both the original license and the license of the derived asset (if it has changed) should be provided.
        \item If this information is not available online, the authors are encouraged to reach out to the asset's creators.
    \end{itemize}

\item {\bf New Assets}
    \item[] Question: Are new assets introduced in the paper well documented and is the documentation provided alongside the assets?
    \item[] Answer: \answerNA{} 
    \item[] Justification: The paper does not release new assets.
    \item[] Guidelines:
    \begin{itemize}
        \item The answer NA means that the paper does not release new assets.
        \item Researchers should communicate the details of the dataset/code/model as part of their submissions via structured templates. This includes details about training, license, limitations, etc. 
        \item The paper should discuss whether and how consent was obtained from people whose asset is used.
        \item At submission time, remember to anonymize your assets (if applicable). You can either create an anonymized URL or include an anonymized zip file.
    \end{itemize}

\item {\bf Crowdsourcing and Research with Human Subjects}
    \item[] Question: For crowdsourcing experiments and research with human subjects, does the paper include the full text of instructions given to participants and screenshots, if applicable, as well as details about compensation (if any)? 
    \item[] Answer: \answerNA{} 
    \item[] Justification: The paper does not involve crowdsourcing nor research with human subjects.
    \item[] Guidelines:
    \begin{itemize}
        \item The answer NA means that the paper does not involve crowdsourcing nor research with human subjects.
        \item Including this information in the supplemental material is fine, but if the main contribution of the paper involves human subjects, then as much detail as possible should be included in the main paper. 
        \item According to the NeurIPS Code of Ethics, workers involved in data collection, curation, or other labor should be paid at least the minimum wage in the country of the data collector. 
    \end{itemize}

\item {\bf Institutional Review Board (IRB) Approvals or Equivalent for Research with Human Subjects}
    \item[] Question: Does the paper describe potential risks incurred by study participants, whether such risks were disclosed to the subjects, and whether Institutional Review Board (IRB) approvals (or an equivalent approval/review based on the requirements of your country or institution) were obtained?
    \item[] Answer: \answerNA{} 
    \item[] Justification: The paper does not involve crowdsourcing nor research with human subjects.
    \item[] Guidelines:
    \begin{itemize}
        \item The answer NA means that the paper does not involve crowdsourcing nor research with human subjects.
        \item Depending on the country in which research is conducted, IRB approval (or equivalent) may be required for any human subjects research. If you obtained IRB approval, you should clearly state this in the paper. 
        \item We recognize that the procedures for this may vary significantly between institutions and locations, and we expect authors to adhere to the NeurIPS Code of Ethics and the guidelines for their institution. 
        \item For initial submissions, do not include any information that would break anonymity (if applicable), such as the institution conducting the review.
    \end{itemize}
\end{enumerate}

\end{document}